\documentclass[11pt]{article}

\usepackage{fullpage}
\usepackage{amsmath}
\usepackage{amsfonts}
\usepackage{amsthm}
\usepackage{smile}
\usepackage{subfigure}

\usepackage{tablefootnote}

\usepackage[colorlinks,
            linkcolor=red,
            anchorcolor=blue,
            citecolor=blue
            ]{hyperref}
\usepackage{enumitem}
\usepackage{mathtools}

\usepackage{colortbl}
\definecolor{LightCyan}{rgb}{0.8, 0.9, 1}

\usepackage{amssymb}
\usepackage{pifont}
\newcommand{\cmark}{\ding{51}}%
\newcommand{\xmark}{\ding{55}}%

\ifdefined\final
\usepackage[disable]{todonotes}
\else
\usepackage[textsize=tiny]{todonotes}
\fi
\allowdisplaybreaks

\title{\huge Near-Optimal Regret for KL-Regularized Multi-Armed Bandits}

\author{
    Kaixuan Ji\thanks{Equal contribution} \thanks{Department of Computer Science, University of California, Los Angeles, CA 90095, USA; e-mail: {\tt kaixuanji@cs.ucla.edu}} 
    ~~
    Qingyue Zhao\footnotemark[1] \thanks{Department of Computer Science, University of California, Los Angeles, CA 90095, USA; e-mail: {\tt zhaoqy24@cs.ucla.edu}}
    ~~
    Heyang Zhao\footnotemark[1] \thanks{Department of Computer Science, University of California, Los Angeles, CA 90095, USA; e-mail: {\tt hyzhao@cs.ucla.edu}} 
    ~~
    Qiwei Di\thanks{Department of Computer Science, University of California, Los Angeles, CA 90095, USA; e-mail: {\tt qiwei2000@cs.ucla.edu}}
    ~~
    Quanquan Gu\thanks{Department of Computer Science, University of California, Los Angeles, CA 90095, USA; e-mail: {\tt qgu@cs.ucla.edu}}
}
\date{}

\newcommand{\seq}[1]{\overline{[#1]}}
\newcommand{\piref}{\pi^{\mathsf{ref}}}
\newcommand{\kl}[2]{\ensuremath{{\mathsf{KL}}\left(#1\|#2\right)}}
\newcommand{\tv}[2]{\ensuremath{{\mathsf{TV}}\left(#1\|#2\right)}}

\newcommand{\fls}{{\bar{r}}} % reward estimated via least square
\newcommand{\fop}{{\hat{r}}} % reward estimated via least square + pessimism

\newcommand{\fgt}{{r^*}} % ground truth reward

\def \algcb {\text{KL-UCB}}

\newcommand{\pihat}{\hat{\pi}}
\newcommand{\subopt}{\mathrm{SubOpt}}

\newcommand{\unif}{\mathsf{Unif}}

\newcommand{\KL}{\mathsf{KL}}
\newcommand{\regret}{{\mathrm{Regret}}}

\begin{document}

\maketitle

\begin{abstract}

Recent studies have shown that reinforcement learning with KL-regularized objectives can enjoy \emph{faster} rates of convergence or \emph{logarithmic} regret, in contrast to the classical $\sqrt{T}$-type regret in the unregularized setting. However, the statistical efficiency of online learning with respect to KL-regularized objectives remains far from completely characterized, even when specialized to multi-armed bandits (MABs). We address this problem for MABs via a sharp analysis of \algcb{}~\citep{zhao2025logarithmic} using a novel peeling argument, which yields a $\tilde{O}(\eta K\log^2T)$ upper bound: the \emph{first} high-probability regret bound with linear dependence on $K$. Here, $T$ is the time horizon, $K$ is the number of arms, $\eta^{-1}$ is the regularization intensity, and $\tilde{O}$ hides all logarithmic factors except those involving $\log T$. The near-tightness of our analysis is certified by the \emph{first} non-constant lower bound $\Omega(\eta K \log T)$, which follows from subtle hard-instance constructions and a tailored decomposition of the Bayes prior. Moreover, in the low-regularization regime (i.e., \emph{large} $\eta$), we show that the KL-regularized regret for MABs is $\eta$-independent and scales as $\tilde{\Theta}(\sqrt{KT})$. Overall, our results provide a thorough understanding of KL-regularized MABs across all regimes of $\eta$ and yield nearly optimal bounds in terms of $K$, $\eta$, and $T$.
\end{abstract}

\section{Introduction}

Recently, many variants of the \emph{KL-regularized objective} $J(\pi) \coloneqq \EE_\pi r - \eta^{-1}\kl{\pi}{\piref}$ have become increasingly important in practice for bandits~\citep{rafailov2023direct,guo2025deepseek} and reinforcement learning (RL)~\citep{schulman2017proximal,ouyang2022training}, where $r$ is the mean reward function, $\piref$ is the reference policy, $\eta^{-1}$ is the regularization intensity, and $\mathsf{KL}$ is the reverse Kullback-Leibler divergence. For example, they have been instantiated as entropy regularization to strengthen the policy robustness~\citep{williams1992simple,ziebart2008maximum,levine2013guided,haarnoja2018soft}, and are widely employed to fine-tune large language models~\citep{ouyang2022training,rafailov2023direct,richemond2024offline,liu2024enhancing,guo2025deepseek}.

Given the prevalence of KL-regularized objectives, a growing body of work has been devoted to understanding the KL-regularized \emph{statistical} efficiency of decision making, where suboptimality is defined with respect to the regularized objective. \citet{xiong2024iterative,xie2024exploratory} demonstrate the rate of $\tilde{O}(\epsilon^{-2})$ for learning an $\epsilon$-optimal policy in contextual bandits and Markov decision processes. Starting from the pioneering \citet{tiapkin2023fast,zhao2025sharp}, previous works on this line (ignoring other factors)
% such as the number of actions)
either achieve an $\tilde{\Theta}(\epsilon^{-1})$ sample complexity~\citep{zhao2025sharp,zhao2025sharpanalysisofflinepolicy,foster2025good} or $\mathrm{polylog}(T)$ regret~\citep{zhao2025logarithmic,wu2025greedy} in various interaction protocols. In particular, \citet{tiapkin2023fast} obtained a fast-rate sample complexity in the pure exploration setting for both tabular and linear MDPs. 
\citet{zhao2025sharp} works in the hybrid offline setting under a strict uniform data coverage assumption.
% which was relaxed to various notions of single-policy concentrability in the pure offline setting~\citep{zhao2025sharpanalysisofflinepolicy} and in a hybrid setting with linear function approximation~\citep{foster2025good}. 
For online learning, \citet{zhao2025sharp} gives the first $\Omega(\eta \log(N_{\cR}))$ regret lower bound \footnote{See Remark~\ref{rmk:lowerbound-fast-comparison} for a detailed adaptation and discussion.} that does not scale
% neither 
with the time horizon $T$,
% nor the number of actions $K$ even if specialized to MABs, 
and \citet{zhao2025logarithmic} achieves the first logarithmic regret upper bound $\tilde O(\eta d_{\cR}  \log(N_{\cR}) \log T)$ under general function approximation, where $d_{\cR}$ is the eluder dimension and $\log(N_{\cR})$ is the metric entropy of the function class, following which \citet{wu2025greedy} design an algorithm free of bonus computation, which enjoys an $\tilde O(\exp(\eta) d_{\cR}  \log(N_{\cR}) \log T)$ regret.
% in the low-regularization regime.
Therefore, all the previous foundational results leave the following problem open.
\begin{center}
    \emph{What is the exact regret of} online \emph{learning with} KL-regularized \emph{objectives?}
\end{center}
In this paper, we take the first step towards settling this question via a nearly sharp analysis for multi-armed bandits (MABs), a minimalist model of online learning. 
In particular, for KL-regularized MABs, we propose a variant of $\algcb$~\citep{zhao2025logarithmic} and provide regret upper bounds in both the high-regularization regime ($\eta$ small) and the low-regularization regime ($\eta$ large). We also construct two sets of hard instances that yield nearly matching regret lower bounds in both regimes, indicating that $\algcb$ is near-optimal.
Our two-fold contributions are as follows.

\begin{itemize}[leftmargin=*]
    \item We identify two complementary regimes with different regularization intensities, revealing the transition from $\sqrt{T}$-type regret to $\polylog(T)$-type regret as the regularization strength increases in KL-regularized MABs.
    
    \item For the high-regularization regime, our sharp analysis of \algcb{} yields a $\tilde{O}(\eta K \log^2 T)$ regret. 
    Correspondingly, we also provide a nearly matching $\Omega(\eta K \log T)$ lower bound, characterizing the regret behavior in this regime.
    % which is the \emph{first} upper bound of its kind that scales \emph{linearly} with the action space cardinality $K$. 
    % For the low regularization regime, we establish a $\tilde{O}(\sqrt{KT \log T})$ upper bound of the regularized regret for the same algorithm \algcb{}.
    % scales in a similar way to the unregularized setting.
    \item In the low-regularization regime, our analysis provides an $\tilde{O}(\sqrt{KT \log T})$ regret upper bound for the same algorithm \algcb{}, which nearly matches our established $\Omega(\sqrt{KT})$ lower bound, similar to the unregularized regret of MABs.
    % For both regimes, 
    % We provide an $\Omega(\eta K \log T)$ lower bound for high regularization regime and an $\Omega(\sqrt{KT})$ lower bound for low regularization. Together, these results match the upper bounds achieved by \algcb{} up to logarithmic factors and fully characterize the regret behavior of this problem.
    % nearly matching lower bounds are provided. In particular, our hardness analysis for the high regularization regime reveals intriguing and unique structures of the KL-regularized \emph{online} setting, and serves as the \emph{first} nontrivial (i.e., non-constant with respect to $T$) regret lower bound of its kind.
\end{itemize}
Our near-comprehensive understanding of KL-regularized MABs is visually demonstrated in \Cref{fig:regimes}. And relevant bounds on the statistical efficiency of KL-regularized decision making by far are summarized in \Cref{tab:kl-regularized} to ease comparison.

\newcolumntype{g}{>{\columncolor{LightCyan}}c}
\begin{table*}[ht!]
\centering
\caption{Comparison of regret or sample complexity upper and lower bounds for KL-regularized bandits. In this table, $T$ denotes total rounds of interactions, $\epsilon$ the target suboptimality gap, and $\eta$ the KL regularization coefficient. For linear setting, $d$ denotes the dimension of the feature map. For general function approximation, $\cR$ is the function class, whose eluder dimension is $d_{\cR}$ and covering number is $N_{\cR}$, and $C_\mathrm{GL}^2$ is an instance-dependent constant that might be arbitrarily large. In the MAB setting, $K$ denotes the number of arms. $\tilde{O}(\cdot)$ hides logarithmic factors except $\log T$ and $\log (N_{\cR})$. A checkmark (\cmark) indicates that a matching (up to logarithmic factors) lower bound is known for the corresponding setting, while a cross (\xmark) indicates that no tight lower bound is currently available in its original setting and cannot match the lower bound when specialized to MAB.
}
\vspace{2ex}
\renewcommand{\arraystretch}{1}
% \resizebox{2.05\columnwidth}{!}{
\resizebox{\columnwidth}{!}{
{
\begin{tabular}{lgggg}
\toprule
\rowcolor{white}Type &Algorithm  & Setting& Regret/Sample Complexity & Matching Lower Bound?   \\ 
\midrule
\rowcolor{white}  & Online Iterative GSHF \\
\rowcolor{white} & \small\citep{xiong2024iterative}& \multirow{-2}{*}{Preference w/ Linear Reward} &     \multirow{-2}{*}{$O(d^2/\epsilon^2)$}      &     \multirow{-2}{*}{\xmark}      \\ \rowcolor{white}
& TMPS \\\rowcolor{white}
& \small\citep{zhao2025sharp} & \multirow{-2}{*}{Data Coverage}&    \multirow{-2}{*}{$\tilde O\big((\eta ^2 C_\mathrm{GL}^2 + \eta/ \epsilon) \log (N_\cR)\big)$}   &    \multirow{-2}{*}{\xmark}       \\\rowcolor{white}
\multirow{-1}{*}{Upper Bound} & Greedy Sampling \\\rowcolor{white}
&\small\citep{wu2025greedy}& \multirow{-2}{*}{Preference w/ Eluder Dimension} &     \multirow{-2}{*}{$\tilde O\big(\exp(\eta) d_{\cR} \log T \log(N_{\cR})\big)$}      &     \multirow{-2}{*}{\xmark}     \\\rowcolor{white}
& KL-UCB \\\rowcolor{white}
&\small\citep{zhao2025logarithmic} & \multirow{-2}{*}{Eluder Dimension}&   \multirow{-2}{*}{$\tilde O\big(\eta d_{\cR} \log T \log(N_{\cR})\big)$}     &        \multirow{-2}{*}{\xmark}   \\
 & KL-UCB & & &\\
&\small(This Work) & \multirow{-2}{*}{Multi-armed Bandits} &     \multirow{-2}{*}{$\tilde O(\eta K \log^2 T)$}      & \multirow{-2}{*}{\cmark}      \\
\midrule
\rowcolor{white} 
& \citet{zhao2025sharp} & Data Coverage&    $ \Omega\big(\eta \log(N_{\cR})/\epsilon\big)$   &    N/A     \rule{0pt}{2.5ex}  \\
\multirow{-3}{*}{Lower Bound}  & This Work & Multi-armed Bandits & $ \Omega(\eta K \log T)$  &    N/A  \rule{0pt}{3.5ex}\\[4pt] 
\bottomrule
\end{tabular}
}}
\label{tab:kl-regularized}
\end{table*}

\begin{figure}[t]
    \centering
    \includegraphics[width=0.618\linewidth]{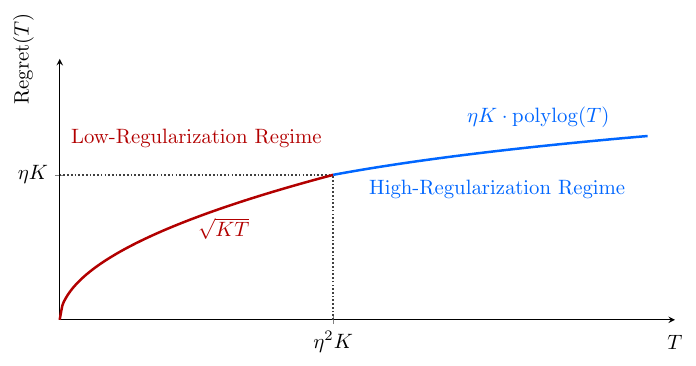}
    \caption{The near-comprehensive picture of KL-regularized MABs rendered in this paper. All logarithmic factors except $\log T$ are omitted to avoid clutter.}
    \label{fig:regimes}
\end{figure}

\paragraph{Notation.} The sets $\cA$ are assumed to be finite throughout the paper.
For nonnegative sequences $\{x_n\}$ and $\{y_n\}$, we write $x_n = O(y_n)$ if $\limsup_{n\to\infty}{x_n}/{y_n} < \infty$, $y_n = \Omega(x_n)$ if $x_n = O(y_n)$, and $y_n = \Theta(x_n)$, or alternatively, $y_n \sim x_n$, if $x_n = O(y_n)$ and $x_n = \Omega(y_n)$. We further employ $\tilde{O}(\cdot), \tilde{\Omega}(\cdot)$, and $\tilde{\Theta}(\cdot)$ to hide $\polylog$ factors. 
For finite $\cX$, we denote by $\Delta(\cX)$ the set of probability distributions on $\cX$, and by $\mathsf{Unif}(\cX)$ the uniform distribution on $\cX$. We use $\mathsf{Bern}(p)$ to denote Bernoulli distribution with expectation $p$.
% For probability measure $P$, we use supp$(P)$ to denote the support set of $P$.
For a pair of probability measures $\PP \ll \QQ$ on the same space, we use $\kl{\PP}{\QQ} \coloneqq \int \log({\mathrm{d} \PP}/{\mathrm{d} \QQ}) \ud \PP$ to denote their KL-divergence. For $p, q \in \RR$, we use $\cN(p, 1)$ to denote the normal distribution with expectation $p$ and unit variance, and we overload $\KL(p,q)$ to denote the KL-divergence between $\cN(p,1)$ and $\cN(q,1)$.
We denote $[N] \coloneqq \{1, \cdots, N\}$ for any positive integer $N$. Boldfaced lower case letters are reserved for vectors. For finite $\cX$ and $\xb, \yb \in \cX^n$, we use $d_H(\xb, \yb) = \sum_{i=1}^n \ind(\xb_i \neq \yb_i)$ for their Hamming distance. $\forall a, b \in \RR$, $a \wedge b \coloneqq \min\{a, b\}$, $a \vee b \coloneqq \max\{a, b\}$, and $[a]_{[0, 1]} \coloneqq (a \vee 0) \wedge 1$.

\section{Related Work}
\paragraph{Optimism in Multi-armed Bandits.} We survey the paradigm of \emph{optimism in the face of uncertainty} for learning finite-armed bandits, which promotes exploration by favoring actions with high uncertainty.
The online interactive MAB setting originates from clinical scenarios~\citep{robbins1952some}, where minimizing the \emph{cumulative regret} is vital. \citet{lai1985asymptotically} initiates the algorithmic principle of optimism for learning MABs and gives the first asymptotic logarithmic regret lower bound.
% , whose leading factor is on the order of $\sum_{a: \Delta_a > 0} \Delta_a^{-1}$. Here the reward gap is $\Delta_a \coloneqq \max_{a^*}r(a^*)  - r(a)$. 
Under the simplification of Bernoulli noise, \citet{lai1987adaptive} proposes the algorithmic paradigm of \emph{upper confidence bound} (UCB), 
% following which many sharper and more general results are established 
which led to a sequence of works in the asymptotic regime~\citep{agrawal1995sample,burnetas1996optimal}. 
% In particular, \citet{burnetas1996optimal} gives the first asymptotically optimal UCB-type algorithm. 
To obtain a finite-time guarantee,
% guarantee a sublinear regret for any finite time horizon $T$, 
\citet{auer2002finite} proposes UCB1, which enjoys finite-time gap-dependent bounds
% (and also gives the first finite-time analysis of $\epsilon$-greedy paradigm)
. \citet{audibert2009minimax} achieves the first worst-case upper bound $O(\sqrt{KT})$ for MABs that is minimax optimal
% down to logarithmic factors 
via a UCB-type algorithm (MOSS). This influential UCB paradigm was later extended to be anytime optimal~\citep{degenne2016anytime} and both minimax and asymptotically optimal~\citep{lattimore2018refining}. Beyond classical MABs, the optimism principle has also been adopted in other online decision making problems including bandits with reward function approximation~\citep{abbasi2011improved,chu2011contextual,russo2013eluder}, structured bandits~\citep{kleinberg2008multi,chen2013combinatorial}, and Markov decision processes~\citep{zhang2024settling,zhou2022computationally}.

% Besides UCB, the algorithmic principle of \emph{randomized exploration} is efficient as well for balancing exploration and exploitation, which also enjoys similar gap-dependent~\citep{kaufmann2012thompson} and worst-case regret guarantees~\citep{agrawal2017near}.
% There is also a line of work on PAC bounds of best-arm identification (BAI) originates from \citet{even2002pac,mannor2004sample,even2006action,audibert2010best}, which coin the paradigm of \emph{successive elimination}, an algorithmic idea useful for both BAI and regret minimization. Finally, MABs with adversarial rewards~\citep{auer1995gambling,auer2002nonstochastic} or infinitely many arms~\citep{berry1997bandit,wang2008algorithms} are beyond our scope, and we refer the readers to \citet{lattimore2020bandit} for more detailed reviews.

\paragraph{RL with KL-Regularization.} Methods that use KL-regularized objectives have achieved strong empirical performance in (inverse) RL and its downstream applications~\citep{ziebart2008maximum,schulman2017proximal,ouyang2022training,guo2025deepseek}. Several lines of work aim to understand this paradigm. \citet{ahmed2019understanding,liu2019regularization} study the effect of entropy regularization on the stability of policy improvement in policy optimization, and related regret guarantees are analyzed in an online mirror descent framework by~\citet{cai2020provably,he2022near,ji2023horizon}. \citet{neu2017unified} places many KL-regularized algorithms in a unified optimization framework, and subsequent work analyzes the sample complexity of KL/entropy proximal methods in discounted MDPs with improved dependence on the effective horizon~\citep{geist2019theory,vieillard2020leverage,kozuno2022kl}. Nevertheless, because these works measure performance with the unregularized reward objective, the sample complexity for finding an $\epsilon$-optimal policy remains at the statistical lower bound $\Omega(\epsilon^{-2})$.

% fast rates major breakthrough \citep{zhao2025sharp,zhao2025sharpanalysisofflinepolicy,zhao2025logarithmic,foster2025good}\qingyue{to be extended}

When switching to performance with respect to the regularized objective, the fast rate $\tilde{O}(\epsilon^{-1})$ was first established by~\citet{tiapkin2023fast}, who derived a sample complexity of $\tilde{O}(H^5S^2A\eta/\epsilon)$ in the pure exploration setting. Subsequently, \citet{zhao2025sharp} obtained a $\tilde{O}(\eta\epsilon^{-1}\log(N_{\cR}))$ sample complexity upper bound, albeit with an additional dependence on a notion of coverage that can be arbitrarily large. Moreover, \citet{zhao2025sharp} also provided an $\Omega(\eta \log N_{\cR} \epsilon^{-1})$ sample complexity lower bound, showing that the $\tilde{O}(\epsilon^{-1})$ rate is optimal. In the regret minimization setting, \citet{zhao2025logarithmic} first obtained an $\tilde{O}(\eta d_{\cR} \log N_{\cR} \log T)$ regret upper bound under reward function approximation. Later, \citet{wu2025greedy} obtained a $\tilde{O}(\exp(\eta)d_{\cR}\log N_{\cR} \log T)$ regret bound without constructing an exploration bonus. This kind of fast convergence against KL-regularized objectives has also been shown for pure offline~\citep{zhao2025sharpanalysisofflinepolicy,foster2025good}, game-theoretic~\citep{nayak2025achieving}, and privacy-constrained~\citep{wu2025offline,weng2025improved} settings. Nonetheless, no previous results match currently available worst-case lower bounds with respect to all problem parameters, such as $K$ and $\eta$.

\section{Problem Setup}\label{sec:cb:setup}

We denote a MAB with a KL-regularized objective by a tuple $(\cA, r, \eta, \piref, T)$, where $ K \coloneqq |\cA| < \infty$ is the number of actions, $r: \cA \to [0, 1]$ is the reward function unknown to the learner, $\eta >0$ is the ``inverse temperature'', $\piref \in \Delta(\cA)$ is a known reference policy, and $T \geq 1$ is the total number of interactions. At each round $t \in [T]$, the learner selects an action $a_t \in \cA$ according to a $\pi_t \in \Delta(\cA)$ and observes a noisy reward $r_t = r(a_t) + \varepsilon_t$, where $\varepsilon_t$ is $1$-sub-Gaussian~\citep[Definition~5.2]{lattimore2020bandit}. The learner's goal is to minimize the KL-regularized regret:
\begin{align*}
    \regret(T) = \sum_{t=1}^T \big[ J(\pi^*) - J(\pi_t) \big],
\end{align*}
where the objective $J(\pi)$ is defined as
\begin{align}
\label{eq:objective}
     J(\pi) = \EE_{a \sim \pi} \bigg[r(a) - \eta^{-1} \log \frac{\pi(a)}{\piref(a)} \bigg].
\end{align}
Equivalently, $J(\pi) = \EE_{a \sim \pi}[r(a)] - \eta^{-1}\KL(\pi\|\piref)$, i.e., the objective subtracts a KL penalty that discourages deviations from the reference policy $\piref$.
The regularization strength is controlled by $\eta$: smaller $\eta$ corresponds to stronger regularization.

Under this objective, it is well known that the (unique) optimal policy $\pi^* \coloneqq \argmax_{\pi \in \Delta(\cA)} J(\pi)$ has the closed-form expression (see, e.g., \citealt[Proposition~7.16]{zhang2023ltbook})
\begin{align}\label{eq:opt-exp}
    \pi^*(\cdot) \propto \piref(\cdot)\exp\big(\eta \cdot r(\cdot)\big).
\end{align}
Moreover, for any reward function $r$, let $\pi_r^*$ denote the corresponding optimal policy.
For any policy $\pi \in \Delta(\cA)$, we define the suboptimality gap of $\pi$ (relative to $\pi_r^*$) by
\begin{align*}
    \subopt_r(\pi, \pi^*_r) &= \EE_{a \sim \pi^*_r} \bigg[r(a) - \eta^{-1} \log \frac{\pi^*_r(a)}{\piref(a)} \bigg] - \EE_{a \sim \pi} \bigg[r(a) - \eta^{-1} \log \frac{\pi(a)}{\piref(a)} \bigg].
\end{align*}
% Compared to the standard objective in the bandit literature, the KL-regularized objective includes an additional regularization term $\eta^{-1}\KL(\pi \|\piref)$, which prevents the resulting policy from deviating too much from the reference policy $\piref$. The intensity of regularization is controlled by $\eta$: a small $\eta$ gives strong regularization and vice versa. Under such KL-regularized objective, it is well known that the unique optimal policy $\pi^* \coloneqq \argmax_{\pi \in \Delta(\cA)} J(\pi)$ admits the following closed-form expression (See, e.g., \citealt[Proposition~7.16]{zhang2023ltbook}).
% \begin{align}\label{eq:opt-exp}
%     \pi^*(\cdot) \propto \piref(\cdot)\exp\big(\eta \cdot r(\cdot)\big).
% \end{align}
% Besides, given any reward $r$ and corresponding optimal policy $\pi^*_r$ and arbitrary policy $\pi \in \Delta(\cA)$, we define the suboptimality gap between $\pi$ and $\pi^*_r$ as follows
% \begin{align*}
%     \subopt_r(\pi, \pi^*_r) &= \EE_{a \sim \pi^*_r} \bigg[r(a) - \eta^{-1} \log \frac{\pi^*_r(a)}{\piref(a)} \bigg] \\
%     &\qquad - \EE_{a \sim \pi} \bigg[r(a) - \eta^{-1} \log \frac{\pi(a)}{\piref(a)} \bigg].
% \end{align*}

\section{Algorithm and Regret Analysis}\label{sec:algorithm}

\begin{algorithm*}[t]
\caption{KL-regularized Upper Confidence Bound Algorithm (\algcb)}\label{algorithm:kl-ucb}
\begin{algorithmic}[1]
    \REQUIRE Regularization $\eta$, reference policy $\piref$, total rounds of interaction $T$, number of actions $K$, error probability $\delta$.
    \FOR{$t = 0, ... , T-1$}
    
    \STATE Set $N_t(a) = \sum_{i=1}^t \ind \{a_i = a\}$ for all $a \in \cA$
    % \IF{$N_t(a)=0$}
    % \STATE Set the empirical reward $\fls_t(a) \leftarrow 0$, bonus $b_t(a) \leftarrow 1$
    % \ELSE
    \STATE Compute the empirical reward $\fls_t(a)$ and penalty $b_t(a)$ as 
    \begin{align*}
        \fls_t(a) \leftarrow \frac{1}{N_t(a) \vee 1} \sum_{i=1}^t r_i \ind \{a_i = a\}, \quad b_t(a) \leftarrow \sqrt{\frac{2\log(TK/\delta)}{N_t(a) \vee 1}}
    \end{align*}
    \STATE Set $\fop_t(a) \leftarrow [\fls_t(a) + b_t(a)]_{[0,1]}$
    % \ENDIF
    \STATE Compute $\pi_{t+1}(a) \propto \piref(a) \exp \big(\eta \cdot \fop_t(a)\big)$, play action $a_{t+1} \sim \pi_{t+1}$, and observe $r_{t+1}$
    \ENDFOR
\end{algorithmic}
\end{algorithm*}

In this section, we present a variant of $\algcb$~\citep{zhao2025logarithmic}, an algorithm for learning MABs with KL-regularization and its corresponding theoretical guarantees. 

\subsection{Algorithm Description}

% \kaixuan{Discuss the difference with~\citet{zhao2025logarithmic}}
We summarize $\algcb$ in Algorithm~\ref{algorithm:kl-ucb}, which follows a similar design to its original version in~\citet{zhao2025logarithmic} with general function approximation. In particular, for each round $t \in [T]$, the algorithm first counts the number of times each arm $a$ has been selected, denoted by $N_{t-1}(a)$. Then, the empirical reward is computed using the empirical mean. As in previous works on bandits~\citep{auer2002finite,zhao2025logarithmic}, $\algcb$ adopts the principle of optimism in the face of uncertainty~\citep{auer2002finite,abbasi2011improved}. Unlike~\citet{zhao2025logarithmic}, which built the bonus function using the uncertainty with respect to the reward function class, we adopt the following standard bonus for MABs
\begin{align*}
    b_t(a) = \sqrt{\frac{2\log(TK/\delta)}{N_t(a) \vee 1}}, \ \forall a\in \cA.
\end{align*}
% The union bound over every function in the reward function class brings an additional log-covering number to the .
% We summarize $\algcb$ in Algorithm~\ref{algorithm:kl-ucb}, which largely follows its original version in~\citet{zhao2025logarithmic}. In particular, for each time step $t \in [T]$, the algorithm first count the number of visit on each arm $N_{t-1}(a)$ for each arm $a \in \cA$ and estimates the expected reward $\fls$ with the empirical average. \todoqw{Discuss the algorithm}Then, as in previous works on bandits~\citep{auer2002finite,zhao2025logarithmic}, $\algcb$ adopts the principle of optimistic in the face of uncertainty \citep{abbasi2011improved}. In~\citet{zhao2025logarithmic}, they studied reward function approximation and used a larger exploration bonus to cover all possible function classes. However, that bonus is over-optimistic for MABs, leading to suboptimal performance. Therefore, in this work, we adopt the following standard bonus for MABs
% \begin{align*}
%     b_t(a) = \sqrt{\frac{2\log(TK/\delta)}{N_t(a) \vee 1}}, \ \forall a\in \cA.
% \end{align*}
The following lemma shows that the obtained reward $\fop = \fls + b$ is indeed an optimistic estimation of the true reward function $r$.
\begin{lemma}\label{lem:optimistic}
Given $\delta >0$, let $\cE(\delta)$ denote the event that our constructed optimistic reward function is indeed larger than true reward mean, i.e.,
\begin{align}
\notag
    \cE(\delta) \coloneqq \Big\{  \big| \fls_t(a) - \fgt(a) \big| \leq b_t(a), \ \forall (t,a) \in [T] \times \cA \Big\}.
\end{align}
Then the event $\cE(\delta)$ holds with probability at least $1-\delta$.
\end{lemma}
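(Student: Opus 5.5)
The plan is to reduce the statement to a fixed-sample concentration bound for each arm and then take a union bound over arms and sample counts. The adaptivity of the sampling is handled with the standard ``reward table'' (stack-of-rewards) construction of \citet[Section~4.6]{lattimore2020bandit}. For each arm $a$, let $X_{a,1}, X_{a,2}, \dots$ be the sequence of noises $\varepsilon$ observed on the successive pulls of $a$. Equivalently, fix in advance an i.i.d.\ array $\{\eta_{a,s}\}_{a\in\cA, s\ge 1}$ of $1$-sub-Gaussian noises and let the $s$-th pull of arm $a$ reveal $r(a)+\eta_{a,s}$. This construction induces the same law on the interaction, so it suffices to prove the claim in this model. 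More generally, one can use that $X_{a,s}$ is $1$-sub-Gaussian conditionally on the past, since the decision to pull arm $a$ for the $s$-th time is predictable. With $N_t(a)=n\ge 1$, we get $\fls_t(a)-\fgt(a) = \frac1n\sum_{s=1}^n X_{a,s}$.

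The key step is the following bound for fixed $a$ and fixed $n\in[T]$. The average $\frac1n\sum_{s=1}^n X_{a,s}$ is $1/n$-sub-Gaussian, since conditional sub-Gaussianity lets the moment generating functions multiply via the tower rule. Hence
\begin{align*}
\PP\Big(\Big|\tfrac1n\textstyle\sum_{s=1}^n X_{a,s}\Big| > \sqrt{2\log(2TK/\delta)/n}\Big) \le \frac{\delta}{TK}.
\end{align*}
A union bound over $a\in\cA$ and $n\in[T]$, which covers $TK$ events, then gives total failure probability at most $\delta$. On the complement of this union, for every $(t,a)$ with $N_t(a)\ge 1$ we have $|\fls_t(a)-\fgt(a)|\le b_t(a)$, because $N_t(a)$ is one of the values $n\in[T]$ covered.

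Two details need checking. The first is the factor of $2$ from the two-sided bound. The stated bonus uses $\log(TK/\delta)$ rather than $\log(2TK/\delta)$, and I would absorb the factor of $2$ in one of two ways. One option is to note that the one-sided tail of a $1$-sub-Gaussian variable is $\exp(-x^2/2)$, so the two-sided tail is $2\exp(-x^2/2)$. The union then runs over $t\in[T]$ indices $n$ from $1$ to $T$, and the case $n$ with $N_t(a)=n$ for $t\le T$ has $n\le t$. The other option, if the constant is tight, is simply to accept $\log(2TK/\delta)$ as a harmless constant adjustment, which I consider the main (minor) obstacle to matching the exact constant. The second detail is the unpulled case $N_t(a)=0$. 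There $\fls_t(a)=0$ and $b_t(a)=\sqrt{2\log(TK/\delta)}\ge 1\ge |0-\fgt(a)|$ whenever $TK/\delta\ge e^{1/2}$, which holds for any nontrivial $\delta$. So the inequality holds deterministically in that case, and the union bound is needed only for $n\ge1$.
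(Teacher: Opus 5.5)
Your proof follows the paper's strategy: a sub-Gaussian tail bound per arm, then a union bound over $TK$ events. It differs in how the union bound is indexed.

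\begin{itemize}
\item The paper fixes $(t,a)$ and applies Hoeffding directly to an average over the random number $N_t(a)$ of samples. It defers the resulting adaptivity issue to a footnote citing \citet{orabona2019modern}.
\item You instead index the union bound by the pull count $n\in[T]$, using the reward-table model. On the complement of the $TK$ bad events, $N_t(a)$ is always one of the covered values of $n$. So no independence between $N_t(a)$ and the rewards is ever needed, which is exactly what the paper's footnote gestures at.
\item You use the conditional sub-Gaussian moment generating function, rather than the bounded-increment Azuma--Hoeffding lemma the paper invokes. That lemma does not literally apply to unbounded noise.
\item You handle $N_t(a)=0$ explicitly.
\end{itemize}

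These last points improve rigor without changing the strategy.

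The one step to fix is the factor of $2$. Your first ``option'' does not work:
\begin{itemize}
\item The constraint $n\le t$ does not reduce the number of pairs $(a,n)$ below $TK$, since $t$ ranges up to $T$.
\item You also cannot union-bound only over the pairs that are actually realized. Whether arm $a$ reaches its $n$-th pull depends on its earlier rewards, so selection can inflate the conditional failure probability.
\end{itemize}

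With a two-sided bound you should therefore take your second option. Either enlarge the bonus to $\sqrt{2\log(2TK/\delta)/(N_t(a)\vee 1)}$, or keep $b_t(a)$ and conclude $\PP(\cE(\delta))\ge 1-2\delta$.

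The paper's own proof does not resolve this either. It only establishes the one-sided inequality $\fgt(a)-\fls_t(a)\le b_t(a)$, which is why the constant $\log(TK/\delta)$ matches exactly. The statement, and its use in the proof of \Cref{thm:upperbound}, needs both sides: optimism $\fop_t\ge\fgt$ and $\fop_t-\fgt\le 2b_t$. The adjustment is harmless there, changing only the constant inside the logarithm or adding $\delta$ to the failure probability.
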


After obtaining the optimistic reward estimation, we construct the policy $\pi_{t+1}$ for time step $t$ to be the optimal policy regarding $\fop_t$, according to which an action $a_{t+1}$ is sampled and we observe the reward $r_{t+1}$. 

\subsection{Theoretical Guarantee}

The regret upper bound of Algorithm~\ref{algorithm:kl-ucb} is given by the following theorem.

\begin{theorem}\label{thm:upperbound}
With probability at least $1-2\delta$, the cumulative regret of Algorithm \ref{algorithm:kl-ucb} admits the following upper bounds, depending on the regularization level.
\begin{itemize}[leftmargin=*]
    \item  For low regularization ($\eta \ge \sqrt{T/K}$), the regret can be upper bounded as
\begin{align*}
    \regret(T) = \tilde{O}\big(\sqrt{KT\log T}\big).
\end{align*}
\item  For high regularization ($\eta \le \sqrt{T/K}$), the regret can be upper bounded as
\begin{align*}
    \regret(T) =\tilde{O}\big(\eta K \log^2T\big),
\end{align*}
\end{itemize}
where $\tilde O$ hides logarithmic factors in $1/\delta$ and $K$.
\end{theorem}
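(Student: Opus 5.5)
The proof rests on two per-round bounds on $J(\pi^*)-J(\pi_{t})$, both valid on the event $\cE(\delta)$ of Lemma~\ref{lem:optimistic}. One is first order in the bonus and yields the $\sqrt{KT}$ rate. The other is second order and yields the $\eta K\log^2T$ rate. Each is then summed against the realized counts $N_t(a)$, with a martingale step converting expectations under $\pi_t$ into realized sums. That martingale step costs the second $\delta$ in the failure probability $2\delta$.

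\emph{Per-round bounds.} Fix a round, write $\pi_t=\pi_{\fop}$ for the Gibbs policy of $\fop=\fop_{t-1}$, and set $\Delta \coloneqq \fop-\fgt$. On $\cE(\delta)$ we have $0\le \Delta(a)\le 2b_{t-1}(a)\wedge 1$: optimism survives the clipping because $\fgt\in[0,1]$.

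For the first-order bound, write $J_r$ for the objective \eqref{eq:objective} with reward $r$. Since $J_{\fop}(\pi)\ge J_{\fgt}(\pi)$ for every $\pi$, and $\pi_t$ maximizes $J_{\fop}$, we get
\[
J_{\fgt}(\pi^*)\le J_{\fop}(\pi^*)\le J_{\fop}(\pi_t)=J_{\fgt}(\pi_t)+\EE_{\pi_t}[\Delta].
\]
Hence the regret at this round is at most $\EE_{a\sim\pi_t}[\Delta(a)]$.

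For the second-order bound, consider the interpolated rewards $\fgt+s\Delta$ for $s\in[0,1]$, with log-partition $F(s)=\eta^{-1}\log\sum_a\piref(a)e^{\eta(\fgt+s\Delta)(a)}$ and Gibbs policy $\pi_s$. Then $F(s)=\max_\pi J_{\fgt+s\Delta}(\pi)$, $F'(s)=\EE_{\pi_s}\Delta$ and $F''(s)=\eta\,\mathrm{Var}_{\pi_s}(\Delta)$. Since $J_{\fgt}(\pi_t)=F(1)-F'(1)$, the round's regret equals
\[
F(0)-F(1)+F'(1)=\int_0^1\!\!\int_s^1 \eta\,\mathrm{Var}_{\pi_u}(\Delta)\,du\,ds \le \eta\int_0^1\!\!\int_s^1 \EE_{\pi_u}[\Delta^2]\,du\,ds .
\]
Moreover, $\frac{d}{du}\EE_{\pi_u}[\Delta^2]=\eta\,\mathrm{Cov}_{\pi_u}(\Delta^2,\Delta)\ge 0$, because $\Delta\ge0$ makes $\Delta^2$ and $\Delta$ comonotone. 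So $\EE_{\pi_u}[\Delta^2]\le\EE_{\pi_1}[\Delta^2]$ for $u\le 1$. This gives the key inequality
\[
J(\pi^*)-J(\pi_t)\le \tfrac{\eta}{2}\,\EE_{a\sim\pi_t}\big[\Delta(a)^2\big]\le 2\eta\,\EE_{a\sim\pi_t}\Big[\frac{2\log(TK/\delta)}{N_{t-1}(a)\vee1}\wedge 1\Big].
\]

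\emph{Summation.} Let $f_t(a)\in[0,1]$ denote the bracketed quantity, respectively $2b_{t-1}(a)\wedge1$ in the first-order case. Both are predictable and nonnegative. A Freedman-type inequality for nonnegative bounded martingale differences gives, with probability $1-\delta$,
\[
\sum_t\EE_{\pi_t}f_t\le 2\sum_t f_t(a_t)+O(\log(1/\delta)).
\]
I expect this step to need the most care. It is presumably where the paper's peeling argument enters: one peels over dyadic levels of the counts, or of the conditional variance, so that no extra $\log T$ or $\sqrt{T}$ factor is lost, and I would try this first if the plain Freedman bound leaves slack.

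The realized sums are elementary. Grouping by arm,
\[
\sum_t \frac{1}{N_{t-1}(a_t)\vee1}\le \sum_a\big(2+\log N_T(a)\big)\le K(2+\log T),
\]
which gives $\regret(T)=\tilde O(\eta K\log T\cdot\log(TK/\delta))=\tilde O(\eta K\log^2T)$. Likewise,
\[
\sum_t b_{t-1}(a_t)\lesssim\sqrt{\log(TK/\delta)}\sum_a\sqrt{N_T(a)}\le\sqrt{KT\log(TK/\delta)},
\]
by Cauchy--Schwarz, giving $\tilde O(\sqrt{KT\log T})$.

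\emph{Regimes.} Both bounds hold simultaneously, so the regret is at most their minimum. For $\eta\ge\sqrt{T/K}$ the first-order bound is the one to quote. For $\eta\le\sqrt{T/K}$ the second-order bound applies and is the relevant one, since $\eta K\le\sqrt{KT}$ there.
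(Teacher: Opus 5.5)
Your proposal is correct and has the same skeleton as the paper's proof: optimism on $\cE(\delta)$, a per-round regret bound that is first order (slow rate) or quadratic (fast rate) in the bonus, a step converting $\pi_t$-expectations into realized sums over $a_t$, and harmonic/Cauchy--Schwarz counting. Two ingredients differ.

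\emph{The quadratic bound.} The paper does not prove it. It imports $J(\pi^*)-J(\pi_t)\le\eta\,\EE_{\pi_t}[(\fop_t-\fgt)^2]$ as Lemma~\ref{lem:taylor-expansion-optimism}. Your log-partition expansion, with the comonotonicity step ($\mathrm{Cov}_{\pi_u}(\Delta^2,\Delta)\ge 0$ because $\Delta\ge0$), is self-contained and gives the better constant $\eta/2$.

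\emph{The conversion to realized sums.} The paper's conversion is additive. It splits $\sum_t\EE_{\pi_t}[1/(N_{t-1}(a)\vee1)]$ into the realized sum $I_1$ plus a martingale term $I_2$. Its Freedman lemma (Lemma~\ref{lem:freedman-m}) requires a deterministic variance bound, so it truncates the sequence at dyadic levels $2^i$ of the predictable sum, which dominates the conditional variance. It then union-bounds over $\lceil\log_2T\rceil$ levels and closes with $x\le a\sqrt{x}+b\Rightarrow x\le a^2+2b$. For the slow rate it simply uses Azuma--Hoeffding.

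Your multiplicative inequality replaces all of this, and it holds exactly as stated, so your hedge about needing peeling is unnecessary. Let $X_t\coloneqq f_t(a_t)\in[0,1]$ and $\mu_t\coloneqq\EE_{a\sim\pi_t}f_t(a)$, which is $\cF_{t-1}$-measurable.
\begin{itemize}
\item By convexity of $x\mapsto e^{-x}$ on $[0,1]$, $\EE[e^{-X_t}\mid\cF_{t-1}]\le 1-(1-e^{-1})\mu_t\le\exp(-(1-e^{-1})\mu_t)$.
\item Hence $\exp\bigl(\sum_{s\le t}((1-e^{-1})\mu_s-X_s)\bigr)$ is a supermartingale.
\item Markov's inequality then gives, with probability $1-\delta$, $\sum_t\mu_t\le\frac{e}{e-1}\bigl(\sum_tX_t+\log(1/\delta)\bigr)$.
\end{itemize}
This is a statement about nonnegative adapted variables versus their conditional means, not about ``nonnegative martingale differences'' as you phrased it.

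The paper's peeling is in effect one way to derive this self-bounding form from the fixed-variance Freedman bound. Your route avoids the $\log\log T$ union bound and the quadratic step, and handles both regimes with one lemma. The paper's route, in exchange, needs nothing beyond the textbook Freedman and Azuma--Hoeffding inequalities.
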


\begin{remark}
Previously, \citet{zhao2025logarithmic} obtained an $O\big(\eta d(\cR, \lambda, T)\log(N_{\cR}T/\delta)\big)$ regret under general function approximation, where $\cR$ is the reward function class, $d(\cR, \lambda, T)$ is the eluder dimension~\citep[Definition~3.3]{zhao2025logarithmic} and $N_{\cR}$ is the covering number of $\cR$. When specializing to MABs, a standard elliptical potential argument~\citep[Section~3.1]{zhao2025logarithmic} with one-hot feature mapping shows that $d(\cR, \lambda, T) = O(K \log T)$~\citep[Section~D.1]{russo2013eluder}, and $\log N_{\cR} = O(K\log T)$. Thus, the worst-case regret upper bound in \citet{zhao2025logarithmic} reduces to $O(\eta K^2\log^2 T)$ in the multi-armed setting. Compared with their result, the $O(\eta K \log^2T)$ regret in Theorem~\ref{thm:upperbound} is strictly better.
\end{remark}

Theorem~\ref{thm:upperbound} establishes the regret upper bound of Algorithm~\ref{algorithm:kl-ucb} in two separate regimes. When $\eta \ge \sqrt{T/K}$, the regret scales with $\tilde{O}(\sqrt{KT})$. In contrast, when the regularization is high, i.e., $\eta \le \sqrt{T/K}$, Algorithm~\ref{algorithm:kl-ucb} enjoys a logarithmic regret $O(\eta K \log^2 T)$. These two regimes arise from the two-term structure of the KL-regularized objective~\eqref{eq:objective}. When $\eta$ is large, the effect of the regularization term becomes negligible, so the reward term dominates. In this case, the problem resembles a standard MAB problem and therefore recovers the $\tilde{O}(\sqrt{KT})$ rate. Otherwise, the KL regularization term dominates. It introduces sufficient curvature into the reward estimation error, thereby yielding logarithmic regret.
% \end{remark}

\subsection{Proof Sketch of \Cref{thm:upperbound}}

The proof operates on the high-probability event $\cE(\delta)$, where the optimistic estimator satisfies $|\fop(a) - \fgt(a)| \le b_t(a)$. In the low regularization regime, we follow the usual UCB-type analysis routine~\citep{lattimore2020bandit}. The intriguing regime of high regularization is analyzed as follows. The full version is deferred to \Cref{sec:missing-proof-of-upper-bounds}.

% \paragraph{Fast Rate ($O(\eta K \log^2 T)$).}
We begin with the KL-regularized regret decomposition (\Cref{lem:taylor-expansion-optimism}). Under $\cE_1(\delta)$, the regret is bounded by the cumulative expected squared error, which scales with the inverse visitation counts:
\begin{align}
    \regret(T) &\leq \eta \sum_{t=1}^T \EE_{a \sim \pi_t}\big[\big(\fop_t(a) - \fgt(a)\big)^2\big] \lesssim \eta {\sum_{t=1}^T \EE_{a \sim \pi_t} \bigg[ \frac{1}{N_{t-1}(a) \vee 1} \bigg]}.  \label{eq:sketch-decomp}
\end{align}
To bound this sum of expectations, we decompose $\eqref{eq:sketch-decomp}$ into an on-policy term ($I_1$) and a martingale difference term ($I_2$): $\eqref{eq:sketch-decomp} = \eta(I_1 + I_2)$, (we omit the $\cdot \vee 1$ in this sketch to avoid notation clutter) where
\begin{align*}
    I_1 &\coloneqq \sum_{t=1}^T {1} / {N_{t-1}(a_t)}, \\
    I_2 &\coloneqq \sum_{t=1}^T \bigg( \EE_{a \sim \pi_t} \big[ {1} / {N_{t-1}(a)} \big] - {1} / {N_{t-1}(a_t)} \bigg).
\end{align*}
% \begin{align*}
%     \sum_{t=1}^T \EE_{a_t \sim \pi_t} \bigg[ \frac{1}{N_{t-1}(a_t)} \bigg] = \underbrace{\sum_{t=1}^T \frac{1}{N_{t-1}(a_t)}}_{I_1: \text{Realized}} + \underbrace{\sum_{t=1}^T \bigg( \EE_{a_t \sim \pi_t} \bigg[ \frac{1}{N_{t-1}(a_t)} \bigg] - \frac{1}{N_{t-1}(a_t)} \bigg)}_{I_2: \text{Martingale Difference}}.
% \end{align*}
\paragraph{Bounding $I_1$ (Harmonic Sum).} The realized term is bounded deterministically via the properties of the harmonic series. A double counting with respect to arms yields
% By grouping terms by arm actions, we obtain:
\begin{align*}
    I_1 = \sum_{a \in \cA} \sum_{i=1}^{N_{T-1}(a)} \frac{1}{i} \approx \sum_{a \in \cA} \log \big(N_{T-1}(a)\big) \lesssim K \log T.
\end{align*}

\paragraph{Bounding $I_2$ (Peeling Technique).} The terms in $I_2$ form a martingale difference sequence (MDS). While it can be safely ignored if we only require a guarantee on the expected regret, obtaining a high-probability regret upper bound requires a concentration argument. A straightforward invocation of the Azuma-Hoeffding inequality yields an $\tilde{O}(\sqrt{T})$ bound. Although such a bound is acceptable in the unregularized setting, where the overall regret is also of order $\tilde{O}(\sqrt{T})$, it would dominate and thus destroy the desired $O(\log T)$ bound in the KL-regularized setting. Therefore, we need a more delicate approach. 

In particular, let $x_t$ be an element of this MDS:
\begin{align*}
    x_t = \EE_{a \sim \pi_t} \bigg[ \frac{1}{N_{t-1}(a)} \bigg] - \frac{1}{N_{t-1}(a_t)}.
\end{align*}
and let $y_t \coloneqq 1 /{N_{t-1}(a_t)}$ be the on-policy term. Since $\sum_t y_t \lesssim K\log T$, the typical magnitude of $x_t$ is much smaller than its trivial upper bound $1$, suggesting that the cumulative conditional variance of $\{x_t\}$ should also be small. This observation motivates the application of Freedman's inequality
% We aim to apply Freedman's inequality
(Lemma \ref{lem:freedman-m}) for bounding $I_2 = \sum_t x_t$ via the sum of conditional variances. However, a direct application requires a sufficiently tight bound on the conditional variance; using only a crude upper bound would yield $O(\sqrt{T})$ regret bound, and thus fail to obtain the desired fast rate. To resolve this issue, we employ a novel \emph{peeling technique}. Specifically, we use the inequality $\text{Var}(x_t | \cF_{t-1}) \le \EE [y_t | \cF_{t-1}]$ and truncate $\sum_t \EE [y_t | \cF_{t-1}]$ at different levels $2^i$ for a more fine-grained upper bound of $\sum_t \text{Var}(x_t | \cF_{t-1})$.
For each $i >0 $, we define the following event
\begin{align*}
    \cE_i(t) = \bigg\{ \sum_{s=1}^t \EE [y_s|\cF_{s-1}] \le 2^i \bigg\}.
\end{align*}
Since $\sum_{t=1}^T \EE [y_t | \cF_{t-1}] \le T$ surely, we only need to consider $i \le \log_2 T$. It is easy to check that for any $i$, $\{x_t\ind(\cE_i(t))\}_{t=1}^T$ remains a MDS. Moreover, its conditional variance can be upper bounded by $2^i$, i.e.,
\begin{align*}
    \sum_{t=1}^T \text{Var}\big(x_t \ind[\cE_i(t)] \big| \cF_{t-1}\big) \leq 2^i.
\end{align*}
Now we apply Freedman's inequality to $\{x_t\ind(\cE_i(t))\}_{t=1}^T$. Taking a union bound over all $i\le \log_2 T $, we conclude that with high probability, the following inequality holds simultaneously for all $i \leq \log_2 T$, 
\begin{align}
\label{eq:help001}
    \sum_{t=1}^T x_t\ind[\cE_i(t)] \le \tilde O\big(2^{i/2}\big) + \text{minor terms}.
\end{align}
We now select an index $i$ such that $\ind[\cE_i(t)]=1, \forall t$. Since $\sum_{t=1}^T \EE [y_t | \cF_{t-1}] \le T$ and \eqref{eq:help001} holds for all $i \le \log_2 T$ simultaneously, we can choose $i$ such that $2^i \sim \sum_{t=1}^T \EE [y_t | \cF_{t-1}]$. Then, \eqref{eq:help001} gives
\begin{align*}
    I_2 \lesssim \sqrt{ \sum_{t=1}^T \EE_{a \sim \pi_t} \Big[ \frac{1}{N_{t-1}(a)} \Big]} + \text{minor terms}.
\end{align*}
Finally, recalling that $I_1 + I_2 = \sum_{t=1}^T \EE_{a \sim \pi_t}[ {1}/{N_{t-1}(a)}]$ and $I_1 \lesssim K \log T$, we have
\begin{align*}
    (I_1+I_2) \lesssim K \log T + \sqrt{I_1+I_2} + \text{minor terms}.
\end{align*}
We can conclude the proof using the resulting quadratic inequality $X \le A\sqrt{X} + B \implies X \lesssim A^2 + B$.
% Now we consider the $i$-th inequality where $i$ is the index such that $2^i \sim \sum_{t=1}^T \EE [y_t | \cF_{t-1}]$ and obtain
% \begin{align*}
%     I_2 \lesssim \sqrt{ \sum_{t=1}^T \EE_{a \sim \pi_t} \bigg[ \frac{1}{N_{t-1}(a)} \bigg] \cdot \log T } + \log T.
% \end{align*}
% Solving the resulting quadratic inequality $X \le A\sqrt{X} + B \implies X \lesssim A^2 + B$ allows us to bound the total expected sum by $O(K \log T)$, yielding the high probability logarithmic regret.
\section{Lower Bounds}\label{sec:lower-bound}

In this section, we present two nearly matching lower bounds to show that $\algcb$ is nearly minimax optimal. We first present the lower bound in the low-regularization regime, where $\eta \geq \sqrt{T/K}$.

\begin{theorem}[Low-regularization regime]\label{thm:lowerbound-slow}
Given any $K \geq 9$ and $\eta \ge \sqrt{T\log^2 K/K}$, for any algorithm, there exists a KL-regularized $K$-armed bandit on which the algorithm suffers from $\Omega(\sqrt{KT})$ regret.
\end{theorem}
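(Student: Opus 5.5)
We reduce to the classical $\Omega(\sqrt{KT})$ minimax lower bound for unregularized $K$-armed bandits with Gaussian noise, run on a family of hard instances for which the KL penalty is negligible.

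\emph{Instances.} Take $\piref = \unif(\cA)$ and the standard family indexed by $j \in \{0\} \cup \cA$. Instance $0$ has $r_0(a) = 1/2$ for every arm. Instance $j \ge 1$ has $r_j(a) = 1/2 + \Delta\ind\{a=j\}$ with $\Delta = c\sqrt{K/T}$, and observations $\cN(r(a),1)$. Under instance $j$ the optimal policy is $\pi^*_j(a)\propto \exp(\eta r_j(a))$. Since $\eta\Delta \ge c\sqrt{T\log^2 K/K}\cdot\sqrt{K/T} = c\log K$, we get $\pi^*_j(j) = e^{\eta\Delta}/(e^{\eta\Delta}+K-1) \ge 1 - K^{1-c}$. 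With $c$ a large constant (and $K\ge 9$), $\pi^*_j$ is nearly a point mass on $j$.

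\emph{Per-round regret.} We lower bound $\subopt_{r_j}(\pi,\pi^*_j)$ by a multiple of $\Delta(1-\pi(j))$. Because $\pi^*_j$ is a Gibbs policy with $\piref$ uniform, there is the exact identity
\[
J(\pi^*_j)-J(\pi) = \eta^{-1}\KL(\pi\|\pi^*_j).
\]
Applying the data-processing inequality to the partition $\{j\},\{j\}^c$ gives $\KL(\pi\|\pi^*_j) \ge \mathrm{kl}(\pi(j), \pi^*_j(j))$. If $1-\pi(j)\ge 1/2$ and $1-\pi^*_j(j) \le K^{1-c}$, this is at least $\tfrac12 \log\big(1/(2K^{1-c})\big) - \log 2 \gtrsim \log K$. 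Dividing by $\eta$ gives a per-round regret of order $\log K/\eta$. This is not obviously $\gtrsim \Delta$, so the direct route needs more care.

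\emph{A cleaner route.} Instead, use
\[
J(\pi^*_j) - J(\pi) \ge \big[\EE_{\pi^*_j} r_j - \EE_\pi r_j\big] - \eta^{-1}\KL(\pi^*_j\|\piref) \ge \Delta(\pi^*_j(j)-\pi(j)) - \eta^{-1}\log K,
\]
which uses $\KL(\pi\|\piref)\ge 0$ and $\KL(\cdot\|\unif)\le \log K$. Since $\eta^{-1}\log K \le \Delta/c$, whenever $\pi(j)\le 1/2$ the per-round regret is at least $\Delta/4$. So the regularized regret dominates, up to constants, $\Delta\cdot\#\{t:\pi_t(j)\le 1/2\}$, minus a lower-order slack. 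I would keep the additive form $\Delta(1-\pi_t(j)) - 2\Delta/c$ summed over $t$, which gives
\[
\regret_j(T) \ge \Delta\Big(T - \EE_j \textstyle\sum_t \pi_t(j)\Big) - 2\Delta T/c.
\]

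\emph{Information argument.} Let $n_j$ be the expected number of pulls of $j$ under instance $0$, where $\EE_0\sum_t\pi_t(j) = \EE_0 N_T(j)$. Choose $j$ with $\EE_0 N_T(j)\le T/K$. By the divergence decomposition and Pinsker's inequality,
\[
\EE_j N_T(j) \le \EE_0 N_T(j) + T\sqrt{\tfrac12\cdot \tfrac{\Delta^2}{2}\cdot \tfrac{T}{K}} = \tfrac{T}{K} + \tfrac{c}{2}\,T.
\]
This forces a small information constant $c'$ in $\Delta = c'\sqrt{K/T}$, which conflicts with needing $c$ large in $\eta\Delta\ge c\log K$.

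\emph{Main obstacle.} The crux is this tension between the two roles of $\Delta$. I would resolve it by decoupling them: take $\Delta = \tfrac14\sqrt{K/T}$ for the information bound, and absorb the $\log K$ factor through the assumption $\eta \ge \sqrt{T\log^2 K/K}$. Then $\eta\Delta \ge \tfrac14\log K$, and the penalty term becomes $\eta^{-1}\log K \le \sqrt{K/T}$. This term is still of order $\Delta$, so the constants must be tuned. One fix is to shift the base reward instance so that the KL term cancels exactly between $\pi^*_j$ and a comparator. A simpler fix is to compare against $\pi^{\mathrm{ref}}$-tilted policies: with $\pi^*_j(j)\ge e^{\eta\Delta}/(e^{\eta\Delta}+K)$, the bound $\KL(\pi^*_j\|\piref) = \eta\Delta\pi^*_j(j) - \log\big((e^{\eta\Delta}+K-1)/K\big)$ is in fact much smaller than $\log K$ when $\pi^*_j$ is not concentrated. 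I expect this constant bookkeeping, possibly using $K\ge 9$, to be the delicate step. The result is $\regret \ge \Omega(\Delta T) = \Omega(\sqrt{KT})$.
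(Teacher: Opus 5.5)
There is a genuine gap. It sits exactly at the step you call the main obstacle, and neither proposed fix closes it.

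\textbf{The cleaner route with Pinsker fails at the boundary.} Take $\eta=\sqrt{T\log^2K/K}$ and any $c$ in $\Delta=c\sqrt{K/T}$. Then $T\eta^{-1}\log K=\sqrt{KT}=\Delta T/c$. Pinsker gives at best $\EE_jN_T(j)\le T/K+cT/2$. So the bound you can extract is $\Delta T(\pi^*_j(j)-1/K-c/2-1/c)\le \Delta T(1-c/2-1/c)<0$, because $c/2+1/c\ge\sqrt2$.

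\textbf{The ``simpler fix'' goes the wrong way.} Suppose $\eta\Delta=c\log K$ with $c<1$, as with your $\Delta=\tfrac14\sqrt{K/T}$. Then $\pi^*_j(j)\approx K^{c-1}$, so the reward term $\Delta(\pi^*_j(j)-\pi(j))$ collapses together with the KL term. In that regime your family is simply not hard. The uniform policy has zero regret on $r_0$ and per-round regret $\eta^{-1}\KL(\unif\|\pi^*_j)\le \eta^{-1}(e^{\eta\Delta}-1)/K$ on $r_j$. At the boundary this totals at most $\sqrt{KT}\,K^{c-1}/\log K$.

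Any working construction therefore needs $e^{\eta\Delta}\gtrsim K$. At the boundary this forces $T\Delta^2/K\gtrsim 1$, so the information constant cannot be made small. A Pinsker/total-variation argument over the full horizon is the wrong tool. Your other fix, shifting the base instance, is only named and never carried out.

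\textbf{What the paper does instead.} It supplies two ingredients you are missing.
\begin{enumerate}
\item Le Cam's two-point method with the Bretagnolle--Huber bound. This costs only a factor $e^{-\KL}$, so it tolerates $\KL(\PP_1\|\PP_2)=O(1)$. It permits $\delta=\sqrt{2K/T}$, so that $e^{\eta\delta}\ge K$ while $\KL(\PP_1\|\PP_2)\le 2T\delta^2/(K-1)=O(1)$.
\item A pair whose optimal policies are \emph{both} concentrated, on different arms. Set $r_1=\delta\ind\{a=1\}$, and let $r_2$ equal $r_1$ except $r_2(2)=2\delta$, where arm $2$ is the least-pulled arm under $r_1$. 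Using $\subopt=\eta^{-1}\KL(\pi\|\pi^*)$ and the minimizer $\hat\pi\propto\sqrt{\pi_1^*\pi_2^*}$, one gets $\subopt_1(\pi)+\subopt_2(\pi)\ge \eta^{-1}(\eta\delta-\log 9)=\Omega(\delta)$ for every $\pi$. There is no additive $\eta^{-1}\log K$ slack.
\end{enumerate}
Your flat $r_0$ would not serve in this two-point argument. For the pair $(r_0,r_j)$, the minimal sum of suboptimalities is at most $\eta^{-1}\log K$, which vanishes as $\eta\to\infty$.

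\textbf{A repair that keeps your flat family and Pinsker.} Truncate the horizon. Per-round regret is nonnegative, so it suffices to lower bound the regret over the first $T'=\lfloor T/256\rfloor$ rounds. There the hypothesis reads $\eta\ge 16\sqrt{T'\log^2K/K}$.
\begin{itemize}
\item Take $\Delta=\tfrac12\sqrt{K/T'}$. Then $\eta\Delta\ge 8\log K$, so $\eta^{-1}\log K\le \Delta/8$ and $1-\pi^*_j(j)\le K^{-7}$.
\item Pinsker gives $\EE_jN_{T'}(j)\le T'/K+T'/4$.
\item Your additive bound then yields regret at least $\Delta T'(1-K^{-7}-1/K-1/4-1/8)=\Omega(\sqrt{KT})$.
\end{itemize}
This requires $T\ge 256K$ so that rewards stay in $[0,1]$.
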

A change-of-variable argument $\tilde{T} \leftarrow T / \log^2 K$ then yields the following corollary in the regime of $\eta \ge  \sqrt{T/K}$.
\begin{corollary}
Given any $K \geq 9$ and $\eta \ge \sqrt{T/K}$, for any algorithm, there exists a KL-regularized $K$-armed bandit on which the algorithm suffers from $\Omega(\sqrt{KT}\log^{-1}K)$ regret.

\end{corollary}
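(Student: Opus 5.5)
The corollary is a rescaling of Theorem~\ref{thm:lowerbound-slow}. Theorem~\ref{thm:lowerbound-slow} is assumed to hold for every horizon, so I will apply it at a shorter horizon and then transfer the bound back to horizon $T$.

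Fix $K \ge 9$ and $\eta \ge \sqrt{T/K}$, and set $\tilde T \coloneqq \lfloor T/\log^2 K \rfloor$. Since $K \ge 9$, we have $\log^2 K > 1$, so $\tilde T \le T$. The case $\tilde T = 0$, i.e.\ $T < \log^2 K$, is trivial: the claimed bound $\sqrt{KT}/\log K$ is then $O(\sqrt{K})$, and the desired $\Omega(\cdot)$ statement reduces to a constant-order regret on a one-round instance. I expect this to be absorbed by the constants, or excluded by assuming $T \ge \log^2 K$.

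In the main case $\tilde T \ge 1$, the condition $\eta \ge \sqrt{T/K}$ gives
\begin{align*}
\eta \ge \sqrt{T/K} \ge \sqrt{\tilde T \log^2 K / K},
\end{align*}
which is exactly the hypothesis of Theorem~\ref{thm:lowerbound-slow} with horizon $\tilde T$. Consider any algorithm for horizon $T$, and let it play only its first $\tilde T$ rounds; this is an algorithm for horizon $\tilde T$. Theorem~\ref{thm:lowerbound-slow} then yields an instance on which these first $\tilde T$ rounds incur regret $\Omega(\sqrt{K\tilde T})$.

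To pass back to horizon $T$, I use that every per-round term is nonnegative: $J(\pi^*) - J(\pi_t) \ge 0$ because $\pi^*$ maximizes $J$. Hence the regret over $T$ rounds is at least the regret over the first $\tilde T$ rounds. It remains to compare $\sqrt{K\tilde T}$ with the target. When $\tilde T \ge 1$ we have $\tilde T \ge T/(2\log^2 K)$, so
\begin{align*}
\sqrt{K\tilde T} \gtrsim \sqrt{KT}\,\log^{-1} K,
\end{align*}
which is the claimed bound. The only care needed is the floor and the $\tilde T = 0$ edge case.
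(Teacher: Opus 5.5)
Your argument is correct and is the same change of variables $\tilde T \leftarrow T/\log^2 K$ that the paper invokes in one line. You spell out the details the paper leaves implicit: the floor, the check $\eta \ge \sqrt{T/K} \ge \sqrt{\tilde T \log^2 K/K}$, and monotonicity of regret in the horizon via $J(\pi^*) - J(\pi_t) \ge 0$.

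The only inaccuracy is your claim that the case $T < \log^2 K$ is absorbed by the constants. It is not: playing $\piref$ costs at most $J(\pi^*) - J(\piref) \le 1$ per round, whereas at $T=1$ the target is $\sqrt{K}/\log K$. That case should simply be excluded, consistent with the paper's asymptotic reading of $\Omega(\cdot)$.
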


In the high-regularization regime where $\eta \leq \sqrt{T/K}$, the regret lower bound is characterized by the following theorem.

\begin{theorem}[High-regularization regime]\label{thm:lowerbound-fast}
Given any $K \geq 2$, $0 < \eta \le \sqrt{T/K}$, for any algorithm, there exists a KL-regularized multi-armed bandit with $K$ arms on which the algorithm suffers from $\Omega \big(\ \eta K \log \big(T / (\eta^2K) \big) \ \big)$ regret.
\end{theorem}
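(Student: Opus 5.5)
We will prove Theorem~\ref{thm:lowerbound-fast} by a Bayesian argument. Put a prior over a family of hard instances, lower bound the Bayes regret, and conclude that some instance in the support incurs at least that much regret.

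\textbf{Reduction to estimation.} We first rewrite the per-round KL-regularized suboptimality. For any $\pi$,
\[
J(\pi^*)-J(\pi)=\eta^{-1}\kl{\pi}{\pi^*}.
\]
So the regret equals $\eta^{-1}\sum_t \kl{\pi_t}{\pi^*}$. We use Gaussian rewards $\cN(r(a),1)$, a uniform reference policy $\piref=\unif(\cA)$, and rewards of the form $r(a)=1/2+\sigma_a\epsilon$, where $\sigma_a\in\{\pm1\}$ and $\epsilon\le 1/\eta$. In this regime $\pi^*(a)\propto \exp(\eta\sigma_a\epsilon)$ stays within constant factors of $1/K$. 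By a local quadratic lower bound on KL (reverse Pinsker near the uniform distribution), $\kl{\pi_t}{\pi^*}$ is then at least of order $K\sum_a(\pi_t(a)-\pi^*(a))^2$. Moreover, the distance from $\pi_t$ to $\pi^*$ is at least $\Omega(\eta\epsilon/K)$ in each coordinate $a$ where the policy ``points the wrong way'', i.e.\ where $\pi_t(a)$ lies on the wrong side of $1/K$ relative to $\sigma_a$. Consequently the per-round regret is at least
\[
c\,\eta^{-1}\cdot K\cdot(\eta\epsilon/K)^2\cdot\#\{a:\text{wrong sign}\}=c\,\eta\epsilon^2 K^{-1}\,d_H(\hat\sigma_t,\sigma).
\]
This is the KL analogue of an Assouad reduction: the regret decomposes over arms.

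\textbf{Per-arm hardness and the $\log T$ factor.} A fixed $\epsilon$ only yields $\eta\epsilon^2 T$ regret until the learner identifies the sign of arm $a$. Identifying that sign takes about $\epsilon^{-2}$ pulls, and since $\pi_t(a)\approx 1/K$, it takes about $K\epsilon^{-2}$ rounds. With a single scale, this gives only $\Omega(\eta K)$ regret, which is constant in $T$. To obtain $\log T$, we decompose the prior across scales. Take $\epsilon$ random, drawn from a log-uniform mixture over dyadic levels $\epsilon_j=2^{-j}$ with $\epsilon_j^2\in[\eta^2K/T,\,1]$ (more precisely $[\eta^2 K/T,\eta^{-2}]$ capped), and draw the signs uniformly. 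Conditionally on level $j$, during rounds $t\lesssim K\epsilon_j^{-2}$ the Bayes error probability per coordinate is constant: by Assouad/Le Cam the KL between the sign hypotheses is at most $N_t(a)\epsilon_j^2\lesssim 1$. This contributes per-round regret of order $\eta\epsilon_j^2$ over roughly $K\epsilon_j^{-2}$ rounds, i.e.\ $\eta K$ per level. The levels must be separated so that the contributions add. The ``tailored decomposition of the Bayes prior'' we intend is to write the regret as a sum over time blocks $[K4^{j-1}\eta^{-2}\cdot\,,\,K4^{j}\cdot]$. In block $j$, the arms whose magnitude equals level $j$ are still unidentified, while the smaller-scale arms contribute less. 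To make the signal of each level live on separate arms, and thereby avoid the learner pooling information across levels, we assign to each arm its own independent level $j_a$, uniform over the $L\asymp\log(T/(\eta^2K))$ levels. Then each arm contributes $\Omega(\eta)$ in expectation per level it could occupy. We also need an arm-wise factor: for an arm at level $j$, the expected regret in its window is $\Omega(\eta)$, but the learner does not know $j_a$. If the learner spends pulls to detect small-scale arms, this costs it nothing directly, since over-pulling is free only up to $\pi(a)\approx 1/K$, and over-pulling incurs regret $\eta^{-1}\cdot\kl{\pi_t}{\pi^*}$. Summing, the target is $\Omega(\eta K L)$.

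\textbf{Main obstacle.} The hard step is the last one. Per arm, the learner faces an unknown magnitude $\epsilon$, and it may deviate from $\pi\approx\unif$ to sample arm $a$ more and learn its sign faster; deviation itself costs regret. We must show that, uniformly over the learner's strategy, the Bayes regret on arm $a$ is $\Omega(\eta\log(T/(\eta^2K)))$. First we would try a change-of-measure argument for each level. Let $n_j$ be the expected number of pulls of $a$ by time $t_j=K\epsilon_j^{-2}$. If $n_j\lesssim \epsilon_j^{-2}$, the sign at level $j$ is unresolved, costing $\eta\epsilon_j^2\cdot t_j/K=\eta$. Otherwise $\pi$ placed mass $\gg 1/K$ on $a$ for many rounds, and by the KL formula this costs at least $\eta^{-1}\cdot K\,(\pi(a)-1/K)^2$ per round. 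Applying Cauchy--Schwarz over the rounds, this is again $\gtrsim\eta$ once $\epsilon_j\le 1/\eta$. Summing over the $L$ levels and the $K$ arms then gives the claimed bound.
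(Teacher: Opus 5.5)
Your proposal has a genuine gap: the $\log$ factor cannot come from the prior you propose. If each arm gets a single level $j_a\sim\unif([L])$, each scale carries only prior mass $1/L$ per arm. In the window $t\asymp K\epsilon_j^{-2}$, only about $K/L$ arms are unresolved at scale $\epsilon_j$, and each costs $O(\eta)$ over that window. Arms at smaller scales add a geometric series of the same order, so summing the $L$ windows gives $O(\eta K)$, not $\Omega(\eta K L)$. The step \emph{each arm contributes $\Omega(\eta)$ per level it could occupy \dots summing over the $L$ levels} double counts, because the level-$j$ cost is incurred only on the event $\{j_a=j\}$. Your over-pulling dichotomy in the last paragraph has the same flaw: a learner that never over-pulls pays $\eta$ at level $j$ only with probability $1/L$. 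This is a defect of the prior, not of the analysis. Consider the algorithm that plays $\pi_t\propto\exp(\eta\hat r_t)$ with $\hat r_t$ the posterior mean. Its sampling stays near-uniform, and its per-arm Bayes squared error after $n\approx t/K$ pulls is $\Theta(1/(Ln))$. Its per-round regret is therefore $O(\eta K/(Lt))$, so its Bayes regret under your prior is $O(\eta K)$.

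The missing idea is a single, time-invariant prior that, at every scale $\delta_t\asymp\sqrt{K/t}$, is an exact mixture of Assouad hypercubes with constant mass. The paper takes $r_{[1:K]}-1/2\sim\unif([-\alpha,\alpha]^K)$ with $\alpha\asymp 1/\eta$. It observes that when $\alpha/(2\delta_t)\in\NN^*$, drawing $\xb\sim\unif(\cH_t^K)$ and $\bmu\sim\unif(\{\pm1\}^K)$ independently gives $\xb+\bmu\delta_t\sim\unif([-\alpha,\alpha]^K)$. Conditioning on $\xb$ and applying Assouad at round $t$ then gives Bayes suboptimality $\gtrsim\eta\delta_t^2e^{-t\delta_t^2/K}\asymp\eta K/t$ for every $t\ge\eta^2K$, all under the same prior. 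Summing over $t$ yields $\eta K\log(T/(\eta^2K))$. Your dyadic idea can be repaired the same way. Give every arm independent signs at all levels simultaneously, $r(a)=1/2+\sum_j\sigma_{a,j}\epsilon_j$, which is essentially a binary expansion of a uniform variable, and at round $t$ condition on every level except $j(t)$. Note also that the divergence decomposition, with $\sum_k\EE[N_t(k)]\le t$ averaged over coordinates, handles arbitrary allocations automatically. The over-pulling issue you flag as the main obstacle therefore never arises.

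Your separation step also needs repair. Since $\pi^*(a)=e^{\eta\sigma_a\epsilon}/Z$ with $Z$ depending on all signs, lying on the wrong side of $1/K$ does not force a gap of order $\eta\epsilon/K$; if all $\sigma_a=+1$, $\pi^*$ is exactly uniform. Moreover, $\kl{\pi_t}{\pi^*}\gtrsim K\|\pi_t-\pi^*\|_2^2$ fails when $\pi_t$ is far from uniform. The paper instead lower bounds $\subopt_{\bmu}(\pi)+\subopt_{\blambda}(\pi)$ uniformly in $\pi$. It does so through the minimizer $\hat\pi\propto\sqrt{\pi^*_{\bmu}\pi^*_{\blambda}}$ and an explicit log-partition computation, with $K$ extra arms pinned at $1/2+\alpha$ to control the normalizer. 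The resulting bound holds for every shift $\xb$, which is exactly what the conditional Assouad argument needs.
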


\begin{remark}\label{rmk:lowerbound-fast-comparison}
Previously, an $\Omega(\eta \log N_{\cR}/\epsilon)$ lower bound was introduced in~\citet{zhao2025sharp} for a $2$-armed contextual bandit, which implies an $\Omega(\eta /\epsilon)$ sample complexity for MABs\footnote{The $\log N_{\cR}$ scaling entirely arises from the size of the context set and hence reduces to a constant in MABs.}. In contrast, Theorem~\ref{thm:lowerbound-fast} establishes an $\Omega(\eta K \log T)$ lower bound and implies an $\Omega(\eta K/\epsilon)$ sample complexity, strictly improving upon the previous result.
\end{remark}

% \begin{remark}
When $\eta \ge \sqrt{T/K}$, Theorem~\ref{thm:lowerbound-slow} shows that any algorithm must incur $\tilde{\Omega}(\sqrt{KT})$ regret. On the other hand, when $\eta \le \sqrt{T/K}$, Theorem~\ref{thm:lowerbound-fast} shows that any algorithm must incur $\Omega(\eta K\log T)$ regret. Compared with the upper bound in Theorem~\ref{thm:upperbound}, these lower bounds together show that $\algcb$ is \emph{minimax optimal} up to logarithmic factors, and the logarithmic dependence on $T$ in the high-regularization regime is \emph{inevitable}.
% \end{remark}

\section{Proof Overview of Hardness Results}\label{sec:overview}

In this section, we provide an overview of the proofs in Section~\ref{sec:lower-bound}. 
We first discuss the proof of Theorem~\ref{thm:lowerbound-slow}, which corresponds to the low-regularization regime and is more similar to unregularized MABs. Accordingly, following previous works \citep{lattimore2020bandit}, we construct a hard instance class consisting of $K$ hard-to-distinguish instances. However, this is not sufficient for proving the lower bound in the high-regularization regime (Theorem~\ref{thm:lowerbound-fast}). We first explain why the classical construction fails, and then propose a new approach based on a more sophisticated family of instances.
Throughout this section, we assume that the reward noise is independent Gaussian with variance $1$ unless otherwise specified.

\subsection{Proof Overview of Theorem~\ref{thm:lowerbound-slow}}\label{sec:overview-slow}

In this theorem, we consider the low-regularization regime, where $\eta \gtrsim \sqrt{T \log^2 K/K}$. In this regime, the effect of regularization is negligible, thus the regularized problem can be viewed as a small perturbation of the unregularized bandit. Accordingly, we construct hard instances by adapting the standard unregularized bandit lower-bound construction (see, e.g., \citet{lattimore2020bandit}). Specifically, we fix $\eta >0$, $\cA = [K]$, $\piref = \unif(\cA)$. We construct the hard-to-distinguish instance set as follows:

Fix a constant $\delta > 0$ to be specified later. For the first instance, we define the reward function $r_1$ by setting $r_1(1) = \delta$ and $r_1(i) = 0$ for all $i \ge 2$. For the remaining instances, for each $k \in \{2, \dots, K\}$, we define $r_k$ by setting $r_k(i) = r_1(i)$ for all $i \neq k$ and $r_k(k) = 2\delta$.

% and algorithm \textsf{Alg}. For the first instance, we set $r_1$ to be $r_1(1) = \delta$ and $r_1(i) = 0$ for all $i \geq 2$. 
For any algorithm \textsf{Alg}, let $N_T(i)$ be the number of times arm $i$ is pulled in the first $T$ steps. By the pigeonhole principle, there exists $k \geq 2$ such that 
\begin{align*}
    \EE_{r_1, \textsf{Alg}}[N_T(k)] \leq \frac{T}{K-1},
\end{align*}
where the expectation is taken over the distribution jointly given by instance $r_1$ and \textsf{Alg}. 
% Now we construct the second instance $r_2$ as $r_2(i) = r_1(i)$ for all $i \neq k$ and $r_2(k) = 2\delta$. 
Now we consider the KL-divergence between the trajectory distributions induced by instances $r_1$ and $r_k$. 
By the chain rule of KL-divergence and $\KL(0,2\delta) = 2\delta^2$ (\Cref{fact:kl-gaussian}), $\kl{\PP_1}{\PP_k}$ can be bounded by
\begin{align}
     \sum_{i=1}^K \EE_1[N_T(i)]\KL(r_1(i), r_k(i)) \leq \frac{T\delta^2}{K-1}, \label{eq:overview-slow-kl}
\end{align}
where we adopt the shorthand $\PP_i:=\PP_{r_i, \textsf{Alg}}$ to denote the probability distributions over trajectories induced by the interaction between algorithm \textsf{Alg} and instances $r_i$ for $i \in [K]$. When picking $\delta \sim \sqrt{K/T}$, we have $\kl{\PP_1}{\PP_k} = O(1)$, indicating that under algorithm \textsf{Alg}, it is hard to distinguish $r_k$ from $r_1$\footnote{In general, the KL-divergence $\kl{\PP}{\QQ}$ between two distributions $\PP$ and $\QQ$ is a constant indicates that $\PP$ and $\QQ$ cannot be reliably distinguished.}.

Now we compute the cost of misidentifying the underlying reward function. For $i\in\{1,k\}$, let $\pi^*_i$ be the optimal policy corresponding to $r_i$, as defined in \eqref{eq:opt-exp}. We define the suboptimality gap between $\pi^*_i$ and any policy $\pi \in \Delta(\cA)$ under instance $i$ as $\subopt_i(\pi) = \subopt_{r_i}(\pi, \pi^*_i)$. A direct computation yields that 
\begin{align}
    & \subopt_1(\pi) + \subopt_k(\pi)  \gtrsim \frac{1}{\eta} \log \frac{(\mathrm{e}^{\eta\delta} + K-1)(\mathrm{e}^{2\eta\delta} + K-2)}{(2\mathrm{e}^{\eta\delta} + K-2)^2}. \label{eq:overview-slow-subopt-raw}
\end{align}
As $\delta \sim \sqrt{K/T}$, our assumption on $\eta$ indicates $\eta\delta \gtrsim \log K$. Then, all the $\Theta(K)$ terms in the denominator of~\eqref{eq:overview-slow-subopt-raw} can be replaced by $\exp(\eta\delta)$, which yields
\begin{align}
     \subopt_1(\pi) + \subopt_k(\pi) &\gtrsim \frac{1}{\eta} \log \frac{\mathrm{e}^{\eta\delta} \cdot \mathrm{e}^{2\eta\delta}}{(3\mathrm{e}^{\eta\delta})^2} \gtrsim \delta. \label{eq:overview-slow-subopt-final}
\end{align}
Consequently, \eqref{eq:overview-slow-subopt-final} shows that the algorithm incurs a per-step cost of $\Omega(\delta)$ if it cannot distinguish $r_k$ from $r_1$ and therefore suffers $\Omega(T\delta)$ regret over $T$ rounds. Now we combine~\eqref{eq:overview-slow-kl} and~\eqref{eq:overview-slow-subopt-final} with an argument of Le Cam's method (Lemma~\ref{lem:two-point}), we conclude that \textsf{Alg} suffers from  $\Omega(T\delta) = \Omega(\sqrt{KT})$ regret, which finishes the proof.

\subsection{Proof Overview of Theorem~\ref{thm:lowerbound-fast}}
Although the hard instance constructed in Section~\ref{sec:overview-slow} is standard for MABs, it does not apply to the high-regularization (fast-rate) case. In this section, we first explain why this construction fails in that regime. To overcome the difficulty, we then introduce new proof techniques to derive a sharper fast-rate lower bound.
\paragraph{Failure of Instances in Section~\ref{sec:overview-slow}.}
At a high level, the lower bound proof in Section~\ref{sec:overview-slow} relies on two key steps: constructing a set of statistically indistinguishable instances by setting $\delta = \sqrt{K/T}$~\eqref{eq:overview-slow-kl}, and demonstrating that the suboptimality is sufficiently large on at least one of these instances \eqref{eq:overview-slow-subopt-final}. 
% \kaixuan{Seems duplicate with the beginning of Section~\ref{sec:overview}}

We first demonstrate that, in the regime of high regularization, the curvature of the regularizer plays a crucial role, resulting in an $\Omega(\eta\delta^2)$ rather than $\Omega(\delta)$ suboptimality gap. Specifically, when $\eta$ is small, we can redo~\eqref{eq:overview-slow-subopt-raw} as follows:
\begin{align*}
    \text{\eqref{eq:overview-slow-subopt-raw}} & = \frac{1}{\eta}\log \bigg(1 + \frac{M}{(M+\exp(\eta\delta))^2}\big(\exp(\eta\delta)-1\big)^2\bigg),
\end{align*}
where $M = K-2 + \exp(\eta\delta)$. When $\eta\delta$ is small, $K-2$ dominates $\exp(\eta\delta)$ and makes $M = \Omega(K)$. Now, applying a basic inequality regarding $\eta \delta$ results in 
\begin{align}
    \text{\eqref{eq:overview-slow-subopt-raw}} \sim \frac{1}{\eta}\log \bigg(1 + \frac{\eta^2\delta^2}{K}\bigg) \sim \frac{\eta\delta^2}{K}. \label{eq:overview-fast-subopt-failure}
\end{align}
Ignoring the dependence on $K$, we see that the suboptimality gap is of order $\eta\delta^2$. Moreover, using the choice of $\delta\sim\sqrt{K/T}$, we obtain an $\Omega(\eta)$ regret bound, whose dependency on $K$ is loose compared with Theorem~\ref{thm:lowerbound-fast}.

The gap with respect to $K$ is primarily due to the fact that strong regularization toward $\piref = \unif(\cA)$ forces any near-optimal policy to remain close to the uniform policy. Consequently, the policy assigns only $O(1/K)$ probability mass to the specific arms $\{1, k\}$ where the instances differ. As a result, the cost of making an error in distinguishing $r_k$ from $r_1$ is diluted by a factor of $K$, i.e., from $\Omega(\eta\delta^2)$ to $\Omega(\eta\delta^2/K)$. Therefore, to manifest the $\Omega(K)$ dependency in Theorem~\ref{thm:lowerbound-fast}, we need a more sophisticated set of instances.
\begin{remark}
    Since the two-point-type constructions in the proofs of lower bounds in previous works on KL-regularized decision making~\citep{zhao2025sharp,zhao2025sharpanalysisofflinepolicy} are in spirit similar to the construction for \Cref{thm:lowerbound-slow} \emph{when specialized to MABs}, the reasoning above also implies that it is not promising to directly adapt their constructions to the online setting to show the correct scaling with respect to $K$.
\end{remark}

\paragraph{Instance Design.} 
% As previously discussed, instances that differ on only a single arm do not yield a sharp lower bound in the high-regularization regime.
% , as the regret incurred by misidentifying the optimal arm is not sufficiently large in this setting. 
To overcome the issue above, we instead consider a class of instances in which $\Omega(K)$ arms might have different rewards and thus require estimation.
% The set of instances discussed in Section~\ref{sec:overview-slow} fails because the reward only varies on one arm, making the cost of misclassifications too small. To overcome this issue, we make rewards on $\Omega(K)$ arm not determined and to be estimated. 
In particular, let $K$ be even and $A \coloneqq K/2$. We fix $\eta > 0$ and keep $\cA = [K]$ and $\piref = \unif(\cA)$. Let $\cV = \{\pm 1\}^A$ and we consider the rewards parameterized by $\bmu \in \cV$ such that $r_{\bmu}$ is given as follows:
\begin{align*}
    r_{\bmu} (i) & = \frac{1}{2} + \bmu_i \delta, \ \forall i \in [A]; \\
    r_{\bmu}(i) & =  \frac{1}{2}, \ \forall i \in [A+1, 2A],
\end{align*}
where $\delta > 0$ is a parameter to be specified. Upon this set of instances, to distinguish one of the reward $r_{\bmu}$ from all other rewards in $\{r_{\bnu}\}_{\bnu \in \cV}$, the learner has to determine all the $A = \Omega(K)$ entries of $\bmu$. 

\paragraph{Suboptimality Gap Computation.}

Our next step is to demonstrate that the regret accumulates across all arms where the learner fails to distinguish whether the mean reward is $1/2 + \delta$ or $1/2 - \delta$. Intuitively, for any $\bmu \in \cV$ and $i \in [K]$, $r_{\bmu}(i)$ is very close to $1/2$ and therefore all near-optimal policies put $\Theta(1/K)$ probability mass on each arm. Hence, similar to the argument of~\eqref{eq:overview-fast-subopt-failure}, once the learner makes an error in estimating some $r(k)$, the cost of this error is always $\Omega(\eta\delta^2/K)$ regardless of the estimation on the other arms. Therefore, the cost accumulates and results in $\Omega(m\eta\delta^2/K)$ total cost if learner makes $m$ mistakes.

In particular, let $\bmu_1,\bmu_2 \in \cV$ be two instances such that $d_H(\bmu_1, \bmu_2) = m$. From now on, for $i=1,2$, let $r_i = r_{\bmu_i}$, $\pi^*_i$ be the optimal policy corresponding to $r_i$ and $\subopt_i(\pi) = \subopt_{r_i}(\pi, \pi^*_i)$ be the suboptimality gap between $\pi^*_i$ and any policy $\pi \in \Delta(\cA)$. A direct computation yields that 
\begin{align}
     \subopt_1(\pi) + \subopt_2(\pi)     \gtrsim \frac{1}{\eta} \log \Bigg(1 + \frac{2Km}{\big(K\exp(\eta\delta)\big)^2}\big(\mathrm{e}^{\eta\delta/2}  - 1\big)^2 \Bigg). \label{eq:overview-fast-subopt-raw}
\end{align}
Given that $\eta\delta = O(1)$, $\exp(\eta\delta) = O(1)$ and the $\exp(\eta\delta)$ in the denominator in~\eqref{eq:overview-fast-subopt-raw} can be ignored and then~\eqref{eq:overview-fast-subopt-raw} becomes
\begin{align}
     \subopt_1(\pi) + \subopt_2(\pi)  &  \gtrsim \frac{1}{\eta} \log \bigg(1 + \frac{m}{K}\big(\exp(\eta \delta/2) - 1\big)^2 \bigg) \gtrsim \frac{1}{\eta} \log \bigg(1 + \frac{m}{K}\eta^2\delta^2 \bigg)  \gtrsim \frac{m \eta \delta^2}{K}, \label{eq:overview-fast-subopt-final}
\end{align}
where the second inequality holds due to $e^x-1 \approx x$ and the last holds due to $\log(1 + x) \approx x$. 
Consequently,~\eqref{eq:overview-fast-subopt-final} demonstrates that if the algorithm fails to distinguish between instances with $m$ arms differ, it suffers a per-step cost of $\Omega(m\eta\delta^2/K)$.
% This indicates that the cost of error in deciding reward of $m$ arms is $\Omega(m\eta\delta^2/K)$.
% To sum up, we obtain that for any instances $\bmu, \blambda \in \cV$ and $\pi \in \Delta(\cA)$.
% \begin{align}
%     \subopt_{\bmu}(\pi) + \subopt_{\blambda}(\pi) \gtrsim  \frac{\eta\delta^2}{K} d_H(\bmu, \blambda). \label{}
% \end{align}

\paragraph{Minimax Lower Bound of the Suboptimality Gap.}

We show that for $t \geq \eta^2 K$, there exists a choice of $\delta_t$ such that the suboptimality gap at time step $t$ is $\Omega(\eta K/t)$. Fixing $t\geq \eta^2 K$, we pick $\delta_t = \sqrt{K/t}$, and use $\bmu \sim_j \blambda$ to denote $d_H(\bmu, \blambda) =1$ and $\bmu_j \neq \blambda_j$. As in~\eqref{eq:overview-slow-kl}, we consider the average KL-divergence (up to round $t$) between pairs of instances which differ only on arm $j$:
\begin{align*}
    \frac{1}{|\cV|}\sum_{\bmu \sim_j \blambda} \KL(\PP_{\bmu,t}\|\PP_{\blambda,t}).
\end{align*}
Averaging over $j \in [A]$, one can show that 
\begin{align*}
    \frac{1}{A}\sum_{j=1}^A\frac{1}{|\cV|}\sum_{\bmu \sim_j \blambda} \KL(\PP_{\bmu,t}\|\PP_{\blambda,t})  = \frac{2t\delta_t^2}{K} = 2.
\end{align*}
Consequently, there exist $m=\Omega(K)$ arms for which the corresponding average KL-divergences are $O(1)$, implying that the algorithm \textsf{Alg} cannot reliably distinguish the rewards on these arms. Plugging $m = \Omega(K)$ into~\eqref{eq:overview-fast-subopt-final} gives $\Omega(\eta K/t)$ suboptimality gap.

% \begin{align*}
%     & \frac{1}{A}\sum_{j=1}^A\frac{1}{|\cV|}\sum_{\bmu \sim_j \blambda} \KL(\PP_{\bmu,t}\|\PP_{\blambda,t}) \\
%     & \quad = \frac{1}{2|\cV|A} \sum_{d_H(\bmu, \blambda)=1}\KL(\PP_{\bmu,t}\|\PP_{\blambda,t}) \\
%     & \quad = \frac{1}{2|\cV|A} \sum_{\bmu}\sum_{\lambda:d_H(\bmu,\blambda)}\KL(\PP_{\bmu,t}\|\PP_{\blambda,t}) \\
%     & \quad =  \frac{1}{2|\cV|A} \sum_{\bmu}\sum_{j=1}^A \EE_{\bmu, t}[N_t(j)] \KL(1/2 +\delta, 1/2-\delta) \\
%     & \quad = 2t\delta^2/K.
% \end{align*}

% For $j \in [A]$, we use $\bmu \sim_j \blambda$ to denote $d_H(\bmu, \blambda) =1$ and $\bmu_j \neq \blambda_j$. For each $t \leq T$ and $t = \Omega(\eta^2 K)$, we take $\delta_t \sim \sqrt{K/t}$. As we have discussed before, at this time, any near optimal policy are very close to uniform policy, which gives that 
% \begin{align*}
%     \kl{\PP_{\bmu,t}}{\PP_{\blambda,t}} &= \EE_{\bmu,t}[N_t(j)] \KL(1/2-\delta, 1/2+\delta) \\
%     & \lesssim  t\delta_t^2/K.
% \end{align*}
% Therefore, combined with~\eqref{eq:overview-fast-subopt-final} and applying Assouad's lemma (Lemma~\ref{lem:assouad}), we know that
% \begin{align}
%     & \EE_{\bmu \sim \unif(\cV)}  \EE_{\bmu, t}\big[\subopt_{\bmu,t}(\pi)\big] \notag \\
%     & \quad \gtrsim \eta\delta_t^2 \exp \bigg( - \frac{1}{2|\cV|K} \sum_{d_H(\bmu, \blambda)=1}\KL(\PP_{\bmu,t}\|\PP_{\blambda,t}) \bigg) \notag\\
%     & \quad \gtrsim \eta\delta_t^2 \exp \big(-t\delta_t^2/K\big). \notag\\
%     & \quad \gtrsim \eta K/t, \label{eq:overview-fast-regret-const}
% \end{align}
% where the last inequality holds due to $\delta_t \sim \sqrt{K/t}$. 

\paragraph{Summing Over Time Steps.}

If we ignore small time steps, directly summing up the $\Omega(\eta K/t)$ suboptimality gap for every $t \in [\ceil{\eta^2K}, T]$ yields an expected regret lower bound of $\Omega\big(\eta K \log T\big)$. Such an approach is, however, flawed since the $\delta_t$ is different for each $t$. This temporal-level discrepancy of the set of hard instances prevents a direct aggregation of these bounds, because the trajectory distribution would be ill-defined if the instances keep evolving as $t$ grows from $1$ to $T$.

To overcome this issue, we construct a single collection of instances that remains invariant for all $t = \Omega(\eta^2 K)$. The idea here is to extend the discrete instance distribution to a continuous distribution. Similar proof ideas have also been applied in previous works~\citep{vovk2001competitive,singer2002universal,zhao2023optimal} to derive $\log T$ type lower bounds. For clarity, we illustrate the idea under $K=2$, in which $\cV = \{\pm 1\}$.

Fixing some $t$ and the corresponding reward gap $\delta$, the rewards in the previous construction are distributed over $1/2 \pm \delta$. To make this distribution continuous, we replace $1/2$ with a variable $x$ ranging from $1/2 - \delta$ to $1/2 + \delta$. Then $x - \delta$ exactly scans over $[1/2 - 2\delta, 1/2]$ and $x + \delta$ exactly scans over $[1/2, 1/2 + 2\delta]$, which, collectively, constitutes a uniform coverage of a $4\delta$-length interval. Moreover, pairing the rewards as $x \pm \delta$ and applying \eqref{eq:overview-fast-subopt-final} preserves the lower bound:
\begin{align*}
     \EE_{x \sim \unif([1/2-\delta,1/2+\delta])}\EE_{v \in \cV}\EE_{r_{x+v\delta},t}[\subopt_{r_{x+v\delta},t}(\pi)]   &  = \EE_{u \sim \unif([1/2-2\delta,1/2+2\delta])}\EE_{r_u,t}[\subopt_{r_u,t}(\pi)] \\
    &  \gtrsim \eta \delta^2. 
\end{align*}
We then concatenate several consecutive and disjoint copies of the $4\delta$-length interval to form an interval of length $\alpha$. A uniform distribution of instances over the $\alpha$-length interval still yields an $\Omega(\eta\delta)$ suboptimality gap lower bound.

Now, for each $t \geq \eta^2 K$, we first pick $\delta_t = \sqrt{K/t}$, and then, if $\alpha$ is sufficiently large, a slight adjustment to $\delta_t$ enables $\nicefrac{\alpha}{(2\delta_t)} \in \NN^*$ so that the construction above produces an $t$-independent uniform distribution over an interval of length $\alpha$. This addresses the issue of the varying instance distributions across time $t$. Now we can sum over all $t = \Omega(\eta^2 K)$ and obtain the $\Omega(\eta K \log T)$ regret lower bound as desired.

% \begin{figure}[h]
%     \centering
%     \includegraphics[width=\linewidth]{icml2026/pic/log-regimes.pdf}
%     \caption{The near-comprehensive picture (in log-log scale) of KL-regularized MABs rendered in this paper. All logarithmic factors except $\log T$ are omitted to avoid clutter.}
%     \label{fig:log-regimes}
% \end{figure}

\section{Conclusion and Future Work}

In this work, we study the MAB problem with a KL-regularized objective and provide a near-complete characterization of their regret behavior. In particular, we propose a variant of KL-UCB~\citep{zhao2025logarithmic} that achieves a $\tilde{O}(\eta K\log^2 T)$ regret upper bound. This regret is near-optimal, as indicated by an $\Omega(\eta K\log T)$ regret lower bound. Furthermore, in the low regularization regime, our theoretical analysis shows an $\tilde{\Theta}(\sqrt{KT\log T})$ regret with matching bounds, providing a comprehensive understanding of the KL-regularized objectives for \emph{online} learning in MABs. 

Currently, there is still a $\Theta(\log T)$ gap between our upper and lower bounds. Moreover, our analysis is restricted to the tabular setting with finitely many arms and stochastic rewards. Fully closing the gap and extending these near-matching results to structured settings such as contextual bandits~\citep{chu2011contextual}, bandits with linear or general function approximation~\citep{abbasi2011improved,russo2013eluder} and decision making in the face of adversary~\citep{auer2002nonstochastic} are interesting directions for future work.

% \kaixuan{Generalize to Contextual Bandit}

%\clearpage

\appendix

\section{Missing Proof in Section~\ref{sec:algorithm}}\label{sec:missing-proof-of-upper-bounds}

\subsection{Proof of Lemma~\ref{lem:optimistic}}

\begin{proof}[Proof of Lemma~\ref{lem:optimistic}]
The proof is standard and we present it here for completeness. Fix a time step $t$ and a specific arm $a$, by Hoeffding's inequality (Lemma~\ref{lem:azuma-hoeffding}), with probability at least $1 - \delta/KT$, we know that
\begin{align*}
    \fgt(a) -  \frac{1}{N_t(a) \vee 1} \sum_{i=1}^t r_i \ind \{a_i = a\} \leq \sqrt{\frac{2\log(KT/\delta)}{N_t(a) \vee 1}} = b_t(a).
\end{align*}
Taking union bound over all $t \in \seq{T}$ and $a \in \cA$ finishes the proof.\footnote{The fact that $N_t(a)$ is itself a random variable seemingly prevents the application of Hoeffding's inequality, which is also a standard caveat; we refer the readers to, e.g., \citet[Section~11.2.3]{orabona2019modern} for details.}
\end{proof}

\subsection{Proof of Theorem~\ref{thm:upperbound}}
\begin{proof}[Proof of Theorem~\ref{thm:upperbound}]
The proof follows the previous proof in~\citet{zhao2025logarithmic}. We first prove the ``fast rate'' when $\eta$ is small. The following lemma gives the KL-regularized regret decomposition.

\begin{lemma}[Lemma A.1, \citealt{zhao2025logarithmic}]\label{lem:taylor-expansion-optimism}
Let $\fop$ be an optimistic estimator of the ground truth reward $\fgt$, i.e., $\fop(a) \geq \fgt(a)$ for all $a \in \cA$. Let $\pihat(a) \propto  \piref(a) \exp \big(\eta \cdot \fop(a)\big)$ and $\pi^*(a) \propto  \piref(a) \exp \big(\eta \cdot \fgt(a)\big)$, then
\begin{align*}
    J(\pi^*) - J(\pihat) \leq \eta \EE_{a \sim \pihat}\big[\big(\fop(a) - \fgt(a)\big)^2\big].
\end{align*}
\end{lemma}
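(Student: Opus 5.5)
We must prove $J(\pi^*)-J(\pihat)\le \eta\,\EE_{a\sim\pihat}[(\fop(a)-\fgt(a))^2]$ whenever $\fop\ge\fgt$ pointwise. The plan is to write the suboptimality through log-partition functions, view it as a Bregman divergence, and control that divergence by a second-order expansion whose curvature term is a variance under a tilted policy.

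For any reward $r$, set $Z(r)=\sum_a \piref(a)e^{\eta r(a)}$ and $\pi_r\propto\piref e^{\eta r}$. Since $\log(\pi_r(a)/\piref(a))=\eta r(a)-\log Z(r)$, for any policy $\pi$ we have
\begin{align*}
J_{\fgt}(\pi)=\EE_\pi[\fgt]-\eta^{-1}\EE_\pi[\eta\fop-\log Z(\fop)] \quad\text{when } \pi=\pihat.
\end{align*}
Hence $J(\pihat)=\EE_{\pihat}[\fgt-\fop]+\eta^{-1}\log Z(\fop)$, while $J(\pi^*)=\eta^{-1}\log Z(\fgt)$.

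Setting $\Phi(r)=\eta^{-1}\log Z(r)$, whose gradient is $\pi_r$, this gives
\begin{align*}
J(\pi^*)-J(\pihat)=\Phi(\fgt)-\Phi(\fop)-\langle\nabla\Phi(\fop),\fgt-\fop\rangle,
\end{align*}
the Bregman divergence of $\Phi$ at $\fgt$ around $\fop$.

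Let $\Delta=\fop-\fgt\ge 0$ and $r_s=\fop-s\Delta$ for $s\in[0,1]$. Taylor's formula with integral remainder gives
\begin{align*}
J(\pi^*)-J(\pihat)=\int_0^1 (1-s)\,\Delta^\top\nabla^2\Phi(r_s)\Delta\,ds.
\end{align*}
The Hessian is $\eta$ times the covariance under $\pi_{r_s}$, so
\begin{align*}
\Delta^\top\nabla^2\Phi(r_s)\Delta=\eta\,\mathrm{Var}_{\pi_{r_s}}(\Delta)\le\eta\,\EE_{\pi_{r_s}}[\Delta^2].
\end{align*}

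The key step, and the only place optimism enters, is to compare $\EE_{\pi_{r_s}}[\Delta^2]$ with $\EE_{\pihat}[\Delta^2]$. We have $\pi_{r_s}(a)\propto\pihat(a)e^{-s\eta\Delta(a)}$, so $\pi_{r_s}$ tilts $\pihat$ toward small $\Delta$. Since $\Delta^2$ is nondecreasing in $\Delta\ge0$ and the weights $e^{-s\eta\Delta}$ are decreasing in $\Delta$, Chebyshev's association inequality gives
\begin{align*}
\EE_{\pihat}\big[\Delta^2e^{-s\eta\Delta}\big]\le\EE_{\pihat}[\Delta^2]\;\EE_{\pihat}\big[e^{-s\eta\Delta}\big].
\end{align*}
Dividing by $\EE_{\pihat}[e^{-s\eta\Delta}]$ yields $\EE_{\pi_{r_s}}[\Delta^2]\le\EE_{\pihat}[\Delta^2]$.

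Integrating, $\int_0^1(1-s)\,ds=1/2$, so
\begin{align*}
J(\pi^*)-J(\pihat)\le\tfrac{\eta}{2}\,\EE_{\pihat}\big[(\fop-\fgt)^2\big],
\end{align*}
which is stronger than the claim. I expect the main obstacle to be this monotone-tilt comparison, because it fails without $\Delta\ge0$; the other steps are routine algebra.
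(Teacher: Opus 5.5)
Your argument is correct: the log-partition/Bregman representation, the identity $\Delta^\top\nabla^2\Phi(r_s)\Delta=\eta\,\mathrm{Var}_{\pi_{r_s}}(\Delta)$, and the Chebyshev association step (the only place $\fop\ge\fgt$ is used) all check out, and the integral remainder even gives the sharper constant $\eta/2$. The paper does not reprove this lemma but imports it from \citet{zhao2025logarithmic}, and as far as I recall that proof takes the same route as yours: it interpolates between $\fgt$ and $\fop$, writes the gap as $\eta$ times a variance of $\Delta$ under the interpolated Gibbs policy, and uses optimism to show that the second moment of $\Delta$ under that policy is largest at $\pihat$.
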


We also need the following lemma, which gives a trivial bound of KL-regularized objective.

\begin{lemma}\label{lem:upper-trivial}
Let $r: \cA \to [0,1]$ be any reward function and $\pi(a) \propto \piref(a) \exp \big(\eta \cdot r(a)\big)$ be the corresponding optimal policy, then we have $J(\pi^*) - J(\pi) \leq 1$.
\end{lemma}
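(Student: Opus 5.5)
The plan is to compute the gap $J(\pi^*)-J(\pi)$ in closed form using the Gibbs structure of both policies. Here $J$ is the objective \eqref{eq:objective} evaluated with the ground-truth reward $\fgt$, $\pi^*\propto\piref e^{\eta\fgt}$, and $\pi\propto\piref e^{\eta r}$. Write $Z^*\coloneqq\EE_{a\sim\piref}[e^{\eta\fgt(a)}]$ and $Z\coloneqq\EE_{a\sim\piref}[e^{\eta r(a)}]$. Plugging $\log(\pi^*/\piref)=\eta\fgt-\log Z^*$ into \eqref{eq:objective} gives $J(\pi^*)=\eta^{-1}\log Z^*$. Plugging $\log(\pi/\piref)=\eta r-\log Z$ gives $J(\pi)=\EE_{\pi}[\fgt-r]+\eta^{-1}\log Z$.

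Set $g\coloneqq\fgt-r$, which takes values in $[-1,1]$. Using the change of measure $Z^*/Z=\EE_{a\sim\pi}[e^{\eta g(a)}]$, I expect the gap to take the cumulant form
\begin{align*}
J(\pi^*)-J(\pi)=\frac{1}{\eta}\log\EE_{a\sim\pi}\big[e^{\eta g(a)}\big]-\EE_{a\sim\pi}\big[g(a)\big]=\frac{1}{\eta}\kl{\pi}{\pi^*}\;\ge 0 .
\end{align*}
The whole problem then reduces to bounding a centered log-moment-generating function of a bounded variable under $\pi$.

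Bounding the two terms separately uses $\eta^{-1}\log\EE_\pi e^{\eta g}\le\max_a g(a)$ and $-\EE_\pi g\le -\min_a g(a)$. These are crude. Since $\max g-\min g\le 2$, they only give a constant of $2$.

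To get the constant $1$, I would exploit how $\pi$ couples to $g$. Where $g(a)$ is large, i.e.\ $r(a)$ is small relative to $\fgt(a)$, the Gibbs policy $\pi$ downweights $a$ by exactly $e^{-\eta r(a)}$. Concretely, I would rewrite
\begin{align*}
\EE_\pi\big[e^{\eta g}\big]=\frac{\EE_{\piref}\big[e^{\eta\fgt}\big]}{\EE_{\piref}\big[e^{\eta r}\big]},
\end{align*}
which removes the coupling. I would then bound $\eta^{-1}\log Z^*\le\max\fgt\le 1$, together with $\EE_\pi r-\eta^{-1}\log Z\le 0$ and $\EE_\pi\fgt\ge 0$. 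The second inequality follows from Jensen and the variational characterization of $\pi$: the quantity $\EE_\pi r-\eta^{-1}\log Z$ equals $\eta^{-1}\kl{\pi}{\piref}-$ something, and I would check its sign carefully. Together these would give $J(\pi^*)-J(\pi)=\eta^{-1}\log Z^*-\EE_\pi\fgt+\big(\EE_\pi r-\eta^{-1}\log Z\big)$, and each of the three pieces would be bounded so that the total is at most $1$.

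The hard step is the sign of $\EE_\pi r-\eta^{-1}\log Z$. This expression equals $\eta^{-1}\kl{\pi}{\piref}$, which is nonnegative, not nonpositive. So I must instead show $\eta^{-1}\kl{\pi}{\piref}\le\EE_\pi\fgt+1-\eta^{-1}\log Z^*$, which is where $[0,1]$-boundedness and the structure of $\piref$ enter. I would first verify this on two-point instances before committing to it.

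My own scratch checks gave a mixed picture. In symmetric cases, such as uniform $\piref$ with $r$ and $\fgt$ indicator-type rewards, the gap is exactly $\pi(b)\le 1$. For a highly non-uniform $\piref$, however, the same computation seemed to exceed $1$. So the main obstacle is identifying which property (of $\piref$, or of how the lemma is later applied) makes the constant $1$ hold. If a general proof stalls, I would fall back on the $\le 2$ bound, which suffices for the theorem up to constants.
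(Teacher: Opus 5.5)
Your proposal never establishes the constant $1$; it ends by hedging toward $2$. No argument can close that gap, because the lemma as written (arbitrary $r:\cA\to[0,1]$, with $J$ and $\pi^*$ taken under $\fgt$) is false, and your scratch suspicion is correct. Your identity $J(\pi^*)-J(\pi)=\eta^{-1}\log\EE_\pi[e^{\eta g}]-\EE_\pi[g]=\eta^{-1}\kl{\pi}{\pi^*}$ is also the paper's starting point (Lemma~\ref{lem:kl-subopt-decompose}). Now take $\piref$ uniform on $K\ge 3$ arms, with $r(1)=1$, $r(a)=0$ for $a\neq 1$, and $\fgt(1)=0$, $\fgt(a)=1$ for $a\neq 1$. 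With $u=e^{\eta}$,
\begin{align*}
J(\pi^*)-J(\pi)=\frac{1}{\eta}\log\frac{1+(K-1)u}{u+K-1}+\frac{u-(K-1)}{u+K-1}=1+\frac{\log(K-1)}{\eta}-O\big(Ke^{-\eta}\big)
\end{align*}
as $\eta\to\infty$. This exceeds $1$ for large $\eta$; for example, it is about $1.069$ for $K=3$, $\eta=10$. So a uniform $\piref$ does not rescue the constant once $K\ge 3$, contrary to your symmetric check. With two arms and $\piref(1)=c\,e^{-\eta}$, the same pair of rewards gives a gap tending to $2c/(c+1)$ as $\eta\to\infty$. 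Your crude bound $\max g-\min g\le 2$ is therefore sharp.

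The paper obtains $1$ from the pointwise bound $\log(\pi/\pi^*)\le\eta$, and that is the step that breaks. In general one only has
\begin{align*}
\log\frac{\pi(a)}{\pi^*(a)}=\big(\eta r(a)-\log Z\big)-\big(\eta\fgt(a)-\log Z^*\big)\le 2\eta,
\end{align*}
using $Z\ge 1$ and $Z^*\le e^{\eta}$. In the example above, $\log(\pi(1)/\pi^*(1))\approx\eta+\log(K-1)$.

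The property you were looking for is optimism, not anything about $\piref$. Suppose $r\ge\fgt$ pointwise, as holds for $r=\fop_t$ on $\cE(\delta)$. Then $g\in[-1,0]$, and your own two crude inequalities already give $\max g-\min g\le 1$. Equivalently, $Z\ge Z^*$, which makes the ratio bound $\log(\pi(a)/\pi^*(a))\le\eta\big(r(a)-\fgt(a)\big)\le\eta$ valid. So the correct statements are: constant $2$ in general (your fallback, which is tight), and constant $1$ under the added hypothesis $r\ge\fgt$. Either version suffices for the lemma's role as a trivial per-round cap on the regret.
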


\begin{proof}[Proof of Lemma~\ref{lem:upper-trivial}]
By Lemma~\ref{lem:kl-subopt-decompose}, we know that $J(\pi^*) - J(\pi) = \eta^{-1}\KL(\pi \| \pi^*)$. Also, \citet[Lemma~1]{wu2025greedy} shows that $\log (\pi/\pi^*) \leq \eta$. Combining the two bounds finishes the proof.
\end{proof}

Now we are ready to prove the ``fast rate'' upper bound. 
On the high-probability event $\cE_1(\delta)$, we can decompose the regret by
\begin{align}
\notag
    \regret(T) &= \sum_{t=1}^T \big[ J(\pi^*) - J(\pi_t) \big]\\\notag
    &\leq \sum_{t=1}^T \eta \EE_{a_t \sim \pi_t}\big[\big(\fop_t(a_t) - \fgt(a_t)\big)^2\big]\\\label{eq:0000}
    &\leq \eta \sum_{t=1}^T \EE_{a_t \sim \pi_t} \bigg[ \frac{8\log(KT/\delta)}{N_{t-1}(a_t) \vee 1} \bigg],
\end{align}
where the first inequality holds due to Lemma~\ref{lem:taylor-expansion-optimism} and the second inequality is by the definition of event $\cE_1(\delta)$. 
% Now let $\tau$ denote a trajectory $\{(a_t, r_t)\}_{t=1}^T$ and take expectation over all trajectories from the joint distribution of $\{\pi_t\}_{t=1}^T$ and rewards, we know that
% \begin{align*}
%     \EE_{\tau} [\regret(T)] & = \EE_{\tau}\big[\ind \big\{\cE_1(\delta)\big\}\regret(T) + \ind \big\{\cE_1^\complement(\delta)\big\}\regret(T)\big] \\
%     & \leq \EE_{\tau} \Bigg[\sum_{t=1}^T \eta \EE_{a \sim \pi_t} \bigg[ \frac{8\log(KT/\delta)}{N_{t-1}(a)}\bigg] + \delta T \Bigg] \\
%     & = \EE_{\tau} \Bigg[\sum_{t=1}^T \frac{8\eta \log(KT/\delta)}{N_{t-1}(a_t)} + \delta T \Bigg] \\
%     & = \EE_{\tau} \Bigg[8\eta \log(KT/\delta)\sum_{a \in \cA}^T \sum_{i=1}^{N_T(a)}\frac{1}{i} + \delta T \Bigg] \\
%     & \leq \EE_{\tau} \Bigg[16\eta \log(KT/\delta)\sum_{a \in \cA}^T \log N_T(a) + \delta T \Bigg] \\
%     & \leq \EE_{\tau} \Bigg[16\eta K\log(KT/\delta)\log (T/K) + \delta T \Bigg],
% \end{align*}
% where the first inequality holds due to the fact that $\cE_1(\delta)$ holds with probability at least $1 - \delta$ and $\regret(T) \leq T$, the second holds due to the fact that the sum of harmonic series is bounded by nature logarithm and the last inequality holds due to Jensen's inequality. Picking $\delta = 1/T$ gives the $O(\eta K \log^2 T)$ expected regret upper bound.
To obtain a high-probability upper bound for $\regret(T)$, we conduct the following decomposition
\begin{align}
\label{eq:0001}
    \sum_{t=1}^T \EE_{a \sim \pi_t} \bigg[ \frac{1}{N_{t-1}(a)\vee 1} \bigg] &= \underbrace{\sum_{t=1}^T \bigg[ \frac{1}{N_{t-1}(a_t) \vee 1} \bigg]}_{I_1} + \underbrace{\sum_{t=1}^T\bigg( \EE_{a \sim \pi_t} \bigg[ \frac{1}{N_{t-1}(a) \vee 1} \bigg]-  \frac{1}{N_{t-1}(a_t) \vee 1} \bigg)}_{I_2}
\end{align}
For $I_1$, we can bound it as follows:
\begin{align}
\notag
I_1 &= \sum_{t=1}^T \bigg[ \frac{1}{N_{t-1}(a_t) \vee 1} \bigg]\\\notag
&= \sum_{a \in \cA} \bigg(1 +\sum_{i=1}^{N_{T-1}(a)}  \frac{1}{i} \bigg)\\\label{eq:0003}
&\le \sum_{a \in \cA} \bigg(1 +\sum_{i=1}^{N_{T-1}(a)}  2\log \Big(1 + \frac{1}{i}\Big)\bigg),
\end{align}
where the last inequality holds due to $x \le 2\log(1+x)$ when $0 < x \le 1$. To move on, we have
\begin{align}
\notag
    \sum_{a \in \cA} \bigg(1 +\sum_{i=1}^{N_{T-1}(a)}  2\log \Big(1 + \frac{1}{i}\Big)\bigg) &= \sum_{a \in \cA} \bigg(1 +\sum_{i=1}^{N_{T-1}(a)}  2\log \Big(\frac{i+1}{i}\Big)\bigg)\\\notag
    &= \sum_{a \in \cA} \bigg(1 +  2 \log \bigg[\prod_{i=1}^{N_{T-1}(a)}\frac{i+1}{i}\bigg]\bigg)\\\notag
    &= \sum_{a \in \cA} \bigg(1 +  2 \log \big(N_{T-1}(a)+1\big)\bigg)\\\label{eq:0002}
    &\le 4K \log T, 
\end{align}
where the last inequality holds due to $N_{T-1}(a) \le T, \forall a\in\cA$. Thus, we know $I_1 \le 4K \log T$. For $I_2$, let $x_t = \big( \EE_{a \sim \pi_t} \big[ {1}/({N_{t-1}(a) \vee 1}) \big]-  {1}/({N_{t-1}(a_t) \vee 1}) \big)$. Let $\cF_t = \sigma(a_1,r_1,a_2,r_2,\ldots,a_{t},r_{t})$ be the $\sigma$-algebra generated by the actions and rewards up to time $t$. Then, we know $x_t$ is $\cF_t$-measurable and $\EE[x_t | \cF_{t-1}] = 0$.
% we need to first derive a tight upper bound for the gap between $\sum_{t = 1}^T \frac{8\eta \log(KT/\delta)}{N_{t-1}(a_t)}$ and $\sum_{t = 1}^T \EE_{a \sim \pi_t} \bigg[\frac{8\eta \log(KT/\delta)}{N_{t-1}(a)}\bigg]$. 

Let $\cE_i(\tau) = \big\{\sum_{t = 1}^\tau \EE_{a \sim \pi_t} \big[{1}/({N_{t-1}(a) \vee 1})\big] \le 2^i \big\}$. Then, $\ind\big(\cE_i(t)\big)$ is $\cF_{t-1}$-measurable. Thus, $\EE[x_t \ind\big(\cE_i(t)\big) |\cF_{t-1}] = \ind\big(\cE_i(t)\big)\EE[x_t | \cF_{t-1}] = 0$. Moreover, we have
\begin{align*}
    \EE\Big[\big(x_t\ind[\cE_i(t)]\big)^2 | \cF_{t-1}\Big] &= \EE\Big[x_t^2\ind[\cE_i(t)] | \cF_{t-1}\Big]\\
    &=\ind[\cE_i(t)] \cdot \EE\bigg[\Big( \EE_{a \sim \pi_t} \Big[ \frac{1}{N_{t-1}(a) \vee 1} \Big]-  \frac{1}{N_{t-1}(a_t) \vee 1} \Big)^2 \bigg| \cF_{t-1}\bigg]\\
    &= \ind[\cE_i(t)] \cdot \bigg(\EE_{a\sim \pi_t}\bigg[\Big( \frac{1}{N_{t-1}(a_t) \vee 1} \Big)^2 \bigg] - \Big( \EE_{a \sim \pi_t} \Big[ \frac{1}{N_{t-1}(a) \vee 1} \Big]\Big)^2\bigg)\\
    &\le \ind[\cE_i(t)] \cdot \EE_{a\sim \pi_t}\bigg[\Big( \frac{1}{N_{t-1}(a_t) \vee 1} \Big)^2 \bigg]\\
    &\le \ind[\cE_i(t)] \cdot \EE_{a\sim \pi_t}\bigg[ \frac{1}{N_{t-1}(a_t) \vee 1} \bigg],
\end{align*}
where the first inequality holds as we drop the nonpositive term. The second inequality holds due to $1/(N_{t-1}(a_t) \vee 1) \le 1$. Therefore, we have
\begin{align*}
    \sum_{s=1}^t \EE\Big[\big(x_s\ind[\cE_i(s)]\big)^2 | \cF_{s-1}\Big] &\le \sum_{s=1}^t \ind[\cE_i(s)] \cdot \EE_{a\sim \pi_s}\bigg[ \frac{1}{N_{s-1}(a_s) \vee 1} \bigg].
\end{align*}
Let $\tau_i := \max\big\{t \in [T]: \sum_{s=1}^t \EE_{a \sim \pi_s} \big[{1}/({N_{s-1}(a) \vee 1})\big] \le 2^i\big\}$. If $t \le \tau_i$, $\ind(\cE_i(s)) = 1$ for any $ s \le t$; which means
\begin{align}
 t \leq \tau_i \Longrightarrow   \sum_{s=1}^t \ind[\cE_i(s)] \cdot \EE_{a\sim \pi_s}\bigg[ \frac{1}{N_{s-1}(a_s) \vee 1} \bigg] &= \sum_{s=1}^t \EE_{a\sim \pi_s}\bigg[ \frac{1}{N_{s-1}(a_s) \vee 1} \bigg]\le 2^i. \label{eq:divide-and-conquer-case-less}
\end{align}
The inequality holds due to $\ind(\cE_i(t)) = 1$. Otherwise, if $t > \tau_i$, we have
\begin{align}
 t > \tau_i \Longrightarrow  \sum_{s=1}^t \ind[\cE_i(s)] \cdot \EE_{a\sim \pi_s}\bigg[ \frac{1}{N_{s-1}(a_s) \vee 1} \bigg] &= \sum_{s=1}^{\tau_i}\ind[\cE_i(s)] \EE_{a\sim \pi_s}\bigg[ \frac{1}{N_{s-1}(a_s) \vee 1} \bigg] \notag\\
    &\qquad + \sum_{s=\tau_i+1}^{t} \EE_{a\sim \pi_s}\ind[\cE_i(s)]\bigg[ \frac{1}{N_{s-1}(a_s) \vee 1} \bigg] \notag\\
    &= \sum_{s=1}^{\tau_i}\EE_{a\sim \pi_s}\bigg[ \frac{1}{N_{s-1}(a_s) \vee 1} \bigg] \notag\\
    &\le 2^i, \label{eq:divide-and-conquer-case-gt}
\end{align}
where we use $\ind(\cE_i(s)) = 1,\forall s \le \tau_i$ and $\ind(\cE_i(s)) = 0,\forall s > \tau_i$; the last inequality holds due to $\ind(\cE_i(\tau_i)) = 1$. Therefore, we always have
\begin{align*}
    \sum_{s=1}^t \EE\Big[\big(x_s\ind[\cE_i(s)]\big)^2 | \cF_{s-1}\Big] &\le 2^i.
\end{align*}
Using Freedman's inequality (Lemma \ref{lem:freedman-m}), we have for any $i$, with probability at least $1 - \delta/(\lceil \log_2 T\rceil)$, the following inequality holds:
\begin{align*}
    &-\sum_{t = 1}^T \frac{1}{N_{t - 1}(a_t) \vee 1} \cdot \ind(\cE_i(t)) + \sum_{t = 1}^T \EE_{a \sim \pi_t} \bigg[\frac{1}{N_{t - 1}(a) \vee 1} \ind(\cE_i(t))\bigg] \\&\le \sqrt{2 \cdot 2^i \log(\lceil \log T\rceil / \delta)} + 2 / 3 \cdot \log(\lceil \log T\rceil / \delta).
\end{align*}

Taking the union bound, we have with probability at least $1-\delta$, the above inequality holds for any $1 \le i \le \lceil \log_2 T \rceil$. We take $i = \big\lceil \log_2 \sum_{t = 1}^T \EE_{a \sim \pi_t} [{1}/{(N_{t - 1}(a) \vee 1 )}] \big\rceil \le \lceil \log T \rceil $. Then, $\ind(\cE_i(t)) = 1$ holds for any $t \le T$. This gives us 
\begin{align}
\notag
    I_2 &= -\sum_{t = 1}^T \frac{1}{N_{t - 1}(a_t) \vee 1} + \sum_{t = 1}^T \EE_{a \sim \pi_t} \bigg[\frac{1}{N_{t - 1}(a) \vee 1}\bigg]\\\label{eq:0004}
    &
    \le \sqrt{4 \cdot \sum_{t = 1}^T \EE_{a \sim \pi_t} \bigg[\frac{1}{N_{t - 1}(a) \vee 1} \bigg] \cdot \log(\lceil \log T \rceil / \delta)} + 2 / 3 \cdot \log(\lceil \log T \rceil / \delta).
\end{align}
Substituting \eqref{eq:0002} and \eqref{eq:0004} into \eqref{eq:0001}, we have
\begin{align*}
    \sum_{t=1}^T \EE_{a \sim \pi_t} \bigg[ \frac{1}{N_{t-1}(a)\vee 1} \bigg] &\le \sqrt{4 \cdot \sum_{t = 1}^T \EE_{a \sim \pi_t} \bigg[\frac{1}{N_{t - 1}(a) \vee 1} \bigg] \cdot \log(\lceil \log T \rceil / \delta)} \\
    &\qquad + 4K \log T+ 2 / 3 \cdot \log(\lceil \log T \rceil / \delta).
\end{align*}
Using $x \le a \sqrt{x} + b \Rightarrow x \le a^2 + 2b$, we have
\begin{align}
\label{eq:0005}
    \sum_{t=1}^T \EE_{a \sim \pi_t} \bigg[ \frac{1}{N_{t-1}(a)\vee 1} \bigg] \le 6\log(\lceil \log T \rceil / \delta) + 8K \log T.
\end{align}
Substituting \eqref{eq:0005} into \eqref{eq:0000}, we know that with probability at least $1-2\delta$, the following inequality holds:
\begin{align*}
    \regret(T) &\le 8\eta \log(KT/\delta)\Big[6\log(\lceil \log T \rceil / \delta) + 4K \log T\Big]\\
    &\le O\big(\eta K \cdot \log^2(KT/\delta)\big) .
\end{align*}
% Rearanging the inequality above, it holds that \begin{align*}
%     &\sum_{t = 1}^T \EE_{a \sim \pi_t} \bigg[\frac{1}{N_{t - 1}(a) \vee 1}\bigg] - \sqrt{4 \cdot \sum_{t = 1}^T \EE_{a \sim \pi_t} \bigg[\frac{1}{N_{t - 1}(a) \vee 1} \bigg] \cdot \log(\lceil \log T \rceil / \delta)} \\
%     &\qquad \le \sum_{t = 1}^T \frac{1}{N_{t - 1}(a_t) \vee 1} + 2 / 3 \cdot \log(\lceil \log T \rceil / \delta)
%     \\&\qquad \le 4K \log T + 2 / 3 \cdot \log(\lceil \log T \rceil / \delta),
% \end{align*}
% where the last inequality holds using the same calculation as \eqref{eq:0003} and \eqref{eq:0002}.
 
In the next step, we consider the slow rate. Still, the following proof is conditioned on $\cE_1(\delta)$. The regret can be decomposed as follows:
\begin{align}
\notag
    \regret(T) &= \sum_{t=1}^T \Bigg[ \EE_{a \sim \pi^*} \bigg[r(a) - \eta^{-1} \log \frac{\pi^*(a)}{\piref(a)} \bigg] - \EE_{a \sim \pi_t} \bigg[r(a) - \eta^{-1} \log \frac{\pi_t(a)}{\piref(a)} \bigg] \Bigg] \\\notag
    & \leq \sum_{t=1}^T \Bigg[ \EE_{a \sim \pi^*} \bigg[\fop_t(a) - \eta^{-1} \log \frac{\pi^*(a)}{\piref(a)} \bigg] - \EE_{a \sim \pi_t} \bigg[r(a) - \eta^{-1} \log \frac{\pi_t(a)}{\piref(a)} \bigg] \Bigg] \\\notag
    & \leq \sum_{t=1}^T \Bigg[ \EE_{a \sim \pi_t} \bigg[\fop_t(a) - \eta^{-1} \log \frac{\pi^*(a)}{\piref(a)} \bigg] - \EE_{a \sim \pi_t} \bigg[r(a) - \eta^{-1} \log \frac{\pi_t(a)}{\piref(a)} \bigg] \Bigg] \\\notag
    & = \sum_{t=1}^T \EE_{a \sim \pi_t} [\fop_t(a) - \fgt(a)] \\\label{eq:0006}
    & \leq \sum_{t=1}^T \EE_{a \sim \pi_t} \Bigg[ 2\sqrt{\frac{2\log(TK/\delta)}{N_{t-1}(a) \vee 1}}\Bigg],
\end{align}
where the first inequality holds due to $\fop_t$ is optimistic on event $\cE_1(\delta)$, the second inequality holds due to $\pi_t$ is optimal under $\fop_t$ and the last inequality holds on event $\cE_1(\delta)$. We have 
\begin{align}
\label{eq:0007}
    \sum_{t=1}^T \EE_{a \sim \pi_t} \bigg[ \frac{1}{\sqrt{N_{t-1}(a)\vee 1}} \bigg] & = \underbrace{\sum_{t=1}^T \bigg[ \frac{1}{\sqrt{N_{t-1}(a_t) \vee 1}} \bigg]}_{J_1} \notag \\
    & \quad + \underbrace{\sum_{t=1}^T\bigg( \EE_{a \sim \pi_t} \bigg[ \frac{1}{\sqrt{N_{t-1}(a) \vee 1}} \bigg]-  \frac{1}{\sqrt{N_{t-1}(a_t) \vee 1}} \bigg)}_{J_2}.
\end{align}
For $J_1$, we have
\begin{align}
\notag
    \sum_{t=1}^T \bigg[ \frac{1}{\sqrt{N_{t-1}(a_t) \vee 1}} \bigg] &= \sum_{a \in \cA} \bigg[ 1 + \sum_{i=1}^{N_{T-1}(a)} \frac{1}{\sqrt{i}}\bigg]\\\notag
    &\le \sum_{a\in \cA} \bigg[ 2 + \int_{u=1}^{N_{T-1}(a)} \frac{1}{\sqrt{u}} du\bigg]\\\notag
    &= \sum_{a \in \cA} \bigg[\frac{3}{2} + \frac{\sqrt{N_{T-1}(a)}}{2}\bigg]\\\notag
    &\le 2K + \sum_{a\in \cA} \sqrt{N_{T-1}(a)}\\\notag
    &\le 2K + \sqrt{K \sum_{a\in \cA} {N_{T-1}(a)}}\\\label{eq:0008}
    &\le 2K + \sqrt{KT},
\end{align}
where the first inequality holds due to $1/\sqrt{i} \le \int_{i-1}^{i} (1/\sqrt{u}) du$. The second inequality is trivial. The third inequality holds due to the Jensen's inequality. The last inequality holds due to $\sum_{a\in \cA} N_{T-1}(a) = T-1$. For $J_2$, we apply Lemma \ref{lem:azuma-hoeffding}. Then, with probability at least $1-\delta$, we have
\begin{align}
\notag
    J_2 &= \sum_{t=1}^T\bigg( \EE_{a \sim \pi_t} \bigg[ \frac{1}{\sqrt{N_{t-1}(a) \vee 1}} \bigg]-  \frac{1}{\sqrt{N_{t-1}(a_t) \vee 1}} \bigg)\\\label{eq:0009}
    &\le 2\sqrt{2T\log(1/\delta)}.
\end{align}
Substituting \eqref{eq:0008} and \eqref{eq:0009} into \eqref{eq:0007}, we have
\begin{align*}
    \sum_{t=1}^T \EE_{a \sim \pi_t} \bigg[ \frac{1}{\sqrt{N_{t-1}(a)\vee 1}} \bigg] \le 2K + \sqrt{KT} + 2\sqrt{2T\log(1/\delta)}.
\end{align*}
Combining this with \eqref{eq:0006}, we have with probability at least $1-2\delta$,
\begin{align*}
    \regret(T) &\le 2\sqrt{2 \log(TK/\delta)} \big[2K + \sqrt{KT} + 2\sqrt{2T\log(1/\delta)}\big]\\
    &\le \tilde O(K +\sqrt{KT}).
\end{align*}

\end{proof}

\section{Missing Proof in Section~\ref{sec:lower-bound}}

\subsection{Proof of Theorem~\ref{thm:lowerbound-slow}}

\begin{proof}[Proof of Theorem~\ref{thm:lowerbound-slow}]

The construction of instances follows \citealt[Chapter~15]{lattimore2020bandit}. We fix $\eta$, $K \geq 9$, $T$, let $\cA = [K]$ and select $\piref = \unif(\cA)$. Given any reward function $r: \cA \to [0,1]$, we also use $r$ to denote the corresponding bandit instance $([K], r, \eta, \piref, T)$ when there is no ambiguity. Now we take $r_1: \cA \to [0,1]$ and $r_1(i) = \delta \ind\{i=1\}$, where $\delta > 0$ is some parameter to be figured out later. Given fixed algorithm \textsf{Alg}, we use $\PP_1$ and $\EE_1$ to denote the trajectory distribution jointly given by $r_1$ and \textsf{Alg}. Recall that $N_t(j)$ is the count of times the $j$-th arm has been pulled up to step $t$. Let 
\begin{align*}
    i_1 = \argmin_{j > 1} \EE_1 [N_T(j)].
\end{align*}
We assume $i_1=2$ without loss of generality. By the pigeonhole principle, we know that $\EE_1[N_T(2)] \leq T/(K-1)$. Now we consider the second instance given by $r_2: \cA \to [0,1]$, such that $r_2(2) = 2 \delta$ and $r_2(j) = r_1(j)$ for all $j \neq 2$. We now compute $\pi_1^*$ and $\pi_2^*$, which are the optimal policies under $r_1$ and $r_2$. Direct computation gives
\begin{align*}
    \pi_1^* (1) = \frac{\exp (\eta\delta)}{\exp(\eta\delta) + K-1}, \text{ and } \pi_1^*(i) = \frac{1}{\exp(\eta\delta) + K-1} \text{ for all } i > 1,
\end{align*}
and
\begin{align*}
    \pi_2^* (1) = \frac{\exp (\eta\delta)}{\exp(\eta\delta) + \exp(2\eta\delta) + K-2}, \quad  \pi_2^* (2) = \frac{\exp (2\eta\delta)}{\exp(\eta\delta) + \exp(2\eta\delta) + K-2},
\end{align*}
and 
\begin{align*}
    \pi_2^* (i) = \frac{1}{\exp(\eta\delta) + \exp(2\eta\delta) + K-2}, \ \forall \ i > 2.
\end{align*}
For any policy $\pi \in \Delta(\cA)$, we consider the suboptimality gap $\subopt_{r_1}(\pi, \pi^*_1) + \subopt_{r_2}(\pi, \pi^*_2)$. For simplicity, we use $\subopt_1(\pi)$ to denote $\subopt_{r_1}(\pi, \pi_1)$ and $\subopt_2(\pi)$ for $\subopt_{r_2}(\pi, \pi_i)$, correspondingly. By Lemma~\ref{lem:kl-subopt-decompose},
\begin{align}
    \subopt_1(\pi) + \subopt_2(\pi) = \eta^{-1} \big[\KL(\pi \| \pi_1^*) + \KL(\pi\| \pi_2^*) \big]. \label{eq:slow-lb:separation-expression}
\end{align}
It is known that the unique minimizer of \cref{eq:slow-lb:separation-expression} is $\pihat(a) \propto \sqrt{\pi_1^*(a)\pi^*_2(a)}$~\citep[(B.9)]{zhao2025sharpanalysisofflinepolicy}, which gives
\begin{align*}
    & \pihat(1) = \pihat(2) \propto {\exp(\eta\delta)}, \quad\text{and}\quad \pihat(i) \propto {1}, \quad \forall \ i>2.
\end{align*}
Therefore, we know that
\begin{align*}
    \eta \big(\subopt_1(\pihat) + \subopt_2(\pihat)\big) =  \log \frac{(\exp(\eta\delta) + K-1)(\exp(\eta\delta) + \exp(2\eta\delta) + K-2)}{(2\exp(\eta\delta) + K-2)^2}.
\end{align*}
Now we select $\delta = \sqrt{2K/T}$. Then by the fact that $T \leq \eta^2 K/\log^2 K$, we have $\eta \delta \geq 2\log K$, and consequently $e^{\eta \delta} \geq  K$, which gives that
\begin{align*}
    \subopt_1(\pihat) + \subopt_2(\pihat) & \geq \eta^{-1} \log \frac{(\exp(\eta\delta) + K-1)(\exp(\eta\delta) + \exp(2\eta\delta) + K-2)}{(2\exp(\eta\delta) + K-2)^2} \\
    & \geq  \eta^{-1} \log \frac{\exp(2\eta\delta)(1 + \exp(\eta\delta))}{9\exp(2\eta\delta)} \\
    & \geq \eta^{-1} (\eta \delta - \log 9) \\
    & \geq \delta /2,
\end{align*}
where the second inequality holds due to $9 \leq K \leq e^{\eta\delta}$ and the last inequality holds due to $\log 9 \leq \log K \leq \eta\delta/2$. Now, applying Lemma~\ref{lem:two-point}, we obtain that
\begin{align}
    \inf_{\textsf{Alg}} \sup_{r \in \cR} \EE_{\cD \sim \PP_{r, \textsf{Alg}}}\big[ \regret(T) \big] \geq \frac{T\delta}{8}\cdot \exp\Big(-\kl{\PP_1}{\PP_2} \Big). \label{eq:apply-le-cam}
\end{align}
where we recall that $\PP_{r, \textsf{Alg}}$ is the trajectory distribution of $\textsf{Alg}$ interacting with instance $r$, and $\PP_{\ell} \coloneqq \PP_{r_\ell, \textsf{Alg}}$.
By the divergence decomposition lemma~\citep[Lemma~15.1]{lattimore2020bandit},
\begin{align*}
    \kl{\PP_1}{\PP_2} = \sum_{k=1}^K \EE_1[N_T(k)]\KL(r_1(k), r_2(k)) = \EE_1[N_T(2)]\KL(0, 2\delta) \leq \frac{2T\delta^2}{K-1},
\end{align*}
where the inequality holds due to $\EE_1[N_T(2)] \leq T/(K-1)$ and \Cref{fact:kl-gaussian}. Recall that $\delta = \sqrt{2K/T}$, we know that $\kl{\PP_1}{\PP_2} \leq 2K/(K-1) \leq 4.5$. Substituting them into \cref{eq:apply-le-cam}, we obtain 
\begin{align*}
    \inf_{\textsf{Alg}} \sup_{r \in \cR} \EE_{r} \regret(T) = \Omega(\sqrt{KT}),
\end{align*}
where $\EE_{r}$ denotes the expectation with respect to the trajectory distribution induced by $\textsf{Alg}$ interacting with instance $r$.
\end{proof}

\subsection{Proof of \Cref{thm:lowerbound-fast}}

\begin{proof}[Proof of Theorem~\ref{thm:lowerbound-fast}]
We consider the following instance class. Given $K$, $\eta$ and $\piref = \unif(\cA)$ and fix some algorithm \textsf{Alg}, we consider $\cA = [2K]$ and consider a class of reward functions parameterized by some $(\xb,\bmu)$, where $\xb \in \RR^K$ and $\bmu \in \cV = \{\pm1\}^K$, such that the mean reward $r_{\xb,\bmu}(i) = 1/2 + \xb_i + \bmu_i\delta$ for all $i \leq K$ and $r_{\xb,\bmu}(i) = 1/2 + \alpha$ for all $K < i \leq 2K$. Here $\alpha \geq 2\delta > 0$ are parameters to be assigned later \emph{subject to $\nicefrac{\alpha}{2\delta} \in \NN^*$}. Let the reward noises follow $\iid$ standard Gaussian, which satisfy our 1-sub-Gaussian assumption on $\{\varepsilon_t\}_{t\geq1}$.
Given $\xb \in \RR^K$ and $\bmu \in \cV$, we use $(\xb, \bmu)$ to denote the bandit instance $(\cA, r_{\xb, \bmu}, \eta, \piref, T)$.

\paragraph{Step 1.}
For now, we fix the first reward parameter $\xb$ \emph{under the premise that} $\norm{\xb}_{\infty} \leq \alpha + \delta$. Let $\bmu, \blambda \in \cV$ and consider two reward instances $(\xb, \bmu)$ and $(\xb,\blambda)$. From now, we omit the $\xb$ in the subscription and denote $(\xb,\bmu)$ by $\bmu$ to avoid notation clutter. Our first step is to prove that when $\eta\delta$ is small enough, for any resulted policy $\pi$, we have $\subopt_{\bmu}(\pi) + \subopt_{\blambda}(\pi) \gtrsim \eta\delta^2 d_H(\bmu, \blambda)/K$ for all $\norm{\xb}_{\infty} \leq \alpha + \delta$. 

We consider two instances, $\bmu_1$ and $\bmu_2$, correspondingly, such that $d_H(\bmu_1, \bmu_2) = m$, and denote the corresponding rewards by $r_1$ and $r_2$. Without loss of generality, we assume that $r_1$ and $r_2$ are given by
\begin{align*}
    & r_1(i) = 1/2 + \xb_i + \delta, r_2(i) = 1/2 + \xb_i - \delta, \ \forall i \in [1,l]; \\
    & r_1(i) = 1/2 + \xb_i - \delta, r_2(i) = 1/2 + \xb_i +  \delta, \ \forall i \in [l+1,m]; \\
    & r_1(i) = r_2(i) = 1/2 + r^*(i), r^*(i) \in \{x_i \pm \delta\} , \ \forall i \in [m+1, K]; \\
    & r_1(i) = r_2(i) = 1/2 + \alpha , \ \forall i \in [K+1, 2K],
\end{align*}
where $0 \leq l \leq m$ and $m \leq K$ are some integers. Let $\pi^*_1$ and $\pi^*_2$ be the corresponding optimal policy under rewards $r_1$ and $r_2$. For simplicity, we use $\subopt_1(\pi)$ to denote $\subopt_{r_1}(\pi, \pi_1)$ and $\subopt_2(\pi)$ for $\subopt_{r_2}(\pi, \pi_i)$, correspondingly. By Lemma~\ref{lem:kl-subopt-decompose}, we know that
\begin{align*}
    \subopt_1(\pi) + \subopt_2(\pi) = \eta^{-1} \kl{\pi}{\pi^*_1} + \eta^{-1} \kl{\pi}{\pi^*_2}.
\end{align*}
Let $\pihat$ be the minimizer of the above equation, we know that $\pihat(i) \propto \sqrt{\pi^*_1(i)\pi^*_2(i)}$ and this gives
\begin{align*}
    & \subopt_1(\hat \pi) + \subopt_2(\hat \pi) \\
    & \quad = 2\eta^{-1} \log \frac{\sqrt{\sum_{i=1}^{2K} \exp(\eta r_1(i))}\sqrt{\sum_{j=1}^{2K} \exp(\eta r_2(j))}}{\sum_{k=1}^{2K} \exp\Big(\eta \big(r_1(k) + r_2(k)\big)/2\Big)} \\
    & \quad = \eta^{-1}\Bigg[\underbrace{\log \frac{\sum_{i=1}^{2K} \exp(\eta r_1(i))}{\sum_{k=1}^{2K} \exp\Big(\eta \big(r_1(k) + r_2(k)\big)/2\Big)}}_{X_1} + \underbrace{\log \frac{\sum_{i=1}^{2K} \exp(\eta r_2(i))}{\sum_{k=1}^{2K} \exp\Big(\eta \big(r_1(k) + r_2(k)\big)/2\Big)}}_{X_2}\Bigg].
\end{align*}
The first term $X_1$ can be computed as follows
\begin{align*}
    X_1 & = \log  \frac{\displaystyle \sum_{j=1}^l \exp(\eta \xb_j + \eta \delta) + \sum_{j=l+1}^m \exp( \eta \xb_j - \eta \delta) + \sum_{j=m+1}^K \exp\big(\eta \xb_j + \eta r^*(j)\big) + \sum_{j=K+1}^{2K} \exp(\eta\alpha)}{\displaystyle \sum_{j=1}^m \exp(\eta \xb_j) + \underbrace{\sum_{j=m+1}^K \exp\big(\eta \xb_j + \eta r^*(j)\big) + \sum_{j=K+1}^{2K} \exp(\eta\alpha)}_{M}} \\
    & = \log  \frac{\sum_{j=1}^l \exp(\eta \xb_j + \eta \delta) + \sum_{j=l+1}^m \exp( \eta \xb_j - \eta \delta) + M}{\sum_{j=1}^m \exp(\eta \xb_j) + M}.
\end{align*}
Similarly, we know that 
\begin{align*}
    X_2 = \log  \frac{\sum_{j=1}^l \exp(\eta \xb_j - \eta \delta) + \sum_{j=l+1}^m \exp( \eta \xb_j + \eta \delta) + M}{\sum_{j=1}^m \exp(\eta \xb_j) + M}.
\end{align*}
Now combining these two terms, we obtain that 
\begin{align}
    X_1 + X_2 & =\log  \frac{\sum_{j=1}^l \exp(\eta \xb_j + \eta \delta) + \sum_{j=l+1}^m \exp( \eta \xb_j - \eta \delta) + M}{\sum_{j=1}^m \exp(\eta \xb_j) + M} \notag \\
    & \quad + \log  \frac{\sum_{j=1}^l \exp(\eta \xb_j - \eta \delta) + \sum_{j=l+1}^m \exp( \eta \xb_j + \eta \delta) + M}{\sum_{j=1}^m \exp(\eta \xb_j) + M}. \label{eq:subopt-jensen}
\end{align}
Notice that
\begin{align*}
    & \sum_{j=1}^l \exp(\eta \xb_j + \eta \delta) + \sum_{j=l+1}^m \exp( \eta \xb_j - \eta \delta) + \sum_{j=1}^l \exp(\eta \xb_j - \eta \delta) + \sum_{j=l+1}^m \exp( \eta \xb_j + \eta \delta) \\
    & \quad = \sum_{j=1}^m \big(\exp(\eta \xb_j - \eta \delta) + \exp(\eta \xb_j + \eta \delta)\big),
\end{align*}
where the RHS is independent to $l$. Therefore, by the concavity of $x \mapsto \log x$, \Cref{eq:subopt-jensen} is minimized when the two terms differ the most, i.e., $l = 0$ or $l = m$. We thus obtain
\begin{align*}
    X_1 + X_2 & \geq \log  \frac{\sum_{j=1}^m \exp(\eta \xb_j + \eta \delta) + M}{\sum_{j=1}^m \exp(\eta \xb_j) + M} + \log  \frac{\sum_{j=1}^m \exp(\eta \xb_j - \eta \delta) + M}{\sum_{j=1}^m \exp(\eta \xb_j) + M} \\
    & = \log \frac{\big(\sum_{j=1}^m \exp(\eta \xb_j)\big)^2 + M^2 + M\sum_{j=1}^m \exp(\eta \xb_j) \big(\exp(\eta\delta) + \exp(-\eta\delta)\big)}{\big(\sum_{j=1}^m \exp(\eta \xb_j) + M\big)^2} \\
    & = \log \Bigg(1 + \frac{2M}{\big(\sum_{j=1}^m \exp(\eta \xb_j) + M\big)^2}\sum_{j=1}^m \bigg(\exp(\eta \xb_j) \bigg(\frac{\exp(\eta \delta) + \exp(- \eta \delta)}{2} - 1\bigg)\bigg) \Bigg).
\end{align*}
Now we come to bound the term $M$, which is straightforward since we have $-\alpha \leq  x_j \pm \delta \leq \alpha$.
\begin{align*}
    m \leq K\exp(\eta\alpha) \leq M = \sum_{j=m+1}^K \exp\big(\eta r^*(j)\big) + K \exp(\eta\alpha) \leq 2K\exp(\eta\alpha).
\end{align*}
Therefore, we know that
\begin{align*}
    \frac{2M}{\big(\sum_{j=1}^m \exp(\eta \xb_j) + M\big)^2}\sum_{j=1}^m \exp(\eta \xb_j) \geq \frac{2mM\exp(-\eta\alpha)}{9K^2\exp(2\eta\alpha)} \geq \frac{m}{5K\exp(2\eta\alpha)}.
\end{align*}
This enables us to bound the suboptimality gap as follows 
\begin{align}
    &\subopt_1(\hat \pi) + \subopt_2(\hat \pi) \notag\\
    & \quad  \geq \eta^{-1}\log \Bigg(1 + \frac{2M}{\big(\sum_{j=1}^m \exp(\eta \xb_j) + M\big)^2}\sum_{j=1}^m \bigg(\exp(\eta \xb_j) \bigg(\frac{\exp(\eta \delta) + \exp(- \eta \delta)}{2} - 1\bigg)\bigg) \Bigg) \notag\\
    & \quad \geq \eta^{-1} \log \Bigg(1 + \frac{m}{5K\exp(2\eta\alpha)} \cdot\bigg(\frac{\exp(\eta \delta) + \exp(- \eta \delta)}{2} - 1\bigg)\Bigg) \notag\\
    & \quad \geq \eta^{-1} \log \bigg(1 + \frac{m}{5K\exp(2\eta\alpha)}\eta^2\delta^2\bigg), \label{eq:fastrate:lb:step1:penultimate}
\end{align}
where the last inequality holds due to $\forall x \in \RR, (\mathrm{e}^x + \mathrm{e}^{-x})/2 - 1 \geq x^2/2$. By $\alpha \geq 2\delta$ and $ \max_{x\geq 0} x^2 - 5e^{4x} \leq 0$, we know that $m \eta^2\delta^2 \leq 5K\exp(2\eta\alpha)$. Since $\forall x \in [0, 1], \log(1+x) \geq x/2$, we further have
\begin{align}
   \mathrm{\Cref{eq:fastrate:lb:step1:penultimate}} \geq \frac{m}{10K\exp(2\eta\alpha)}\eta\delta^2, \label{eq:cond-of-separation}
\end{align}
which finishes our first step.

\paragraph{Step 2.}
Let us first fix a time step $t \geq \eta^2 K$, and set $\alpha = 2\eta^{-1}\log 2$, which implies $\alpha\sqrt{t / K} \geq 1$, for all $t \geq \eta^2 K$, $\exists \delta_t \in [0.5\sqrt{K/t}, \sqrt{K/t}]$ such that $\nicefrac{\alpha}{2\delta_t} \in \NN^*$. Fixing such pair of $(t, \delta_t)$ and setting $\delta = \delta_t$ in \cref{eq:cond-of-separation} yields that, for any policy $\pi$ and $\xb \in [-\alpha + \delta_t, \alpha - \delta_t]^K$,
\begin{align*}
    \EE_{\bmu \sim \unif(\cV)} \EE_{\bmu, t} \big[\subopt_{(\xb,\bmu),t}(\pi)\big] & \geq \frac{\eta\delta_t^2}{10^3K} \sum_{j=1}^K \frac{1}{2|\cV|} \sum_{\bmu \sim_j \blambda} \exp \big(-\KL(\PP_{\bmu,t}\|\PP_{\blambda,t})\big) \\
    & =  \frac{\eta\delta_t^2}{2^{11}|\cV|K}\sum_{d_H(\bmu, \blambda)=1} \exp \big(-\KL(\PP_{\bmu,t}\|\PP_{\blambda,t})\big) \\
    & \geq \frac{\eta\delta_t^2}{2^{10}} \exp \Bigg( - \frac{1}{2|\cV|K} \sum_{d_H(\bmu, \blambda)=1}\KL(\PP_{\bmu,t}\|\PP_{\blambda,t}) \Bigg),
\end{align*}
where the first inequality is by plugging \cref{eq:cond-of-separation} into \Cref{lem:assouad}, and the last inequality holds due to Jensen's inequality.\footnote{The notation $\EE_{\bmu, t}[\cdot]$ is with respect to the trajectory distribution of the interaction between $\pi$ and the instance ${\bmu}$ up to time step $t$.} Then for any fixed $\bmu$, the standard divergence decomposition lemma~\citep[Lemma~15.1]{lattimore2020bandit} gives
\begin{align*}
    \sum_{\blambda: d_H(\bmu, \blambda)=1}\KL(\PP_{\bmu,t}\|\PP_{\blambda,t}) = \sum_{k=1}^K \EE_{\bmu, t}[N_t(k)] \KL(+\delta_t, -\delta_t) = 2t\delta_t^2,
\end{align*}
where we recall that $\kl{+ \delta_t}{-\delta_t} = \kl{1/2 + \xb_j + \delta_t}{1/2 + \xb_j -\delta_t} = 2\delta_t^2$ denotes the KL divergence from $\cN(1/2 + \xb_j + \delta_t, 1)$ to $\cN(0.5 + x_j - \delta_t, 1)$ and happens to be symmetric~\Cref{fact:kl-gaussian}.
Therefore, we know that 
\begin{align*}
    \EE_{\bmu \sim \unif(\cV)}  \EE_{\bmu, t}\big[\subopt_{(\xb,\bmu),t}(\pi)\big] & \geq \frac{\eta\delta_t^2}{2^{10}} \exp \Bigg( - \frac{1}{2|\cV|K} \sum_{d_H(\bmu, \blambda)=1}\KL(\PP_{\bmu,t}\|\PP_{\blambda,t}) \Bigg) \\
    & \geq \frac{\eta\delta_t^2}{2^{10}} \exp \bigg(- \frac{t\delta_t^2}{K}\bigg). \\
    & \geq \frac{\eta K}{2^{10} t}\exp(-1),
\end{align*}
where the last inequality holds due to $\delta_t \in [\sqrt{K/t}/2, \sqrt{K/t}]$. Recall that $N_t \coloneqq \nicefrac{\alpha}{2\delta_t}$ is a positive integer by design, we define $\cH_t \coloneqq \cup_{j=1}^{N_t} [-\alpha + (4j-3)\delta_t, -\alpha + (4j- 1)\delta_t]$, then we notice that if we take $\xb \sim \unif(\cH_t^K)$ and $\bmu \sim \unif(\cV)$ independently, then $\xb + \bmu\delta_t \sim \unif([-\alpha, \alpha]^K)$. Therefore, the tower property gives
\begin{align}
    \EE_{(r_{[1:K]} - 1/2) \sim \unif([-\alpha, \alpha]^K)}  \EE_{\bmu, t}\big[\subopt_{r,t}(\pi)\big] & = \EE_{\xb \sim \unif(\cH_t)} \EE_{\bmu\sim \unif(\cV)}  \EE_{\bmu, t}\big[\subopt_{(\xb,\bmu),t}(\pi)\big] \geq \frac{\eta K}{2^{12} t}, \label{eq:fast-lb:tower}
\end{align}
where $r_{[1:K]}$ denotes the first $K$ coordinates of the mean reward function $r_{\xb, {\bmu}}$ (See \Cref{fig:unif-prior} for an intuitive illustration of the equality in \cref{eq:fast-lb:tower}).
Invoking the tower property again yields that for any policy $\pi$,
\begin{align}
    \sup_{r} \EE_{(\pi, r)}\regret_r(T) &\geq \EE_{(r_{[1:K]} - 1/2) \sim \unif([-\alpha, \alpha]^K)}[\EE_{(\pi, r)}\regret_r(T)] \notag\\
    &\geq \sum_{t=\ceil{\eta^2K}}^{T} \EE_{(\xb, \bmu) \sim \unif(\cH_t \times \cV)} \EE_{\bmu, t}\big[ \subopt_{(\xb, \bmu), t)}(\pi) \big] \geq 2^{-12} \eta K\sum_{t=\ceil{\eta^2 K}}^T  t^{-1} \label{eq:fast-lb:penultimate},
\end{align}
where \cref{eq:fast-lb:penultimate} follows from \cref{eq:fast-lb:tower}. Finally, $\sum_{t=\ceil{\eta^2K}}^{T} t^{-1} = \Omega \big( \log(\nicefrac{T}{\eta^2K}) \big)$ concludes the proof.
\begin{figure}[h]
    \centering
    \includegraphics[width=0.85\linewidth]{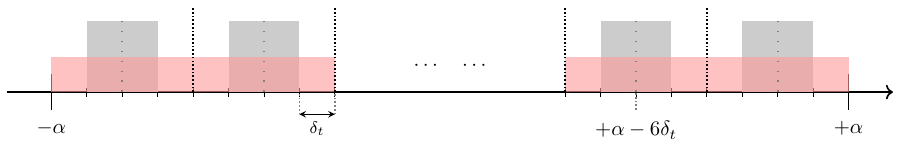}
    \caption{The shared uniform Bayes prior for every $t \geq \eta^2 K$. The plot above takes $1$ out of $K$ axes of $\unif\big([-\alpha, +\alpha]^K\big)$ for illustration. The \textbf{\textcolor{gray}{gray}} boxes denote the density of $\xb$ and hence the \textbf{\textcolor{red!40}{red}} boxes represent the density of $\xb + {\bmu}\delta_t$.}
    \label{fig:unif-prior}
\end{figure}
\end{proof}

\section{Auxiliary Lemmas}

We first recall a standard fact about the KL divergence between two Gaussian distributions with unit variance.
\begin{lemma}\label{fact:kl-gaussian}
    $\forall m,\delta \in \RR, \KL({m}, {m + 2\delta}) \coloneqq \kl{\cN(m, 1)}{\cN(m + 2\delta, 1)} = 2\delta^2$.
\end{lemma}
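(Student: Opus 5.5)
The plan is a direct computation from the Gaussian density. Write $p(x) = (2\pi)^{-1/2}\exp(-(x-m)^2/2)$ for the density of $\cN(m,1)$ and $q(x) = (2\pi)^{-1/2}\exp(-(x-m-2\delta)^2/2)$ for that of $\cN(m+2\delta,1)$. By the definition of KL divergence in the Notation paragraph,
\begin{align*}
\kl{\cN(m,1)}{\cN(m+2\delta,1)} = \EE_{X\sim\cN(m,1)}\big[\log p(X) - \log q(X)\big],
\end{align*}
and the normalizing constants cancel inside the log-ratio.

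Next, simplify the log-ratio. With $Y \coloneqq X - m$,
\begin{align*}
\log\frac{p(X)}{q(X)} = \frac{(Y-2\delta)^2 - Y^2}{2} = -2\delta Y + 2\delta^2 .
\end{align*}
Since $Y\sim\cN(0,1)$ has zero mean, taking expectations gives $\EE[-2\delta Y + 2\delta^2] = 2\delta^2$, which is the claim.

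There is no real obstacle. The only points to check are that the normalizing constants cancel, which holds because both variances equal $1$, and that the linear term has zero mean. The result is symmetric in the sign of $\delta$, and that symmetry is what the paper uses in the proof of \Cref{thm:lowerbound-fast}.
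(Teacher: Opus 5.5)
Your computation is correct: the log-likelihood ratio is $2\delta^2 - 2\delta(X-m)$, whose mean under $\cN(m,1)$ is $2\delta^2$. Because the two Gaussians are mutually absolutely continuous with density ratio $p/q$, this matches the paper's definition of $\KL$, and the paper gives no proof of its own, simply recalling the lemma as a standard fact, so your direct density calculation is the natural verification.
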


\begin{lemma}[Freedman's inequality, \citealt{freedman1975tail}]\label{lem:freedman-m}
    Let $M,v > 0$ be fixed constants. Let $\{x_i\}_{i=1}^{n}$ be a stochastic process, $\{\cF_{i}\}_i$ be a filtration so that for $i \in [n]$,$x_i$ is $\cF_i$-measurable, while almost surely
    \begin{align*}
        \EE[x_i | \cF_{i-1}] = 0, |x_i| \le M, \sum_{i=1}^n \EE\big[x_i^2|\cF_{i-1}\big] \le v.
    \end{align*}
    Then for any $\delta > 0$, with probability at least $1-\delta$, we have
    \begin{align*}
        \sum_{i=1}^n x_i \le \sqrt{2v\log(1/\delta)} + 2/3 M\log(1/\delta).
    \end{align*}
\end{lemma}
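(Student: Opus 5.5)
This is Freedman's inequality, the standard martingale Bernstein bound, so I would prove it by the exponential supermartingale method. Set $S_k = \sum_{i\le k} x_i$ and $V_k = \sum_{i\le k}\EE[x_i^2\mid\cF_{i-1}]$. Fix $\lambda\in(0,3/M)$.

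\textbf{Step 1: one-step moment generating function bound.} I would first show that
\begin{align*}
\EE[e^{\lambda x_i}\mid\cF_{i-1}] \le \exp\big(\phi(\lambda)\,\EE[x_i^2\mid\cF_{i-1}]\big).
\end{align*}
Here one can take either $\phi(\lambda)=(e^{\lambda M}-1-\lambda M)/M^2$ or the Bernstein form $\phi(\lambda)=\frac{\lambda^2/2}{1-\lambda M/3}$.

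The argument goes as follows. The function $g(u)=(e^u-1-u)/u^2$ is nondecreasing. Hence for $x_i\le M$ we have
\begin{align*}
e^{\lambda x_i}\le 1+\lambda x_i+\lambda^2x_i^2\,g(\lambda M).
\end{align*}
Taking conditional expectations and using $\EE[x_i\mid\cF_{i-1}]=0$ and $1+y\le e^y$ gives the claim. The Bernstein form then follows from the series bound $g(u)\le \frac{1}{2(1-u/3)}$ for $0\le u<3$, via $k!\ge 2\cdot 3^{k-2}$.

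\textbf{Step 2: a supermartingale.} Consequently $Z_k=\exp(\lambda S_k-\phi(\lambda)V_k)$ is a nonnegative supermartingale with $Z_0=1$. Since $V_n\le v$ almost surely, we get
\begin{align*}
\EE[e^{\lambda S_n}]\le e^{\phi(\lambda)v}\,\EE[Z_n]\le e^{\phi(\lambda)v}.
\end{align*}
Markov's inequality then gives
\begin{align*}
\PP(S_n\ge \epsilon)\le \exp\big(-\lambda\epsilon+\phi(\lambda)v\big).
\end{align*}

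\textbf{Step 3: optimizing over $\lambda$.} This is the only delicate step, since we need exactly the constants $\sqrt{2v\log(1/\delta)}+\tfrac{2}{3}M\log(1/\delta)$. I would choose $\lambda=\frac{\epsilon}{v+M\epsilon/3}$. Then $1-\lambda M/3=v/(v+M\epsilon/3)$, and the exponent becomes
\begin{align*}
-\frac{\epsilon^2}{2(v+M\epsilon/3)}.
\end{align*}
This is the standard Bernstein tail.

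Setting it equal to $\log(1/\delta)=:L$ leads to the quadratic $\epsilon^2-\tfrac{2}{3}ML\epsilon-2vL=0$. Its positive root is
\begin{align*}
\epsilon=\tfrac13ML+\sqrt{\tfrac19M^2L^2+2vL}.
\end{align*}
By $\sqrt{a+b}\le\sqrt a+\sqrt b$, this is at most $\sqrt{2vL}+\tfrac23ML$. Monotonicity of the tail in $\epsilon$ finishes the proof.

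The main care point is checking that $\lambda M<3$ for this choice. It holds automatically because $\lambda M=\frac{M\epsilon}{v+M\epsilon/3}<3$.
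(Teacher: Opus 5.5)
Your proof is correct. The one-step bound via the monotonicity of $g(u)=(e^u-1-u)/u^2$ holds on all of $\mathbb{R}$, which is needed since $\lambda x_i$ can be negative; this follows from $g(u)=\int_0^1(1-s)e^{su}\,ds$. The nonnegative supermartingale $Z_k=\exp(\lambda S_k-\phi(\lambda)V_k)$ checks out, as does the choice $\lambda=\epsilon/(v+M\epsilon/3)$, which satisfies $0<\lambda M<3$. Finally, $\tfrac13ML+\sqrt{\tfrac19M^2L^2+2vL}\le\sqrt{2vL}+\tfrac23ML$, so you get exactly the stated constants. This implicitly takes $\delta<1$ so that $L>0$; for $\delta\ge 1$ the claim is vacuous.

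The paper gives no proof of this lemma; it simply imports it from Freedman (1975). Your argument is therefore a self-contained derivation rather than a reconstruction of the paper's route.

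What you prove is weaker than Freedman's actual theorem. Freedman bounds $\PP(\exists k\le n:\ S_k\ge\epsilon,\ V_k\le v)$ by a stopping-time argument and needs no almost-sure control of $V_n$. You instead use $V_n\le v$ almost surely to pull $e^{\phi(\lambda)V_n}$ out of the expectation. That weaker form is exactly what the lemma states and exactly what the paper uses. In the peeling step, the predictable indicators $\ind[\cE_i(t)]$ are introduced precisely to manufacture the almost-sure variance bound $2^i$ that your version requires. The union bound over the $\lceil\log_2 T\rceil$ levels then makes the bound data-dependent. With Freedman's original form that truncation would be unnecessary, though the union bound over levels would still be needed.

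You also only use the one-sided bound $x_i\le M$, which is all the argument needs.
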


\begin{lemma}[Azuma-Hoeffding inequality, \citealt{azuma1967weighted,cesa2006prediction}]\label{lem:azuma-hoeffding}
    Let $\{x_i\}_{i=1}^n$ be a martingale difference sequence with respect to a filtration $\{\cG_{i}\}$ satisfying $|x_i| \leq M$ for some constant $M$, $x_i$ is $\cG_{i+1}$-measurable, $\EE[x_i|\cG_i] = 0$. Then for any $0<\delta<1$, with probability at least $1-\delta$, we have 
    \begin{align}
        \sum_{i=1}^n x_i\leq M\sqrt{2n \log (1/\delta)}.\notag
    \end{align} 
\end{lemma}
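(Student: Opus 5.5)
The statement to prove is the Azuma--Hoeffding inequality (\Cref{lem:azuma-hoeffding}). I would use the standard Chernoff (exponential moment) method applied to the martingale. For any $\lambda > 0$, Markov's inequality gives
\begin{align*}
    \PP\Big(\sum_{i=1}^n x_i \ge \epsilon\Big) \le e^{-\lambda\epsilon}\,\EE\Big[\exp\Big(\lambda \sum_{i=1}^n x_i\Big)\Big].
\end{align*}
So the task reduces to bounding the moment generating function of the sum. I would do this one increment at a time.

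The key step is a conditional version of Hoeffding's lemma. Since $x_i$ is $\cG_{i+1}$-measurable, $|x_i|\le M$, and $\EE[x_i|\cG_i]=0$, we get
\begin{align*}
    \EE\big[e^{\lambda x_i}\,\big|\,\cG_i\big] \le e^{\lambda^2 M^2/2}
\end{align*}
almost surely. To prove this, use convexity of $x\mapsto e^{\lambda x}$ on $[-M,M]$, which gives $e^{\lambda x}\le \frac{M-x}{2M}e^{-\lambda M}+\frac{M+x}{2M}e^{\lambda M}$. Taking conditional expectations and using the zero conditional mean yields $\cosh(\lambda M)$. Finally, $\cosh(u)\le e^{u^2/2}$ by comparing Taylor series term by term, since $(2k)!\ge 2^k k!$. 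This step is where the almost-sure bound and the martingale-difference structure enter, and it is the only real obstacle. Everything else is bookkeeping.

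Next comes the tower property. Let $S_k=\sum_{i\le k}x_i$, which is $\cG_{k+1}$-measurable. Then
\begin{align*}
    \EE[e^{\lambda S_n}] = \EE\big[e^{\lambda S_{n-1}}\,\EE[e^{\lambda x_n}|\cG_n]\big] \le e^{\lambda^2M^2/2}\,\EE[e^{\lambda S_{n-1}}].
\end{align*}
The first equality holds because $S_{n-1}$ is $\cG_n$-measurable. Inducting down to $S_0=0$ gives $\EE[e^{\lambda S_n}]\le e^{n\lambda^2M^2/2}$. This requires the filtration to be nested, $\cG_i\subseteq\cG_{i+1}$, which is implicit in the statement.

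Finally, optimize over $\lambda$. Combining the bounds gives $\PP(S_n\ge\epsilon)\le \exp(-\lambda\epsilon+n\lambda^2M^2/2)$. Choosing $\lambda=\epsilon/(nM^2)$ gives $\exp(-\epsilon^2/(2nM^2))$. Setting this equal to $\delta$ gives $\epsilon = M\sqrt{2n\log(1/\delta)}$. Hence, with probability at least $1-\delta$, $\sum_{i=1}^n x_i\le M\sqrt{2n\log(1/\delta)}$, as claimed.
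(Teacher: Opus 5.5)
Your proof is correct: the conditional Hoeffding bound $\EE[e^{\lambda x_i}\mid\cG_i]\le\cosh(\lambda M)\le e^{\lambda^2M^2/2}$, the tower-property induction, and the choice $\lambda=\epsilon/(nM^2)$ all check out and give exactly $M\sqrt{2n\log(1/\delta)}$. The paper gives no proof of this lemma and only cites Azuma and Cesa-Bianchi--Lugosi, so your Chernoff-method argument is essentially the standard proof found in those references.
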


\begin{lemma}[{\citealt[(B.8)]{zhao2025sharpanalysisofflinepolicy}}]\label{lem:kl-subopt-decompose}
Consider any $\eta > 0$, finite action set $\cA$, and reward function $r: \cA \to \RR$. Let $\piref \in \Delta(\cA)$ be any reference policy and $\pi^* \in \Delta(\cA)$ be the optimal policy under $r$, i.e., $\pi^*(a) \propto \piref(a)\exp(\eta r(a))$ for all $a \in \cA$. Let $\pi$ be any policy, then the suboptimal gap between $\pi$ and $\pi^*$ under the KL-regularized objective is given by $\subopt(\pi, \pi^*) = \eta^{-1} \kl{\pi}{\pi^*}$.
\end{lemma}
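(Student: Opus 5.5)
The plan is a direct computation: write the reward $r$ through the Gibbs form of $\pi^*$, so that $J(\pi)$ becomes a constant minus a KL term for every policy $\pi$.

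First, write $\pi^*(a) = \piref(a)\exp(\eta r(a))/Z$, where $Z \coloneqq \sum_{a'\in\cA}\piref(a')\exp(\eta r(a'))$. On the support of $\piref$, taking logarithms inverts this to
\begin{align*}
r(a) = \eta^{-1}\log\frac{\pi^*(a)}{\piref(a)} + \eta^{-1}\log Z.
\end{align*}
Note that $\pi^*$ and $\piref$ have exactly the same support, since $\exp(\eta r(a))>0$.

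Second, take any $\pi$ with $\pi \ll \piref$, equivalently $\pi\ll\pi^*$. Substituting the identity above into the objective \eqref{eq:objective} gives
\begin{align*}
J(\pi) = \EE_{a\sim\pi}\Big[\eta^{-1}\log\frac{\pi^*(a)}{\piref(a)} - \eta^{-1}\log\frac{\pi(a)}{\piref(a)}\Big] + \eta^{-1}\log Z = \eta^{-1}\log Z - \eta^{-1}\kl{\pi}{\pi^*}.
\end{align*}
Here the $\piref$ factors cancel inside the expectation, leaving $-\eta^{-1}\EE_{a\sim\pi}\log(\pi(a)/\pi^*(a))$.

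Third, apply this formula with $\pi=\pi^*$. Since $\kl{\pi^*}{\pi^*}=0$, this gives $J(\pi^*)=\eta^{-1}\log Z$. Subtracting the two expressions yields $\subopt(\pi,\pi^*) = J(\pi^*)-J(\pi) = \eta^{-1}\kl{\pi}{\pi^*}$, as claimed.

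The only point needing care is the degenerate case where $\pi$ is not absolutely continuous with respect to $\piref$. In that case $\kl{\pi}{\piref}=\infty$, so $J(\pi)=-\infty$. Since $\pi^*$ has the same support as $\piref$, we also have $\kl{\pi}{\pi^*}=\infty$, so both sides equal $+\infty$ and the identity still holds. I expect no real obstacle beyond this convention.
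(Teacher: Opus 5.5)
Your proof is correct, and it is the standard log-partition computation: substituting $r = \eta^{-1}\log(\pi^*/\piref) + \eta^{-1}\log Z$ into $J$ gives $J(\pi) = \eta^{-1}\log Z - \eta^{-1}\kl{\pi}{\pi^*}$. The paper does not reprove the lemma but imports this identity by citation from (B.8) of the referenced work; your treatment of the case $\pi \not\ll \piref$, using that $\pi^*$ and $\piref$ share a support, is a correct detail the paper does not address.
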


% \begin{lemma}[Freedman's inequality, \citealt{freedman1975tail}]\label{lem:freedman}
% Let Let $Z_0, Z_1, \cdots, Z_n$ be a martingale sequence of random variables such that for all $i$, there exists a constant $c_i$ such that $|Z_i - Z_{i-1}| < M$, and $\EE[(Z_i - Z_{i-1})^2] \leq \sigma_i^2$, then
% \begin{align*}
%     \PP[Z_n - Z_0 \geq t] \leq \exp\bigg(-\frac{t^2}{2\sum_{i=1}^n \sigma_i^2 + 2Mt/3}  \bigg).
% \end{align*}
% \end{lemma}

The following two lemmas are standard results for proving information-theoretic minimax lower bounds. 
\begin{lemma}[Le Cam's two-point method, \citealt{lecam1973convergence,yu1997assouad}]\label{lem:two-point}
    Let $\cR$ be the set of instances, $\Pi$ be the set of estimators, and $L:\Pi \times \cR \to \RR_+$ be a loss function. For $\tilde{r}, \bar{r} \in \cR$, suppose $\exists c > 0$ such that
\begin{align*}
    \inf_{\pi \in \Pi} L(\pi, \tilde{r}) + L(\pi, \bar{r}) \geq c,
\end{align*}
then 
\begin{align*}
    \inf_{\pi \in \Pi} \sup_{r \in \cR} \EE_{\cD \sim P_r} L\big( \pi(\cD), r \big) \geq \frac{c}{4}\cdot \exp\big(-\kl{P_{\tilde{r}}}{P_{\bar{r}}} \big),
\end{align*}
where the trajectory distribution of $\pi$ interacting with instance $r$ is denoted by $P_r$.
\end{lemma}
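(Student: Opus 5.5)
The plan is to reduce the minimax risk to a Bayes risk under the two-point prior. Then I will lower bound that Bayes risk by the overlap $\int \min(\ud P_{\tilde r}, \ud P_{\bar r})$, and finally lower bound the overlap through the KL divergence using the Bretagnolle--Huber inequality. If $\kl{P_{\tilde r}}{P_{\bar r}} = \infty$ the claim is trivial, so I assume it is finite; in particular $P_{\tilde r} \ll P_{\bar r}$. Let $\mu$ be any dominating measure, for instance $(P_{\tilde r}+P_{\bar r})/2$, and write $p = \ud P_{\tilde r}/\ud \mu$ and $q = \ud P_{\bar r}/\ud\mu$.

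\textbf{Step 1 (Bayes reduction and overlap).} Fix any $\pi \in \Pi$. The supremum over $\cR$ is at least the average over $\{\tilde r, \bar r\}$, so
\begin{align*}
\sup_{r} \EE_{P_r} L(\pi(\cD), r) \ge \frac12 \int \big( L(\pi(\cD),\tilde r)\, p + L(\pi(\cD),\bar r)\, q \big)\, \ud\mu .
\end{align*}
Both $p$ and $q$ are at least $\min(p,q)$, and $L \ge 0$. Moreover, for each fixed $\cD$ the estimator outputs a single element $\pi(\cD)$, so the hypothesis gives $L(\pi(\cD),\tilde r) + L(\pi(\cD),\bar r) \ge c$ pointwise. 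Hence the right-hand side is at least $\frac{c}{2}\int \min(p,q)\,\ud\mu$. The one subtlety is reading the hypothesis pointwise in the output of the estimator: the infimum is over all policies or outputs, so it applies to each realized $\pi(\cD)$.

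\textbf{Step 2 (Bretagnolle--Huber).} I will show $\int \min(p,q) \ge \frac12 \exp(-\kl{P_{\tilde r}}{P_{\bar r}})$.

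First, by Cauchy--Schwarz together with $\min(p,q)\max(p,q) = pq$,
\begin{align*}
\Big(\int \sqrt{pq}\Big)^2 \le \int \min(p,q) \int \max(p,q).
\end{align*}
Since $\int\max(p,q) \le \int (p+q) = 2$, this gives $\int \min(p,q) \ge \frac12 (\int \sqrt{pq})^2$.

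Second, I rewrite the Bhattacharyya coefficient as $\int\sqrt{pq} = \EE_{P_{\tilde r}}[\sqrt{q/p}]$, where the integral is restricted to $\{p>0\}$, which only decreases it. Jensen's inequality for $\log$ then gives
\begin{align*}
2\log \EE_{P_{\tilde r}}\big[\sqrt{q/p}\big] \ge \EE_{P_{\tilde r}}\big[\log(q/p)\big] = -\kl{P_{\tilde r}}{P_{\bar r}}.
\end{align*}
Therefore $(\int\sqrt{pq})^2 \ge \exp(-\kl{P_{\tilde r}}{P_{\bar r}})$.

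\textbf{Conclusion.} Combining the two steps gives $\sup_r \EE L \ge \frac{c}{2}\cdot\frac12 \exp(-\kl{P_{\tilde r}}{P_{\bar r}})$. This bound holds uniformly over $\pi$, so taking the infimum over $\pi$ yields the claim.

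I expect the main care point to be Step 2. The Jensen step must be taken over the support of $p$, and absolute continuity is needed so that the KL is well defined. Beyond that, the argument is routine.
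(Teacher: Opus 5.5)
Your proof is correct. The paper itself does not prove this lemma; it only cites \citet{lecam1973convergence,yu1997assouad}. The natural comparison is therefore with the paper's proof of its Assouad variant (\Cref{lem:assouad}), which follows a test-based template: replace the estimator by the nearest hypothesis $\hat{\bnu}$ via $L(\pi,r_{\bnu})\ge\frac12\big(L(\pi,r_{\bnu})+L(\pi,r_{\hat{\bnu}})\big)$, bound the resulting testing errors below by $1-\mathsf{TV}$ through the variational representation, and then cite Bretagnolle--Huber \citep[Theorem~14.2]{lattimore2020bandit}.

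Your route differs in two ways. First, you never form a test: you integrate the pointwise separation directly against $\min(p,q)$. This matters for the constant. With two hypotheses, the halving step above would only yield $\frac{c}{8}\exp(-\kl{P_{\tilde r}}{P_{\bar r}})$, because it uses only the wrong-side loss. Your direct argument uses both losses on the overlap and so lands exactly on the stated $\frac{c}{4}$. Second, you reprove Bretagnolle--Huber, $\int\min(p,q)\,\ud\mu\ge\frac12\exp(-\KL)$, via the Bhattacharyya coefficient (Cauchy--Schwarz plus Jensen) instead of citing it. This makes the argument self-contained at the cost of a few lines.

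One interpretive point. Since $P_r$ is the trajectory law of the given $\pi$, the right-hand side depends on $\pi$. What you actually establish is therefore the per-$\pi$ inequality $\sup_r\EE_{P_r}L\ge\frac c4\exp(-\kl{P_{\tilde r}}{P_{\bar r}})$, not a bound ``uniform over $\pi$''. This per-$\pi$ form is exactly what the paper uses in \eqref{eq:apply-le-cam}, where the alternative $r_2$ is chosen depending on \textsf{Alg}.
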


We adopt the following variant of Assouad's lemma.\footnote{Similar variants have been shown in, e.g., \url{https://theinformaticists.wordpress.com/2019/09/16/lecture-8-multiple-hypothesis-testing-tree-fano-and-assoaud}}
\begin{lemma}[Assouad's Lemma, \citealt{yu1997assouad}]\label{lem:assouad}
   Let $\cR$ be the set of instances, $\Pi$ be the set of estimators, $\cV \coloneqq \{\pm1\}^S$ for some $S > 0$, such that $r_{\bnu} \in \cR$ for all ${\bnu} \in \cV$. Let $L: \Pi \times \cR \to \RR_+$ be any loss function satisfying the following separation condition 
\begin{align*}
    L(\pi, r_{\bmu}) + L(\pi, r_{\blambda}) \geq c\cdot d_H({\bmu}, {\blambda}), \ \forall {\bmu},{\blambda} \in \cV \text{ and } \pi \in \Pi
\end{align*}
for some $c \geq 0$, then for any estimator $\pi$,
\begin{align*}
     \EE_{{\bnu} \sim \unif(\cV)} \EE_{\cD \sim \PP_{\bnu}} L(\pi(\cD), r_{\bnu}) \geq \frac{c}{8|\cV|} \sum_{j=1}^S \sum_{
    \bmu\sim_j \blambda}\exp\big(-\KL(\PP_{\bmu}\|\PP_{\blambda})\big) ,
\end{align*}
where $\bmu \sim_j \blambda$ denotes that $d_H(\bmu, \blambda)=1$ and $\bmu_j \neq \blambda_j$.
\end{lemma}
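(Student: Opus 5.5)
The statement to prove is the variant of Assouad's lemma, Lemma~\ref{lem:assouad}. My plan is to reduce it to $S$ separate two-point problems, one per coordinate, and to control each of them with the Bretagnolle--Huber form of Le Cam's argument. That form is exactly what produces the $\exp(-\KL)$ factors in the bound.

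\emph{Step 1: pass to coordinatewise losses.} Fix an estimator $\pi$ and write $\hat\pi = \pi(\cD)$. For each $j \in [S]$ and $\bnu \in \cV$, let $\bnu^{(j)}$ be $\bnu$ with its $j$-th coordinate flipped. I would define
\begin{align*}
\ell_j(\hat\pi, \bnu) \coloneqq \frac{1}{2}\Big(L(\hat\pi, r_{\bnu}) - L(\hat\pi, r_{\bnu^{(j)}})\Big).
\end{align*}
The separation condition, applied to $\bnu$ and $\bnu^{(j)}$, gives $L(\hat\pi,r_{\bnu}) + L(\hat\pi,r_{\bnu^{(j)}}) \ge c$. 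I would not try a pointwise splitting of $L$ into coordinate losses, since the condition does not support one. Instead I would work directly with pairs. Group the $2^S$ points into the $2^{S-1}$ pairs $\{\bmu,\blambda\}$ with $\bmu \sim_j \blambda$, for each $j$. Summing over all $j$ and all pairs counts each $\bnu$ exactly $S$ times, so
\begin{align*}
\sum_{\bnu} \EE_{\bnu} L(\hat\pi, r_{\bnu}) = \frac{1}{S}\sum_{j=1}^S \sum_{\{\bmu,\blambda\}:\,\bmu\sim_j\blambda} \Big(\EE_{\bmu} L(\hat\pi,r_{\bmu}) + \EE_{\blambda} L(\hat\pi, r_{\blambda})\Big).
\end{align*}

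\emph{Step 2: the two-point bound for each pair.} For a single pair, write $a = L(\hat\pi,r_{\bmu}) \ge 0$ and $b = L(\hat\pi,r_{\blambda}) \ge 0$, with $a+b \ge c$ pointwise. Let $A = \{a \ge c/2\}$. On $A$ we have $a \ge c/2$, and on $A^c$ we have $b \ge c/2$. Hence
\begin{align*}
\EE_{\bmu}a + \EE_{\blambda}b \ge \frac{c}{2}\big(\PP_{\bmu}(A) + \PP_{\blambda}(A^c)\big) \ge \frac{c}{4}\exp\big(-\KL(\PP_{\bmu}\|\PP_{\blambda})\big),
\end{align*}
where the last step is the Bretagnolle--Huber inequality, the same inequality that underlies Lemma~\ref{lem:two-point}. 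Plugging this into Step 1 gives a bound with prefactor $\frac{c}{4S|\cV|}$.

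\emph{Step 3: recover the missing factor $S$.} This is where I expect the main difficulty. The stated bound has prefactor $\frac{c}{8|\cV|}$, with no $1/S$, and getting it requires using the full strength of $c\cdot d_H$, not just its value on adjacent pairs. I would handle this in the standard Assouad way. Define $\hat\bnu \in \argmin_{\bnu} L(\hat\pi, r_{\bnu})$. By the triangle-type consequence of separation, $L(\hat\pi,r_{\bnu}) \ge \frac{c}{2}\, d_H(\hat\bnu,\bnu) = \frac{c}{2}\sum_j \ind\{\hat\bnu_j \ne \bnu_j\}$. Averaging over $\bnu$ then splits into $S$ coordinate testing problems. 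Within coordinate $j$, pair $\bmu\sim_j\blambda$ and use $\PP_{\bmu}(\hat\bnu_j\neq\bmu_j) + \PP_{\blambda}(\hat\bnu_j\neq\blambda_j) \ge \frac12\exp(-\KL(\PP_{\bmu}\|\PP_{\blambda}))$. Each unordered pair appears twice in $\sum_{\bmu\sim_j\blambda}$, and together with the $1/|\cV|$ averaging the constants combine to $\frac{c}{8|\cV|}$ as claimed.
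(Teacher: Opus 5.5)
Your proposal is correct, and your Step 3 is exactly the paper's argument. You take $\hat{\bnu}\in\argmin_{\bnu}L(\hat\pi,r_{\bnu})$ so that $L(\hat\pi,r_{\bnu})\ge\frac{c}{2}d_H(\bnu,\hat{\bnu})$, split by coordinates, pair $\bmu\sim_j\blambda$, apply Bretagnolle--Huber (valid in either KL direction because $\{\hat{\bnu}_j\neq\bmu_j\}$ and $\{\hat{\bnu}_j\neq\blambda_j\}$ are complementary), and count ordered pairs to reach $\frac{c}{8|\cV|}$. Steps 1--2, including the unused $\ell_j$, are a detour that only gives a bound weaker by a factor $S$ and can be dropped; your remark that separation on adjacent pairs alone cannot yield the $S$-free bound is correct.
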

\begin{proof}[Proof of \Cref{lem:assouad}]

For any pair of policy $\pi$ and ${\bnu} \in \cV$, we pick their corresponding $\hat{\bnu} \in \argmin_{{\bnu}\in \cV} L(\pi, r_{\bnu})$ arbitrarily to obtain 
\begin{align*}
      L(\pi, r_{\bnu}) \geq \frac{L(\pi, r_{\bnu}) + L(\pi, r_{\hat{{\bnu}}})}{2} \geq \frac{c}{2} \sum_{j=1}^S  \bigg( \ind[{\bnu}_j=1, \hat{{\bnu}}_j = -1] + \ind[{\bnu}_j=-1, \hat{{\bnu}}_j = 1]\bigg), \forall {\bnu} \in \cV;
\end{align*}
which in turn implies
\begin{align*}
   \EE_{{\bnu} \sim \unif(\cV)}  L(\pi, r_{\bnu}) \geq \frac{c}{2} \sum_{j=1}^S \frac{1}{|\cV|}\bigg(\sum_{{\bnu}:{\bnu}_j=1}\ind[\hat{{\bnu}}_j = -1] + \sum_{{\bnu}:{\bnu}_j=-1}\ind[\hat{{\bnu}}_j = 1]\bigg).
\end{align*}

Then for any estimator $\pi$,
\begin{align*}
     \EE_{{\bnu} \sim \unif(\cV)} \EE_{\cD \sim \PP_{\bnu}} L(\pi(\cD), r_{\bnu}) &\geq \frac{c}{2} \sum_{j=1}^S \frac{1}{|\cV|}\bigg(\sum_{{\bnu}:{\bnu}_j=1}\PP_{\bnu}[\hat{{\bnu}}_j = -1] + \sum_{{\bnu}:{\bnu}_j=-1}\PP_{\bnu}[\hat{{\bnu}}_j = 1]\bigg) \\
     &= \frac{c}{2}\sum_{j=1}^{S} \frac{1}{2|\cV|} \sum_{\bmu \sim_j \blambda} \big( \PP_{\bmu}(\widehat{\bmu}_j = -1) + \PP_{\blambda}(\widehat{\blambda}_j = +1) \big) \\
     &\geq \frac{c}{4|\cV|} \sum_{j=1}^{S} \sum_{\bmu \sim_j \blambda} 1 - \tv{\PP_{\bmu}}{\PP_{\blambda}} \\
     &\geq \frac{c}{8|\cV|} \sum_{j=1}^S \sum_{
    \bmu\sim_j \blambda} \exp\big(-\KL(\PP_{\bmu}\|\PP_{\blambda})\big),
\end{align*}
where the penultimate inequality follows from the variational representation of $\mathsf{TV}$, and the last inequality is by the Bretagnolle-Huber inequality (See e.g., \citet[Theorem~14.2]{lattimore2020bandit}).
\end{proof}

% \section{Deterministic Policy?}

% Set $\beta = \eta^{-1}$, $\pi^\star_{\beta} \propto \piref \cdot \exp(r / \beta)$.

% Consider one standard hard instance $r(a) = 0.5 + \Delta \cdot \ind_{\{ a = a^\star \}}$.

% For $j \neq a^\star$,
% \begin{align*}
%     J_{\beta}(\pi^\star_{\beta}) - J_{\beta}(\mathsf{Dirac}(j)) &= \beta \kl{\mathsf{Dirac}(j)}{\pi^\star_{\beta}} \\
%     &= \beta \log\big(\exp({\Delta / \beta}) + A -  1 \big) \\
%     &\geq \Delta.
% \end{align*}

% Therefore, as long as the agent only uses deterministic policies,
% \begin{align*}
%     \mathrm{Reg}_{\beta}(T) \geq \mathrm{Reg}_{0}(T),
% \end{align*}
% where the RHS corresponds to the standard regret with temperature $\beta = 0$.

% In other words, we can set $\Delta = \min\{0.25, \beta\}$ to deduce the $A\beta^{-1}\log(T)$ lower bound for deterministic agents from the standard 
% \begin{align*}
%     \sum_{a \neq a^\star} \frac{\log T}{\Delta_{a}}
% \end{align*}
% lower bound in the unregularized setting???

\bibliographystyle{ims}
\bibliography{ref}

@inproceedings{xiong2024iterative,
  title={Iterative preference learning from human feedback: Bridging theory and practice for rlhf under kl-constraint},
  author={Xiong, Wei and Dong, Hanze and Ye, Chenlu and Wang, Ziqi and Zhong, Han and Ji, Heng and Jiang, Nan and Zhang, Tong},
  booktitle={Forty-first International Conference on Machine Learning},
  year={2024}
}

@inproceedings{
zhao2025sharp,
title={Sharp Analysis for {KL}-Regularized Contextual Bandits and {RLHF}},
author={Heyang Zhao and Chenlu Ye and Quanquan Gu and Tong Zhang},
booktitle={The Thirty-ninth Annual Conference on Neural Information Processing Systems},
year={2025}
}

@article{foster2025good,
  title={Is a Good Foundation Necessary for Efficient Reinforcement Learning? The Computational Role of the Base Model in Exploration},
  author={Foster, Dylan J and Mhammedi, Zakaria and Rohatgi, Dhruv},
  journal={arXiv preprint arXiv:2503.07453},
  year={2025}
}

@inproceedings{cai2020provably,
  title={Provably efficient exploration in policy optimization},
  author={Cai, Qi and Yang, Zhuoran and Jin, Chi and Wang, Zhaoran},
  booktitle={International Conference on Machine Learning},
  pages={1283--1294},
  year={2020},
  organization={PMLR}
}

@inproceedings{he2022near,
  title={Near-optimal policy optimization algorithms for learning adversarial linear mixture mdps},
  author={He, Jiafan and Zhou, Dongruo and Gu, Quanquan},
  booktitle={International Conference on Artificial Intelligence and Statistics},
  pages={4259--4280},
  year={2022},
  organization={PMLR}
}

@article{ji2023horizon,
  title={Horizon-free reinforcement learning in adversarial linear mixture MDPs},
  author={Ji, Kaixuan and Zhao, Qingyue and He, Jiafan and Zhang, Weitong and Gu, Quanquan},
  journal={arXiv preprint arXiv:2305.08359},
  year={2023}
}

@article{neu2017unified,
  title={A unified view of entropy-regularized markov decision processes},
  author={Neu, Gergely and Jonsson, Anders and G{\'o}mez, Vicen{\c{c}}},
  journal={arXiv preprint arXiv:1705.07798},
  year={2017}
}

@inproceedings{geist2019theory,
  title={A theory of regularized markov decision processes},
  author={Geist, Matthieu and Scherrer, Bruno and Pietquin, Olivier},
  booktitle={International Conference on Machine Learning},
  pages={2160--2169},
  year={2019},
  organization={PMLR}
}

@article{vieillard2020leverage,
  title={Leverage the average: an analysis of kl regularization in reinforcement learning},
  author={Vieillard, Nino and Kozuno, Tadashi and Scherrer, Bruno and Pietquin, Olivier and Munos, R{\'e}mi and Geist, Matthieu},
  journal={Advances in Neural Information Processing Systems},
  volume={33},
  pages={12163--12174},
  year={2020}
}

@article{kozuno2022kl,
  title={Kl-entropy-regularized rl with a generative model is minimax optimal},
  author={Kozuno, Tadashi and Yang, Wenhao and Vieillard, Nino and Kitamura, Toshinori and Tang, Yunhao and Mei, Jincheng and M{\'e}nard, Pierre and Azar, Mohammad Gheshlaghi and Valko, Michal and Munos, R{\'e}mi and others},
  journal={arXiv preprint arXiv:2205.14211},
  year={2022}
}

@article{schulman2017proximal,
  title={Proximal policy optimization algorithms},
  author={Schulman, John and Wolski, Filip and Dhariwal, Prafulla and Radford, Alec and Klimov, Oleg},
  journal={arXiv preprint arXiv:1707.06347},
  year={2017}
}

@inproceedings{haarnoja2018soft,
  title={Soft actor-critic: Off-policy maximum entropy deep reinforcement learning with a stochastic actor},
  author={Haarnoja, Tuomas and Zhou, Aurick and Abbeel, Pieter and Levine, Sergey},
  booktitle={International conference on machine learning},
  pages={1861--1870},
  year={2018},
  organization={PMLR}
}

@article{rafailov2023direct,
  title={Direct preference optimization: Your language model is secretly a reward model},
  author={Rafailov, Rafael and Sharma, Archit and Mitchell, Eric and Manning, Christopher D and Ermon, Stefano and Finn, Chelsea},
  journal={Advances in Neural Information Processing Systems},
  volume={36},
  year={2023}
}

@article{xie2024exploratory,
  title={Exploratory Preference Optimization: Harnessing Implicit Q*-Approximation for Sample-Efficient RLHF},
  author={Xie, Tengyang and Foster, Dylan J and Krishnamurthy, Akshay and Rosset, Corby and Awadallah, Ahmed and Rakhlin, Alexander},
  journal={arXiv preprint arXiv:2405.21046},
  year={2024}
}

@article{richemond2024offline,
  title={Offline Regularised Reinforcement Learning for Large Language Models Alignment},
  author={Richemond, Pierre Harvey and Tang, Yunhao and Guo, Daniel and Calandriello, Daniele and Azar, Mohammad Gheshlaghi and Rafailov, Rafael and Pires, Bernardo Avila and Tarassov, Eugene and Spangher, Lucas and Ellsworth, Will and others},
  journal={arXiv preprint arXiv:2405.19107},
  year={2024}
}

@article{liu2024enhancing,
  title={Enhancing multi-step reasoning abilities of language models through direct q-function optimization},
  author={Liu, Guanlin and Ji, Kaixuan and Zheng, Renjie and Wu, Zheng and Dun, Chen and Gu, Quanquan and Yan, Lin},
  journal={arXiv preprint arXiv:2410.09302},
  year={2024}
}

@article{williams1992simple,
  title={Simple statistical gradient-following algorithms for connectionist reinforcement learning},
  author={Williams, Ronald J},
  journal={Machine learning},
  volume={8},
  pages={229--256},
  year={1992},
  publisher={Springer}
}

@inproceedings{ziebart2008maximum,
  title={Maximum entropy inverse reinforcement learning.},
  author={Ziebart, Brian D and Maas, Andrew L and Bagnell, J Andrew and Dey, Anind K and others},
  booktitle={Aaai},
  volume={8},
  pages={1433--1438},
  year={2008},
  organization={Chicago, IL, USA}
}

@inproceedings{levine2013guided,
  title={Guided policy search},
  author={Levine, Sergey and Koltun, Vladlen},
  booktitle={International conference on machine learning},
  pages={1--9},
  year={2013},
  organization={PMLR}
}

@inproceedings{ahmed2019understanding,
  title={Understanding the impact of entropy on policy optimization},
  author={Ahmed, Zafarali and Le Roux, Nicolas and Norouzi, Mohammad and Schuurmans, Dale},
  booktitle={International conference on machine learning},
  pages={151--160},
  year={2019},
  organization={PMLR}
}

@article{liu2019regularization,
  title={Regularization matters in policy optimization},
  author={Liu, Zhuang and Li, Xuanlin and Kang, Bingyi and Darrell, Trevor},
  journal={arXiv preprint arXiv:1910.09191},
  year={2019}
}

@article{ouyang2022training,
  title={Training language models to follow instructions with human feedback},
  author={Ouyang, Long and Wu, Jeffrey and Jiang, Xu and Almeida, Diogo and Wainwright, Carroll and Mishkin, Pamela and Zhang, Chong and Agarwal, Sandhini and Slama, Katarina and Ray, Alex and others},
  journal={Advances in neural information processing systems},
  volume={35},
  pages={27730--27744},
  year={2022}
}

@book{lattimore2020bandit,
  title={Bandit algorithms},
  author={Lattimore, Tor and Szepesv{\'a}ri, Csaba},
  year={2020},
  publisher={Cambridge University Press}
}

@book{zhang2023ltbook,
   title={Mathematical Analysis of Machine Learning Algorithms},
   author={Zhang, Tong},
   doi={10.1017/9781009093057},
   publisher={Cambridge University Press},
   place={Cambridge},
   year={2023}
}

@article{lecam1973convergence,
  title={Convergence of estimates under dimensionality restrictions},
  author={Le Cam, Lucien},
  journal={The Annals of Statistics},
  pages={38--53},
  year={1973},
  publisher={JSTOR}
}

@incollection{yu1997assouad,
  title={Assouad, fano, and le cam},
  author={Yu, Bin},
  booktitle={Festschrift for Lucien Le Cam: research papers in probability and statistics},
  pages={423--435},
  year={1997},
  publisher={Springer}
}

@inproceedings{zhao2025logarithmic,
 title={Logarithmic Regret for Online {KL}-Regularized Reinforcement Learning},
 author={Heyang Zhao and Chenlu Ye and Wei Xiong and Quanquan Gu and Tong Zhang},
 booktitle={Forty-second International Conference on Machine Learning},
 year={2025},
}

@article{zhao2025sharpanalysisofflinepolicy,
      title={Towards a Sharp Analysis of Offline Policy Learning for $f$-Divergence-Regularized Contextual Bandits}, 
      author={Qingyue Zhao and Kaixuan Ji and Heyang Zhao and Tong Zhang and Quanquan Gu},
    journal={arXiv preprint arXiv:2502.06051v2},
      year={2025}
}

@article{azuma1967weighted,
  title={Weighted sums of certain dependent random variables},
  author={Azuma, Kazuoki},
  journal={Tohoku Mathematical Journal, Second Series},
  volume={19},
  number={3},
  pages={357--367},
  year={1967},
  publisher={Mathematical Institute, Tohoku University}
}

@article{freedman1975tail,
  title={On tail probabilities for martingales},
  author={Freedman, David A},
  journal={the Annals of Probability},
  pages={100--118},
  year={1975},
  publisher={JSTOR}
}

@inproceedings{wu2025greedy,
 title={Greedy Sampling Is Provably Efficient For {RLHF}},
 author={Di Wu and Chengshuai Shi and Jing Yang and Cong Shen},
 booktitle={The Thirty-ninth Annual Conference on Neural Information Processing Systems},
 year={2025}
}

@book{cesa2006prediction,
  title={Prediction, learning, and games},
  author={Cesa-Bianchi, Nicolo and Lugosi, G{\'a}bor},
  year={2006},
  publisher={Cambridge university press}
}

@article{abbasi2011improved,
  title={Improved algorithms for linear stochastic bandits},
  author={Abbasi-Yadkori, Yasin and P{\'a}l, D{\'a}vid and Szepesv{\'a}ri, Csaba},
  journal={Advances in neural information processing systems},
  volume={24},
  year={2011}
}

@article{russo2013eluder,
  title={Eluder dimension and the sample complexity of optimistic exploration},
  author={Russo, Daniel and Van Roy, Benjamin},
  journal={Advances in Neural Information Processing Systems},
  volume={26},
  year={2013}
}

@inproceedings{zhao2023optimal,
  title={Optimal online generalized linear regression with stochastic noise and its application to heteroscedastic bandits},
  author={Zhao, Heyang and Zhou, Dongruo and He, Jiafan and Gu, Quanquan},
  booktitle={International Conference on Machine Learning},
  pages={42259--42279},
  year={2023},
  organization={PMLR}
}

@article{orabona2019modern,
  title={A modern introduction to online learning},
  author={Orabona, Francesco},
  journal={arXiv preprint arXiv:1912.13213v8},
  year={2019}
}

@article{vovk2001competitive,
  title={Competitive on-line statistics},
  author={Vovk, Volodya},
  journal={International Statistical Review},
  volume={69},
  number={2},
  pages={213--248},
  year={2001},
  publisher={Wiley Online Library}
}

@article{singer2002universal,
  title={Universal linear least squares prediction: Upper and lower bounds},
  author={Singer, Andrew C and Kozat, Suleyman Serdar and Feder, Meir},
  journal={IEEE Transactions on Information Theory},
  volume={48},
  number={8},
  pages={2354--2362},
  year={2002},
  publisher={IEEE}
}

@article{guo2025deepseek,
  title={Deepseek-r1: Incentivizing reasoning capability in llms via reinforcement learning},
  author={Guo, Daya and Yang, Dejian and Zhang, Haowei and Song, Junxiao and Zhang, Ruoyu and Xu, Runxin and Zhu, Qihao and Ma, Shirong and Wang, Peiyi and Bi, Xiao and others},
  journal={arXiv preprint arXiv:2501.12948},
  year={2025}
}

@article{weng2025improved,
  title={Improved Bounds for Private and Robust Alignment},
  author={Weng, Wenqian and He, Yi and Zhou, Xingyu},
  journal={arXiv preprint arXiv:2512.23816},
  year={2025}
}

@article{wu2025offline,
  title={Offline and Online KL-Regularized RLHF under Differential Privacy},
  author={Wu, Yulian and Thareja, Rushil and Vepakomma, Praneeth and Orabona, Francesco},
  journal={arXiv preprint arXiv:2510.13512},
  year={2025}
}

@article{nayak2025achieving,
  title={Achieving Logarithmic Regret in KL-Regularized Zero-Sum Markov Games},
  author={Nayak, Anupam and Yang, Tong and Yagan, Osman and Joshi, Gauri and Chi, Yuejie},
  journal={arXiv preprint arXiv:2510.13060},
  year={2025}
}

@inproceedings{chu2011contextual,
  title={Contextual bandits with linear payoff functions},
  author={Chu, Wei and Li, Lihong and Reyzin, Lev and Schapire, Robert},
  booktitle={Proceedings of the fourteenth international conference on artificial intelligence and statistics},
  pages={208--214},
  year={2011},
  organization={JMLR Workshop and Conference Proceedings}
}

@article{robbins1952some,
  title={Some aspects of the sequential design of experiments},
  author={Robbins, Herbert},
  journal={Bulletin of the American Mathematical Society},
  volume={58},
  number={5},
  pages={527--535},
  year={1952}
}

@article{lai1985asymptotically,
  title={Asymptotically efficient adaptive allocation rules},
  author={Lai, Tze Leung and Robbins, Herbert},
  journal={Advances in applied mathematics},
  volume={6},
  number={1},
  pages={4--22},
  year={1985},
  publisher={Academic Press, Inc. Orlando, FL, USA}
}

@article{agrawal1995sample,
  title={Sample mean based index policies by o (log n) regret for the multi-armed bandit problem},
  author={Agrawal, Rajeev},
  journal={Advances in applied probability},
  volume={27},
  number={4},
  pages={1054--1078},
  year={1995},
  publisher={Cambridge University Press}
}

@article{auer2002finite,
  title={Finite-time analysis of the multiarmed bandit problem},
  author={Auer, Peter and Cesa-Bianchi, Nicolo and Fischer, Paul},
  journal={Machine learning},
  volume={47},
  number={2},
  pages={235--256},
  year={2002},
  publisher={Springer}
}

@inproceedings{audibert2009minimax,
  title={Minimax policies for adversarial and stochastic bandits},
  author={Audibert, Jean-Yves and Bubeck, S{\'e}bastien},
  booktitle={COLT},
  pages={217--226},
  year={2009}
}

@article{lai1987adaptive,
  title={Adaptive treatment allocation and the multi-armed bandit problem},
  author={Lai, Tze Leung},
  journal={The annals of statistics},
  pages={1091--1114},
  year={1987},
  publisher={JSTOR}
}

@article{burnetas1996optimal,
  title={Optimal adaptive policies for sequential allocation problems},
  author={Burnetas, Apostolos N and Katehakis, Michael N},
  journal={Advances in Applied Mathematics},
  volume={17},
  number={2},
  pages={122--142},
  year={1996},
  publisher={Elsevier}
}

@article{auer2002nonstochastic,
  title={The nonstochastic multiarmed bandit problem},
  author={Auer, Peter and Cesa-Bianchi, Nicolo and Freund, Yoav and Schapire, Robert E},
  journal={SIAM journal on computing},
  volume={32},
  number={1},
  pages={48--77},
  year={2002},
  publisher={SIAM}
}

@article{lattimore2018refining,
  title={Refining the confidence level for optimistic bandit strategies},
  author={Lattimore, Tor},
  journal={Journal of Machine Learning Research},
  volume={19},
  number={20},
  pages={1--32},
  year={2018}
}

@inproceedings{degenne2016anytime,
  title={Anytime optimal algorithms in stochastic multi-armed bandits},
  author={Degenne, R{\'e}my and Perchet, Vianney},
  booktitle={International Conference on Machine Learning},
  pages={1587--1595},
  year={2016},
  organization={PMLR}
}

@inproceedings{tiapkin2023fast,
  title={Fast rates for maximum entropy exploration},
  author={Tiapkin, Daniil and Belomestny, Denis and Calandriello, Daniele and Moulines, Eric and Munos, Remi and Naumov, Alexey and Perrault, Pierre and Tang, Yunhao and Valko, Michal and Menard, Pierre},
  booktitle={International Conference on Machine Learning},
  pages={34161--34221},
  year={2023},
  organization={PMLR}
}

@inproceedings{chen2013combinatorial,
  title={Combinatorial multi-armed bandit: General framework and applications},
  author={Chen, Wei and Wang, Yajun and Yuan, Yang},
  booktitle={International conference on machine learning},
  pages={151--159},
  year={2013},
  organization={PMLR}
}

@inproceedings{zhang2024settling,
  title={Settling the sample complexity of online reinforcement learning},
  author={Zhang, Zihan and Chen, Yuxin and Lee, Jason D and Du, Simon S},
  booktitle={The Thirty Seventh Annual Conference on Learning Theory},
  pages={5213--5219},
  year={2024},
  organization={PMLR}
}

@article{zhou2022computationally,
  title={Computationally efficient horizon-free reinforcement learning for linear mixture mdps},
  author={Zhou, Dongruo and Gu, Quanquan},
  journal={Advances in neural information processing systems},
  volume={35},
  pages={36337--36349},
  year={2022}
}

@inproceedings{kleinberg2008multi,
  title={Multi-armed bandits in metric spaces},
  author={Kleinberg, Robert and Slivkins, Aleksandrs and Upfal, Eli},
  booktitle={Proceedings of the fortieth annual ACM symposium on Theory of computing},
  pages={681--690},
  year={2008}
}

\end{document}